\documentclass{article}

 \usepackage[preprint]{neurips_2026}

\usepackage[utf8]{inputenc} 
\usepackage[T1]{fontenc}    
\usepackage{hyperref}       
\usepackage{url}            
\usepackage{booktabs}       
\usepackage{amsfonts}       
\usepackage{nicefrac}       
\usepackage{xcolor}         

\usepackage{optidef}
\usepackage{mathrsfs}
\usepackage{float}
\usepackage[dvipsnames]{xcolor} 
\usepackage{graphicx}
\usepackage{subcaption}
\usepackage{wrapfig}
\usepackage{enumitem}
\usepackage{amsmath}
\usepackage{amsthm}
\usepackage{amssymb}
\usepackage{algorithm,algorithmic}
\usepackage{comment}
\usepackage{multirow}
\usepackage{mathtools}
\usepackage{dsfont}
\usepackage{tikz}
\usetikzlibrary{arrows.meta,calc,positioning,decorations.pathreplacing}
\usepackage{titletoc}

\usepackage{amsmath,amsfonts,bm}

\def\eqref#1{equation~\ref{#1}}

\def\1{\bm{1}}

\def\vd{{\bm{d}}}
\def\ve{{\bm{e}}}

\def\vg{{\bm{g}}}

\def\vr{{\bm{r}}}
\def\vs{{\bm{s}}}

\def\vx{{\bm{x}}}
\def\vy{{\bm{y}}}
\def\vz{{\bm{z}}}

\def\mA{{\bm{A}}}
\def\mB{{\bm{B}}}
\def\mC{{\bm{C}}}
\def\mD{{\bm{D}}}

\def\mG{{\bm{G}}}

\def\mI{{\bm{I}}}

\def\mO{{\bm{O}}}

\def\mQ{{\bm{Q}}}
\def\mR{{\bm{R}}}
\def\mS{{\bm{S}}}

\def\mW{{\bm{W}}}
\def\mX{{\bm{X}}}
\def\mY{{\bm{Y}}}
\def\mZ{{\bm{Z}}}

\DeclareMathAlphabet{\mathsfit}{\encodingdefault}{\sfdefault}{m}{sl}
\SetMathAlphabet{\mathsfit}{bold}{\encodingdefault}{\sfdefault}{bx}{n}

\def\gB{{\mathcal{B}}}

\newtheorem{theorem}{Theorem}[section]

\newtheorem{definition}[theorem]{Definition}
\newtheorem{proposition}[theorem]{Proposition}
\newtheorem{corollary}[theorem]{Corollary}

\DeclareMathOperator{\Tr}{Tr}

\hypersetup{
    colorlinks=true,
    linkcolor=MidnightBlue,
    citecolor=MidnightBlue,
    urlcolor=MidnightBlue,
    anchorcolor=MidnightBlue,
    allcolors=MidnightBlue, 
}

\newtheorem*{algorithm_state*}{Algorithm}
\allowdisplaybreaks
\setitemize{itemsep=2pt,topsep=0pt,parsep=0pt,partopsep=0pt,leftmargin=*}

\bibpunct{[}{]}{,}{n}{,}{,}
\setcitestyle{numbers,open={[},close={]}}
\title{Normative Networks for Source Separation via Local Plasticity and Dendritic Computation
}

\author{%
Bariscan Bozkurt$^{1,3,}$\footnotemark[1]\enspace\footnotemark[2] \quad
Efe Ali Gorguner$^{2,}$\footnotemark[2] \quad
Francesco Innocenti$^{3,4}$ \quad
Rafal Bogacz$^{3,4}$\\
$^{1}$Gatsby Computational Neuroscience Unit, University College London, UK\\
$^{2}$Department of Computer Science, University of Oxford, UK\\
$^{3}$Brain Network Dynamics Unit, University of Oxford, UK\\
$^{4}$MRC Centre of Research Excellence in Restorative Neural Dynamics, UK
}

\begin{document}

\maketitle
\footnotetext[1]{Work completed in part while visiting the University of Oxford.}
\footnotetext[2]{Equal contribution.}

\begin{abstract}

Blind source separation (BSS) is a natural framework for studying how latent causes may be recovered from sensory mixtures, but deriving online and biologically plausible algorithms for structured (i.e., constrained to known domains) and potentially correlated sources remains challenging. Recent work has derived neural networks for BSS from maximization of an entropy measure, yet its online implementations involve complex and nonlocal recurrent dynamics. Motivated by this perspective, we propose \textit{Predictive Entropy Maximization}, which achieves competitive performance in BSS, using only local weight updates. The method employs a close approximation of an entropy measure, yielding an objective function with easily interpretable components. Minimizing this objective leads to a predictive neural architecture in which feedforward synapses follow an error-driven rule (that can be realized through dendritic mechanisms), lateral inhibitory connections are learned with local Hebbian plasticity, and source-domain constraints are enforced through simple output nonlinearities. We derive explicit spectral bounds on the surrogate error, characterizing when the approximation is accurate. Empirically, \textit{Predictive Entropy Maximization} remains robust under increasing source correlation and observation noise, outperforms biologically plausible algorithms that rely on stronger independence or decorrelation assumptions, and remains competitive with exact determinant- and correlative-information-based baselines. These results show how local plasticity and adaptive lateral inhibition can emerge from maximizing a regularized second-order entropy over structured source domains. Our implementation code is available at \url{https://github.com/BariscanBozkurt/Predictive-Entropy-Maximization}.\looseness=-1

\end{abstract}

\section{Introduction and related works}

Natural sensory inputs are rarely generated by a single cause. In audition, several sound sources may overlap; in vision, objects, textures, and illumination are superimposed; in olfaction, sensory responses reflect mixtures of odorants. The problem of recovering the underlying causes from mixtures alone is known as blind source separation (BSS) \citep{comon2010handbook,cichocki2009nonnegative}. BSS is a classical problem in signal processing and machine learning, and it has also served as a normative model of sensory representation learning in neuroscience. Sparse and independent decompositions of natural stimuli produce features reminiscent of receptive fields in early sensory systems \citep{Bell_ICA_InfoMax,olshausen1996emergence,lewicki2002efficient}, while the cocktail-party problem illustrates the broader challenge of source separation in the presence of strong interference \citep{cherry1953some,bronkhorst2000cocktail,haykin2005cocktail}.\looseness=-1

Because BSS is ill-posed without additional structure, identifiability must come from assumptions on the latent sources. Independent component analysis (ICA), including the information-maximization (InfoMax) formulation, achieves this through statistical independence \citep{Bell_ICA_InfoMax,Hyvrinen1999IndependentCA,HyvarinenOja2000ICA}. A complementary geometric line of work instead assumes that the sources lie in a structured domain---for example, a simplex (nonnegative entries summing to one) \citep{Lin2015SimplexIdentifiability,Lin2018Simplex}, a sparse or bounded set \citep{babatas2018algorithmic,erdogan2013class}, the nonnegative orthant (all coordinates nonnegative) \citep{Paatero_NMF,Xiao_NMF}, or, more generally, a polytope (a convex bounded set with finitely many vertices) \citep{Tatli_PMF,Tatli_GeneralizedPMF,bozkurt_identifiable}.
These geometric approaches seek outputs that spread across the available dimensions of the source domain, through maximization of the determinant of the covariance matrix of output signals.
An appealing feature of this approach is that it retains information about inputs as it avoids degenerate low-rank solutions.
Another useful property of these methods is that it can recover dependent or correlated sources (under suitable scattering assumptions)
\citep{Lin2015SimplexIdentifiability,Tatli_PMF}. 
Recent work further shows that determinant-based structured source separation admits an information-theoretic interpretation through a second-order entropy measure, called \textit{correlative entropy} (based on the log-determinant of the regularized output covariance matrix) \citep{Erdogan_CorInfoMax}.\looseness=-1

A separate challenge, central to neuroscience and neuromorphic computing, is how to realize such objectives with local dynamics and plasticity. Biologically plausible BSS networks have been developed for ICA settings \citep{Isomura2016,Isomura2018,bahroun2021a}, for nonnegative and bounded source separation through similarity matching \citep{Pehlevan_NSM_2017,Erdogan2020BlindBS}, and more recently for correlated-source separation under determinant-based objectives \citep{simsek_OnlineBCA,bozkurt2022biologicallyplausible,bozkurt2023correlative}. These works show that source-domain structure can be implemented through neural nonlinearities, inhibition, and local learning rules. Nonetheless, the online determinant-based formulations for correlated sources still tend to induce complex recurrent dynamics, because optimizing the exact log-determinant objective naturally brings inverse-covariance
into the update.

Our goal is therefore to retain the robustness of determinant-based correlated-source separation while replacing these dynamics with a more directly local and biologically plausible recurrent implementation.
Specifically, we strive to derive neural networks which satisfy previously proposed criteria of plausibility \citep{bogacz2017tutorial}: (i) local computation, where a neuron's activity is only based on its inputs, and (ii) local plasticity, where weight modification is based on the variables encoded in pre- and post-synaptic neurons. To this end, we replace the exact log-determinant with an online second-order surrogate obtained by a Taylor expansion around the diagonal part of the output covariance. The resulting objective decomposes into two interpretable terms: one encourages large variance in each output dimension, while the other penalizes redundant dependencies through normalized cross-covariances. This, in turn, yields a predictive neural architecture with a direct local interpretation: feedforward synapses follow a local error-driven rule, recurrent interactions are mediated by local covariance traces and implement adaptive lateral inhibition, and source-domain constraints are enforced through simple output nonlinearities. In this way, we preserve the determinant-based geometric viewpoint while obtaining a more directly local and biologically plausible recurrent implementation. \looseness=-1

Beyond determinant-based BSS, our construction is also connected to two adjacent lines of work. First, our approach is related to mechanistic models of learning in neural circuits. These models include predictive coding which emphasizes recurrent inference and local prediction errors as a route to learning \citep{rao1999predictive,whittington2019theories}, and supervised Correlative Information Maximization (CorInfoMax) which shows that predictive pathways and lateral interactions can arise from a log-determinant objective that encourages layer activities to spread across their available dimensions \citep{bozkurt2023supervisedcorinfomax}. Likewise, predictive-plasticity models highlight the joint role of predictive and Hebbian mechanisms in sensory representation learning \citep{Halvagal2023}. 
Second, more abstractly, our surrogate objective is also related to redundancy-reduction methods from self-supervised learning, such as VICReg, which combine variance preservation with penalties on cross-component correlations \citep{bardes2022vicreg}. These latter methods are not biologically plausible models, but they provide a useful point of comparison for the variance-covariance structure of our objective. Our setting remains distinct from both lines of work, since observations arrive one at a time as linear mixtures of latent sources, and the recovered sources must satisfy explicit domain constraints.


The main contributions of our \textit{Predictive Entropy Maximization} model are:
\begin{itemize}
    \item A Taylor approximation of the online determinant-maximization objective, obtained by expanding the log-determinant of the regularized output covariance around its diagonal part. Unlike exact online determinant-based formulations, the resulting updates depend directly on covariance statistics rather than inverse-covariance states.
    
    \item A family of predictive neural architectures for different source domains, induced by the proposed approximation. Feedforward synapses follow local error-driven plasticity, recurrent interactions are mediated by local covariance traces, and source-domain constraints are enforced through simple output nonlinearities, as illustrated in Figure~\ref{fig:predictivedecorr_two_architectures}.

    \item A theoretical characterization of the surrogate, including explicit error bounds (derived through an exact spectral representation of the Taylor remainder).

    \item Empirical evidence that our method remains robust with increasing source correlation and observation noise, competes with bio-plausible baselines built on stronger decorrelation assumptions, and extends naturally to auditory source separation and sparse receptive-field learning.
\end{itemize}
The remainder of the paper is organized as follows. After defining the BSS setting and the determinant-maximization objective (Section~\ref{sec:ProblemSetup}), we derive the online Taylor-approximated objective and the resulting predictive neural dynamics (Section~\ref{sec:DecorrelationObjective}). We then present numerical results evaluating our model (Section~\ref{sec:NumericalExperiments}), before concluding with the limitations and future directions of this work (Section~\ref{sec:Conclusion}).

\section{Problem setup for blind source separation}
\label{sec:ProblemSetup}

We now review the formulation of the linear BSS setting considered in this work \citep{comon2010handbook,cichocki2009nonnegative}. Each observation vector \(\vx(t)\in\mathbb{R}^m\) is generated by linearly mixing an unknown source vector \(\vs(t)\in\mathcal P\subset\mathbb{R}^n\) through an unknown matrix \(\mA\in\mathbb{R}^{m\times n}\):
\begin{align*}
    \vx(t) = \mA \vs(t), \qquad t \in \{1,2,\ldots,T\}.
\end{align*}
Here, the set $\mathcal P$ denotes a prescribed source domain that encodes structural assumptions on the source vectors. We focus on the determined or overdetermined regime ($m \ge n$) and assume that $\mA$ has full column rank. Stacking the samples in the snapshot matrices $\mS = [\vs(1), \dots, \vs(T)]$ and $\mX = [\vx(1), \dots, \vx(T)]$, the model can be written compactly as $\mX = \mA \mS$. The goal of BSS is to recover the source vectors from the mixtures alone by learning a separator $\mW \in \mathbb{R}^{n \times m}$ such that the outputs $\vy(t) = \mW \vx(t)$ match the original sources. Since linear BSS is non-unique without additional structural assumptions on $\mathcal P$, successful recovery is defined only up to permutation and sign ambiguities, as formalized next.

\begin{definition}[Ideal separation]
A separator \(\mW\) achieves ideal separation if the recovered outputs satisfy \(\mY=\mW\mX=\mathbf{\Pi}\mathbf{\Lambda}\mS\), where \(\mathbf{\Pi}\) is a permutation matrix and \(\mathbf{\Lambda}\) is a diagonal sign matrix. 
\label{def:IdealSeparation}
\end{definition}

The BSS model is \textit{identifiable} over a source domain \(\mathcal P\) if every exact factorization \(\mX=\mA'\mS'\) with columns of \(\mS'\) in \(\mathcal P\) agrees with the true factorization up to the ideal-separation ambiguity of Definition~\ref{def:IdealSeparation}. For the source domains considered here, this identifiability property is established in the determinant-maximization literature for polytopic and simplex-structured source models \citep{Lin2015SimplexIdentifiability, Tatli_GeneralizedPMF}.\looseness=-1

\noindent\textbf{Determinant-maximization.}
A natural way to exploit the geometry of \(\mathcal P\) is through the determinant-maximization criterion, which selects, among all feasible outputs constrained to lie in \(\mathcal P\), those with maximal second-order spread as measured by the log-determinant of their sample autocorrelation or autocovariance matrix \citep{Lin2018Simplex,Tatli_PMF,Erdogan_CorInfoMax}. The corresponding recovery guarantees additionally require the source samples to be sufficiently scattered in \(\mathcal P\). Geometrically, this means that the convex hull of the source samples must capture the shape of \(\mathcal P\) more faithfully than its best ellipsoidal approximation
\citep{Lin2018Simplex,Tatli_PMF}.\looseness=-1

Accordingly, we consider the batch determinant-maximization problem
\begin{equation}
\label{eq:det_max_obj}
\underset{\mW,\mY}{\text{minimize}}
\enspace
-\log\det(\hat{\mC})
\quad
\text{subject to}
\quad
\vy(t)=\mW\vx(t),\enspace \text{and}\enspace \vy(t)\in\mathcal P, \quad t=1,\ldots,T.
\end{equation}
In the above objective, $\hat{\mC}$ is the sample autocovariance matrix defined as:
\begin{equation}
\label{eq:covariance_definition}
\hat{\mC}
=
\frac{1}{T}\sum_{t=1}^{T}
\bigl(\vy(t)-\hat{\bm{\mu}}\bigr)\bigl(\vy(t)-\hat{\bm{\mu}}\bigr)^\top, 
\end{equation}
where \(\hat{\bm{\mu}}
=
\frac{1}{T}\sum_{t=1}^{T}\vy(t)\) is the sample mean. 

\noindent\textbf{Entropy interpretation.}
The above optimization problem has an information-theoretic interpretation. 
Following recent work \citep{Erdogan_CorInfoMax}, the regularized log-determinant of a sample covariance matrix is approximately proportional to a deterministic second-order entropy measure, which we refer to as the \textit{correlative entropy} (CE):
\begin{align}
\hat{\mathcal H}^{(\varepsilon)}_{\mathrm{CE}}(\mY)
=
\frac{1}{2}\log\det(\hat{\mC}+\varepsilon\mI)
+
\frac{n}{2}\log(2\pi e),
\label{eq:CorEntropy}
\end{align}
where \(\varepsilon>0\) is a small regularization constant. Under this interpretation, the optimization problem in Eq.~\ref{eq:det_max_obj} can be viewed as maximizing the output correlative entropy subject to the source-domain and mixing constraints. Further details on this second-order entropy and the related correlative-information definitions are given in Appendix~\ref{appendix:Review_CorInfoMax}.

\noindent\textbf{Source domains.}
In this paper, we consider the following source domains for which the BSS model is identifiable under the scattering assumptions discussed above:
\begin{itemize}
    \item \textit{Antisparse sources:} \(\gB_{\mathrm{max}} := \{\vs \in \mathbb{R}^n :
    \max_i |s_i| \le 1 \} \).
    \item \textit{Nonnegative antisparse sources:} \(\gB_{\mathrm{max},+} := \gB_{\mathrm{max}} \cap \mathbb{R}_+^n\), where \(\mathbb{R}_+^n\) is the nonnegative orthant.
    \item \textit{Sparse sources:} \(\gB_1 := \{\vs \in \mathbb{R}^n : \|\vs\|_1 \le 1\}\), where \(\|\cdot\|_1\) denotes the \(\ell_1\) norm.
    \item \textit{Nonnegative sparse sources:} \(\gB_{1,+} := \gB_1 \cap \mathbb{R}_+^n\).
    \item \textit{Simplex sources:} \(\Delta := \{\vs \in \mathbb{R}^n : s_i \geq 0\ \forall i,\ \sum_i s_i
    = 1\}\).
\end{itemize}

\section{Online surrogate objective for blind source separation}
\label{sec:DecorrelationObjective}

In this section, we derive an online approximation to the regularized correlative-entropy objective in Eq.~\ref{eq:CorEntropy} and show how it leads to biologically plausible learning dynamics. We also summarize the corresponding approximation error.

\noindent\textbf{From batch correlative entropy to an online objective.} 
To make the correlative-entropy objective online, we replace the batch output covariance by an exponentially weighted estimate, following the standard streaming-statistics construction also used in online CorInfoMax \citep{bozkurt2023correlative}. At time \(t\), the algorithm receives \(\vx(t)\), infers \(\vy(t)\) using pre-update weights and statistics, and then updates these state variables. Specifically, we replace \(\hat{\mC}\) defined in Eq.~\ref{eq:CorEntropy} with:
\begin{align}
    \hat{\mC}^{\lambda}(t)
    =
    \frac{1}{\eta(t)}
    \sum_{t'=1}^{t}
    \lambda^{\,t-t'}
    \bigl(\vy(t')-\hat{\bm{\mu}}^\lambda(t')\bigr)
    \bigl(\vy(t')-\hat{\bm{\mu}}^\lambda(t')\bigr)^\top,
    \label{eq:ExponentiallyWeightedSampleCovariance}
\end{align}
where \(\lambda\in(0,1)\) is a forgetting factor, $\eta(t)=\sum_{t'=1}^{t}\lambda^{\,t-t'},$ and $\hat{\bm{\mu}}^\lambda(t)
=
\eta(t)^{-1}
\sum_{t'=1}^{t}\lambda^{\,t-t'}\vy(t').$
The corresponding exact finite-time recursions are given in Appendix~\ref{appendix:UpdateRuleDerivations}. Since \(\eta(t)^{-1}\to 1-\lambda\) as \(t\to\infty\), the estimators can be updated online based on a new sample $\vy(t)$:
\begin{align}
    \hat{\bm{\mu}}^\lambda(t)
    &= \lambda \hat{\bm{\mu}}^\lambda(t-1) + (1-\lambda)\vy(t),
    \label{eq:mean_update}
    \\
    \hat{\mC}^{\lambda}(t)
    &= \lambda \hat{\mC}^{\lambda}(t-1)
    + (1-\lambda)\bigl(\vy(t)-\hat{\bm{\mu}}^\lambda(t)\bigr)\bigl(\vy(t)-\hat{\bm{\mu}}^\lambda(t)\bigr)^\top.
    \label{eq:cov_update}
\end{align}

Thus, the exact online counterpart of the batch problem is to minimize the negative output-entropy term, \(-\log\det(\hat{\mC}^{\lambda}(t)+\varepsilon\mI)\), under the source-domain constraint \(\vy(t)\in\mathcal P\) and the input-output relation \(\vy(t)=\mW\vx(t)\). Existing online formulations based on correlative information maximize a related mutual-information objective built from this output-entropy term together with an additional error-entropy term \citep{bozkurt2023correlative}. Direct gradients of these log-determinant objectives involve inverse covariance or correlation matrices, leading to inverse second-order states updated through the matrix inversion lemma. These inverse-state updates are less directly synapse-local than covariance-trace updates, making the local-plasticity interpretation less immediate; see Appendix~\ref{app:Comparison}. In contrast, our approach introduced below avoids this step altogether. By replacing the exact regularized output-entropy term with a second-order Taylor surrogate, we obtain an objective whose gradients depend directly on covariance statistics rather than on their inverse. This preserves the entropy-maximization viewpoint while leading to recurrent interactions and plasticity rules with a more direct local interpretation.

\noindent\textbf{Approximate entropy objective.} 
We present here our first result - an approximate objective that can optimised for BSS.
Since below we apply a Taylor expansion around the diagonal of the covariance estimate $\hat{\mC}^{\lambda}(t)$, we decompose it as \(\hat{\mC}^{\lambda}(t)=\hat{\mD}^{\lambda}(t)+\hat{\mO}^{\lambda}(t)\), where \(\hat{\mD}^{\lambda}(t)\) is diagonal and \(\hat{\mO}^{\lambda}(t)\) collects the off-diagonal cross-covariances. 
We apply a second-order Taylor expansion to the $\log \det$ objective of the regularized covariance matrix, as formalized in Theorem~\ref{thm:logdet_spectral_remainder_main} of Appendix~\ref{sec:TaylorRemainderAnalysis},
\begin{align}
    -\log\det\!\bigl(\hat{\mC}^{\lambda}(t)+\varepsilon\mI\bigr)
    =
    \underbrace{
    - \sum_{i=1}^{n} \log \bigl(\hat{v}_i(t)+\varepsilon\bigr)
    \vphantom{
    \frac{1}{2}
    \sum_{i=1}^{n}
    \sum_{\substack{j=1 \\ j\neq i}}^{n}
    \frac{
        \hat{c}_{ij}(t)^{2}
    }{
        \bigl(\hat{v}_i(t)+\varepsilon\bigr)
        \bigl(\hat{v}_j(t)+\varepsilon\bigr)
    }}
    }_{\mathclap{\text{\scriptsize variance expansion}}}
    +
    \underbrace{
    \frac{1}{2}
    \sum_{i=1}^{n}
    \sum_{\substack{j=1 \\ j\neq i}}^{n}
    \frac{
        \hat{c}_{ij}(t)^{2}
    }{
        \bigl(\hat{v}_i(t)+\varepsilon\bigr)
        \bigl(\hat{v}_j(t)+\varepsilon\bigr)
    }
    }_{\mathclap{\text{\scriptsize normalized covariance penalty}}}
    + R_2(t),
    \label{eq:DecorrelationObjectiveBSS}
\end{align}
where \(R_2(t)\) denotes the Taylor remainder, \(\hat v_i(t):=[\hat{\mD}^{\lambda}(t)]_{ii}=[\hat{\mC}^{\lambda}(t)]_{ii}\) denotes the \(i\)-th output variance, and \(\hat c_{ij}(t):=[\hat{\mO}^{\lambda}(t)]_{ij}=[\hat{\mC}^{\lambda}(t)]_{ij}\) for \(i\neq j\) denotes the corresponding cross-covariance. Up to the additive constant in the entropy definition, maximizing the output correlative entropy is therefore equivalent to minimizing the right-hand side of Eq.~\ref{eq:DecorrelationObjectiveBSS}. The resulting surrogate decomposes this objective into two interpretable terms: (i) a variance-expansion term and (ii) a variance-normalized cross-covariance penalty. Importantly, it does not target exact decorrelation, as uncorrelated $\vy(t)$ do not necessarily optimize this objective. Instead, it favors outputs with large component-wise variance and sufficiently small \emph{normalized} cross-covariances, subject to the geometric constraint \(\mathcal P\). This is precisely the regime relevant to structured source recovery: successful separation does not require zero cross-covariances, but rather nondegenerate spread within the source domain together with controlled inter-component dependence. This also explains why the method can recover correlated sources beyond the scope of ICA-based approaches built on mutual independence, as demonstrated by our numerical results in Section \ref{sec:NumericalExperiments}.

\noindent\textbf{Bound for the surrogate approximation error.}
The error $|R_2(t)|$ resulting from the above approximation can be quantified directly. Let \(\hat{\mD}^{\lambda,\varepsilon}(t):=\hat{\mD}^{\lambda}(t)+\varepsilon\mI\), and define \(\hat{\mB}^{\lambda,\varepsilon}(t):=(\hat{\mD}^{\lambda,\varepsilon}(t))^{-1/2}\hat{\mO}^{\lambda}(t)(\hat{\mD}^{\lambda,\varepsilon}(t))^{-1/2}\).
Corollary~\ref{corr:OnlineErrorBound} in Appendix~\ref{sec:TaylorRemainderAnalysis} shows that the Taylor remainder \(R_2(t)\) in Eq.~\ref{eq:DecorrelationObjectiveBSS} satisfies
\begin{align}
|R_2(t)|
& \le
\frac{
\|\hat{\mB}^{\lambda,\varepsilon}(t)\|_F^2
\|\hat{\mB}^{\lambda,\varepsilon}(t)\|_2
}{
3\bigl(1+\lambda_{\min}(\hat{\mB}^{\lambda,\varepsilon}(t))\bigr)
}.
\label{eq:TaylorBound_main}
\end{align}
Here, \(\lambda_{\min}(\cdot)\) denotes the smallest eigenvalue, \(\|\cdot\|_2\) denotes the spectral norm, and \(\|\cdot\|_F\) denotes the Frobenius norm. The numerator \(\|\hat{\mB}^{\lambda,\varepsilon}(t)\|_F^2\|\hat{\mB}^{\lambda,\varepsilon}(t)\|_2\) measures the overall size of the regularized normalized off-diagonal covariance: the Frobenius norm captures its aggregate magnitude, while the spectral norm captures its largest absolute eigenvalue. The denominator \(1+\lambda_{\min}(\hat{\mB}^{\lambda,\varepsilon}(t))\) shows that the bound worsens as the smallest eigenvalue approaches \(-1\); hence the surrogate is accurate when the normalized off-diagonal covariance is small and the spectrum stays away from \(-1\). In Section~\ref{sec:NumericalExperiments}, Figure~\ref{fig:Taylor_Analysis_Panel} reports a diagnostic of the resulting error bound as source correlation varies.

\textbf{Two-timescale optimization.}
Since biological neural networks can only learn online, we wish to generate the output \(\vy\) and then update the parameters \(\mW\) based on the currently provided input \(\vx\).
Therefore at each step or sample $t$, first the output \(\vy\) is optimized, and then the weights \(\mW\) are updated. 
To enable this bi-level optimization, we replace the hard relation \(\vy(t)=\mW\vx(t)\) with a quadratic penalty, adding to the online objective of Eq.~\ref{eq:DecorrelationObjectiveBSS}: 
\begin{equation}
    \mathcal{J}_t
    =
    - \sum_{i=1}^{n} \log \bigl(\hat{v}_i(t)+\varepsilon\bigr)
    +
    \frac{1}{2}
    \sum_{i=1}^{n}
    \sum_{\substack{j=1 \\ j\neq i}}^{n}
    \frac{\hat{c}_{ij}(t)^{2}}
         {\bigl(\hat{v}_i(t)+\varepsilon\bigr)\bigl(\hat{v}_j(t)+\varepsilon\bigr)}
    +
    \gamma \|\vy(t)-\mW\vx(t)\|_2^2,
    \label{eq:full_cost_J}
\end{equation}
and \(\gamma>0\) controls the strength of the prediction-error term.

To optimize Eq.~\ref{eq:full_cost_J}, we adopt a two-timescale procedure (which naturally admits a neural interpretation as we will show later). For each incoming sample, the current output \(\vy(t)\) is first updated on a fast timescale.
During this fast relaxation, the input \(\vx(t)\), the weights \(\mW\), and the running statistics are held fixed, and the output \(\vy(t)\) is updated until the value minimizing \(\mathcal J_t\) over \(\vy(t)\in\mathcal P\) is found.
Once this fast inference stage has settled, the resulting output state is used to update the feedforward weights and the running second-order statistics on a slower timescale.

\noindent\textbf{Inference of output activity.}
We first introduce notation that will help us to describe the inference process.
For each streaming sample \(t\), we introduce a faster neural-relaxation time index
\(\tau=0,1,\ldots,\tau_{\max}\). The index \(t\) labels the slow arrival of new input samples and the corresponding parameter updates, whereas \(\tau\) labels the fast within-sample activity dynamics used to infer \(\vy(t)\). 
For notational convenience, let \(\hat\mu_k(t):=[\hat{\bm{\mu}}^\lambda(t)]_k\) denote the \(k\)-th component of the running mean and we define the centered activity as
\(
\bar y_k(t, \tau)=y_k(t, \tau)-\hat\mu_k(t).
\)

During inference we change the output $\vy$ to reduce the objective of Eq.~\ref{eq:full_cost_J}.
Using the gradient of Eq.~\ref{eq:full_cost_J}, we define the fast activity-update direction as the negative truncated gradient
\(\vd(t,\tau)\approx-{\nabla}_{\vy}\mathcal J_t\), where the truncation discards the higher-order term quadratic in the off-diagonal covariance entries for biological plausibility; see Appendix~\ref{Appendix:OutputGradDerivation}. Componentwise,
\begin{equation}
    d_k(t, \tau)
 =
\underbrace{
    \frac{\bar y_k(t, \tau)}{\hat{v}_k(t)+\varepsilon}
    \vphantom{
    \displaystyle
    \sum_{\substack{j=1\\j\neq k}}^{n}
    \frac{\hat{c}_{kj}(t)\,\bar y_j(t, \tau)}
         {\bigl(\hat{v}_k(t)+\varepsilon\bigr)\bigl(\hat{v}_j(t)+\varepsilon\bigr)}
    }
}_{\mathclap{\text{\scriptsize variance drive}}}
-
\underbrace{
    \sum_{\substack{j=1\\j\neq k}}^{n}
    \frac{\hat{c}_{kj}(t)\,\bar y_j(t, \tau)}
         {\bigl(\hat{v}_k(t)+\varepsilon\bigr)\bigl(\hat{v}_j(t)+\varepsilon\bigr)}
}_{\mathclap{\text{\scriptsize covariance reduction}}}
-
\underbrace{
    \gamma
    \left(
        y_k(t, \tau)-\sum_{\ell=1}^{m}W_{k\ell}(t-1)x_\ell(t)
    \right)
}_{\mathclap{\text{\scriptsize predictive correction}}}.
\label{eq:output_inference_direction}
\end{equation}
 The corresponding projected activity update is
\begin{equation}
    y_k(t, \tau+1)
    =
    \sigma_{\mathcal P}\!\left(
        y_k(t, \tau)+\eta_y(t,\tau)\,d_k(t, \tau)
    \right),
    \label{eq:output_inference_update}
\end{equation}
where \(\eta_y(t,\tau)>0\) is the inference step size.
Terms in Eq.~\ref{eq:output_inference_direction} have intuitive interpretation.
The first term in \(d_k(t, \tau)\) promotes regularized variance expansion, the second reduces covariance,
and the third pulls the output activity toward its feedforward prediction.

The form of \(\sigma_{\mathcal P}\) in Eq.~\ref{eq:output_inference_update} is determined by the source domain. For box-type domains such as \(\gB_\mathrm{max}\) and \(\gB_{\mathrm{max},+}\), it reduces to elementwise saturation, namely clipping to \([-1,1]\) or \([0,1]\). For \(\gB_1\), \(\gB_{1,+}\), and \(\Delta\), we instead project the output on the source domain
as in standard proximal algorithms \citep{parikh2014proximal}, and the corresponding domain-specific derivations are given in Appendix~\ref{Appendix:NetworkDynamicsIllustrations}.

\noindent\textbf{Parameter updates.}
Once the fast inference dynamics have settled for sample \(t\), the resulting output state is used to update the weights $\mW$ and the running output statistics on a slower timescale. 
With the settled output \(\vy(t)\) held fixed, the only \(\mW\)-dependent part of \(\mathcal J_t\) is the quadratic prediction loss. A gradient descent step on \(\frac{1}{2}\|\vy(t)-\mW\vx(t)\|_2^2\) with learning rate $\alpha_W(t)$ gives
\begin{equation}
    \mW(t)=\mW(t - 1)+\alpha_W(t)\ve(t)\vx(t)^\top,
    \qquad
    \ve(t):=\vy(t)-\mW(t - 1)\vx(t),
    \label{eq:updateW}
\end{equation}
as derived in Appendix~\ref{appendix:UpdateRuleDerivations} (Eq.~\ref{eq:FeedforwardUpdateDerivation}). The running mean is updated according to Eq.~\ref{eq:mean_update}, and the centered activity \(\bar{\vy}(t)=\vy(t)-\hat{\bm{\mu}}^\lambda(t)\) is then used to update the cross-covariance and variance traces:\looseness=-1
\begin{equation}
    \hat{c}_{ij}(t)=\lambda\hat{c}_{ij}(t - 1)+(1-\lambda)\bar y_i(t)\bar y_j(t),\enspace i\neq j,
    \label{eq:updatec}
\end{equation}
\begin{equation}
  \hat{v}_i(t)=\lambda\hat{v}_i(t - 1)+(1-\lambda)\bar y_i(t)^2,\enspace i=1,\ldots,n,
  \label{eq:updatev}
\end{equation}
as derived in Appendix~\ref{appendix:UpdateRuleDerivations} (see Eq.s~\ref{eq:DiagonalVarianceUpdate} and \ref{eq:OffDiagonalCovarianceUpdate}). 
We refer to the resulting procedure as \textit{Predictive Entropy Maximization} (\textit{PEM}) and summarize it in Algorithm~\ref{alg:predictivedecorr} in Appendix~\ref{Appendix:NetworkDynamicsIllustrations}.


\noindent\textbf{Neural implementation.}
PEM can be naturally implemented in a neural network with local weight update rules. In this implementation, we assume that output signals $y_i$ are encoded in neural activity with dynamics on a fast time scale, while parameters $W_{ij}$ and $\hat{c}_{ij}$ are encoded in the weights of connections between neurons which evolve on a slower time scale.

\begin{figure*}[t]
\centering

\tikzset{
    inputdot/.style={circle, fill=red!75!black, inner sep=1.4pt},
    errnode/.style={
        rectangle,
        draw=black!60,
        fill=black!6,
        minimum width=8mm,
        minimum height=5.5mm,
        rounded corners=1pt,
        inner sep=1pt
    },
    ynode/.style={
        circle,
        draw=blue!75!black,
        fill=blue!3,
        minimum size=9.4mm,
        inner sep=0pt
    },
    inhibnode/.style={
        circle,
        draw=red!85!black,
        fill=red!8,
        minimum size=8.5mm,
        inner sep=0pt
    },
    ff/.style={->, semithick, black!75},
    lateral/.style={<->, semithick, red!85!black},
    excite/.style={->, semithick, black!75},
    inhibbar/.style={->, semithick, red!85!black},
    ebrace/.style={
        decorate,
        decoration={brace, amplitude=3pt, raise=0.5pt},
        draw=black!80,
        line width=0.45pt
    }
}

\begin{subfigure}[t]{0.47\textwidth}
\centering
\begin{tikzpicture}[>=Stealth, every node/.style={font=\small}, scale=0.92]

\node[left] at (1.10,  2.5) {$x_1$};
\node[inputdot] (x1) at (1.45,  2.5) {};

\node[left] at (1.10,  0.9) {$x_2$};
\node[inputdot] (x2) at (1.45,  0.9) {};

\node at (1.43, -0.10) {$\vdots$};

\node[left] at (1.10, -1.55) {$x_m$};
\node[inputdot] (xm) at (1.45, -1.55) {};

\node[errnode] (d1) at (4.35,  2.5) {};
\node[errnode] (d2) at (4.35,  0.9) {};
\node at (4.85, -0.10) {$\vdots$};
\node[errnode] (dn) at (4.35, -1.55) {};

\node[ynode] (y1) at (5.18,  2.5) {};
\node[ynode] (y2) at (5.18,  0.9) {};
\node[ynode] (yn) at (5.18, -1.55) {};

\draw[ebrace] ($(d1.center)+(0.18,0.54)$) -- ($(y1.center)+(-0.18,0.54)$)
    node[midway, yshift=7pt] {$e_1$};

\draw[ebrace] ($(d2.center)+(0.18,0.54)$) -- ($(y2.center)+(-0.18,0.54)$)
    node[midway, yshift=7pt] {$e_2$};

\draw[ebrace] ($(dn.center)+(0.18,0.54)$) -- ($(yn.center)+(-0.18,0.54)$)
    node[midway, yshift=7pt] {$e_n$};

\foreach \Y in {y1,y2,yn}{
    \begin{scope}
        \clip (\Y.center) circle (0.43cm);

        \draw[black!55, line width=0.20pt]
            ($(\Y.center)+(-0.20,0)$) -- ($(\Y.center)+(0.20,0)$);
        \draw[black!55, line width=0.20pt]
            ($(\Y.center)+(0,-0.20)$) -- ($(\Y.center)+(0,0.20)$);

        \draw[blue!90!black, line width=0.52pt]
            ($(\Y.center)+(-0.19,-0.10)$) --
            ($(\Y.center)+(-0.07,-0.10)$) --
            ($(\Y.center)+( 0.07, 0.10)$) --
            ($(\Y.center)+( 0.19, 0.10)$);
    \end{scope}
}

\foreach \src in {x1,x2,xm}{
    \foreach \dst in {d1,d2,dn}{
        \draw[ff] (\src.east) -- (\dst.west);
    }
}

\draw[lateral] (y1.east) to[bend left=16] (y2.east);
\draw[lateral] (y2.east) to[bend left=16] (yn.east);
\draw[lateral] (y1.east) to[bend left=32] (yn.east);

\node[above=-0.5mm, xshift=1.3mm] at (y1.north) {$y_1$};
\node[above=-0.5mm, xshift=1.3mm] at (y2.north) {$y_2$};
\node[above=-0.5mm, xshift=1.3mm] at (yn.north) {$y_n$};

\node[font=\Large] at (2.95, 3.25) {$\mW$};
\node[font=\Large, text=red!85!black] at (6.80, 0.55) {$\hat{\mC}^{\lambda}$};

\end{tikzpicture}
\caption{Antisparse network}
\label{fig:antisparse_predictivedecorr_architecture}
\end{subfigure}
\hfill
\begin{subfigure}[t]{0.47\textwidth}
\centering
\begin{tikzpicture}[>=Stealth, every node/.style={font=\small}, scale=0.92]

\node[left] at (1.10,  2.5) {$x_1$};
\node[inputdot] (x1) at (1.45,  2.5) {};

\node[left] at (1.10,  0.9) {$x_2$};
\node[inputdot] (x2) at (1.45,  0.9) {};

\node at (1.43, -0.10) {$\vdots$};

\node[left] at (1.10, -1.55) {$x_m$};
\node[inputdot] (xm) at (1.45, -1.55) {};

\node[errnode] (d1) at (4.35,  2.5) {};
\node[errnode] (d2) at (4.35,  0.9) {};
\node at (4.85, -0.10) {$\vdots$};
\node[errnode] (dn) at (4.35, -1.55) {};

\node[ynode] (y1) at (5.18,  2.5) {};
\node[ynode] (y2) at (5.18,  0.9) {};
\node[ynode] (yn) at (5.18, -1.55) {};

\draw[ebrace] ($(d1.center)+(0.18,0.54)$) -- ($(y1.center)+(-0.18,0.54)$)
    node[midway, yshift=7pt] {$e_1$};

\draw[ebrace] ($(d2.center)+(0.18,0.54)$) -- ($(y2.center)+(-0.18,0.54)$)
    node[midway, yshift=7pt] {$e_2$};

\draw[ebrace] ($(dn.center)+(0.18,0.54)$) -- ($(yn.center)+(-0.18,0.54)$)
    node[midway, yshift=7pt] {$e_n$};

\foreach \Y in {y1,y2,yn}{
    \begin{scope}
        \clip (\Y.center) circle (0.43cm);

        \draw[black!55, line width=0.20pt]
            ($(\Y.center)+(-0.20,0)$) -- ($(\Y.center)+(0.20,0)$);
        \draw[black!55, line width=0.20pt]
            ($(\Y.center)+(0,-0.20)$) -- ($(\Y.center)+(0,0.20)$);

        \draw[blue!90!black, line width=0.52pt]
            ($(\Y.center)+(-0.19,-0.13)$) --
            ($(\Y.center)+(-0.08, 0.00)$) --
            ($(\Y.center)+( 0.08, 0.00)$) --
            ($(\Y.center)+( 0.19, 0.13)$);
    \end{scope}
}

\foreach \src in {x1,x2,xm}{
    \foreach \dst in {d1,d2,dn}{
        \draw[ff] (\src.east) -- (\dst.west);
    }
}

\draw[lateral] (y1.east) to[bend left=16] (y2.east);
\draw[lateral] (y2.east) to[bend left=16] (yn.east);
\draw[lateral] (y1.east) to[bend left=32] (yn.east);

\node[inhibnode] (lam) at (7.10, 0.45) {$\lambda_L$};

\draw[excite] (y1.east) to[bend left=10] (lam.north west);
\draw[excite] (y2.north east) to[bend left=7] (lam.west);
\draw[excite] (yn.east) to[bend right=10] (lam.south west);

\draw[inhibbar] (lam.north west) to[bend left=22] (y1.east);
\draw[inhibbar] (lam.west) to[bend left=18] (y2.east);
\draw[inhibbar] (lam.south west) to[bend right=22] (yn.east);

\node[above=-0.5mm, xshift=1.3mm] at (y1.north) {$y_1$};
\node[above=-0.5mm, xshift=1.3mm] at (y2.north) {$y_2$};
\node[above=-0.5mm, xshift=1.3mm] at (yn.north) {$y_n$};

\node[font=\Large] at (2.95, 3.25) {$\mW$};
\node[font=\Large, text=red!85!black] at (6.75, 2.25) {$\hat{\mC}^{\lambda}$};

\end{tikzpicture}
\caption{Sparse network}
\label{fig:sparse_predictivedecorr_architecture}
\end{subfigure}

\caption{\textbf{Representative Predictive Entropy Maximization architectures for two source domains.}
\textbf{(a)} Antisparse architecture. Mixture inputs are mapped through feedforward weights \(\mW\) to local prediction
compartments, each paired with an output unit \(y_k\). 
Prediction errors  \(e_k\) are computed as the differences between somatic and dendritic activity.
The output layer is coupled through adaptive recurrent inhibitory interactions driven by the running output-covariance statistics \(\hat{\mC}^\lambda\), while the antisparse constraint is enforced locally by clipping.
\textbf{(b)} Sparse architecture. The same predictive feedforward pathway and adaptive recurrent inhibitory interactions are retained, but an additional shared inhibitory unit \(\lambda_L\) sends a common suppressive signal to all outputs. This shared inhibition enforces the sparsity constraint and yields soft-thresholding dynamics.}
\label{fig:predictivedecorr_two_architectures}
\end{figure*}

PEM naturally maps onto a two-layer network, where the input layer encodes the mixture inputs \(\vx\), while the output layer encodes the outputs $\vy$. To make the network mapping explicit, we rewrite the activity-update direction from Eq.~\ref{eq:output_inference_direction} as
\begin{equation}
d_k(t,\tau)
=
\underbrace{
\gamma
\sum_{\ell=1}^m W_{k\ell}(t-1)x_\ell(t)
}_{\mathclap{\text{\scriptsize feedforward input}}}
-
\underbrace{
\sum_{\substack{j=1\\j\neq k}}^n
\frac{\hat c_{kj}(t)\bar y_j(t,\tau)}
{(\hat v_k(t)+\varepsilon)(\hat v_j(t)+\varepsilon)}
}_{\mathclap{\text{\scriptsize recurrent input}}}
-
\underbrace{
\gamma
y_k(t,\tau)
+
\frac{\bar y_k(t,\tau)}{\hat v_k(t)+\varepsilon}
}_{\mathclap{\text{\scriptsize leak}}}.
\label{eq:output_inference_direction_grouped}
\end{equation}


Eq.~\ref{eq:output_inference_direction_grouped} describes changes in the output signals, which correspond to changes in the output-layer activity in the network in Figure~\ref{fig:antisparse_predictivedecorr_architecture}. The first term 
is the feedforward input to output unit \(k\) through weights \(W_{k\ell}\).
The second term maps to lateral recurrent inhibition via connections with weights $\hat{c}_{kj}$.
The remaining terms depend only on neuron \(k\)'s own activity (but not on any other neurons in the network), so they can be interpreted as a leak or decay of activity (dependent on parameters $\hat{\mu}_k$ and $\hat{v}_k$ that are intrinsic to the neuron). 

Assuming domains with limited total activity of output neurons (\(\gB_1\), \(\gB_{1,+}\), and \(\Delta\)) requires normalization of activity that could be driven by an additional neural population \(\lambda_L\), which acts as a shared inhibitory signal enforcing the corresponding population-level constraint (as shown in Figure~\ref{fig:predictivedecorr_two_architectures}b).

The parameter updates in Eqs. \ref{eq:updateW}-\ref{eq:updatev} are \textit{local}. The feedforward rule of Eq. \ref{eq:updateW} depends only on the presynaptic activity and the postsynaptic prediction error, and corresponds to a previously proposed plasticity rule \cite{urbanczik2014learning}. 
It has been postulated that biological neurons can implement such rule: If input $\mW(t - 1) \vx(t)$ comes to dendritic compartments (denoted by grey rectangles in Figure \ref{fig:predictivedecorr_two_architectures}a), then the neurons can compute errors $\ve(t)$ as the differences between their somatic and dendritic activity, and trigger plasticity proportional to these errors \cite{urbanczik2014learning}.

The change in recurrent connections in Eq. \ref{eq:updatec} is proportional to the product of normalized activity of pre and post-synaptic neurons, hence it corresponds to Hebbian plasticity.
This yields a more directly synapse-local interpretation of the recurrent updates compared to CorInfoMax \citep{bozkurt2023correlative} (where it relies on transformed inverse-covariance signals). 

The variance trace \(\hat v_i(t)\) determines the leak of neuron \(i\) and its change in Eq. \ref{eq:updatev} depends only on the centered activity of neuron \(i\).
The updates of the variance trace could correspond to 
homeostatic regulation \citep{Turrigiano2012}, that seeks to maintain average neural activity in a desired range.

\noindent\textbf{Unnormalized Predictive Entropy Maximization.}
A potential challenge for the neural implementation of Eq.~\ref{eq:output_inference_direction_grouped} is the dependence of recurrent inhibition between neurons $k$ and $j$ on $\hat{v}_k$ and $\hat{v}_j$, because we assumed above that these parameters determine neurons' leak, hence it is unclear how they could affect synaptic transmission specifically via recurrent connections.
Therefore, in our subsequent experiments, we also consider a simplified and arguably even more biologically plausible variant of \textit{PEM} called \textit{unnormalized PEM} (\textit{u-PEM}). This model is obtained by replacing the variance normalization \(
((\hat{v}_k(t)+\varepsilon)(\hat{v}_j(t)+\varepsilon))^{-1}\) from the `covariance reduction' term (Eq. \ref{eq:output_inference_direction}) with a parameter \(\gamma_{lateral}>0\) that controls the strength of the lateral coupling.
Hence, this gives the following cost function and the activity-update direction:
\begin{align*}
    &\mathcal{J}_t
    =
    - \sum_{i=1}^{n} \log (\hat{v}_i(t)+\varepsilon)
    +
    \frac{1}{2}
    \gamma_{lateral}
    \sum_{i=1}^{n}
    \sum_{\substack{j=1 \\ j\neq i}}^{n}
        \hat{c}_{ij}(t)^{2}
    +
    \gamma \|\vy(t)-\mW(t-1)\vx(t)\|_2^2,\\
    &d_k(t,\tau)
    =
    \frac{\bar y_k(t,\tau)}{\hat{v}_k(t)+\varepsilon}
    -
    \gamma_{lateral}\sum_{\substack{j=1\\j\neq k}}^{n}
    \hat{c}_{kj}(t)\,\bar y_j(t,\tau)
    -
    \gamma \left(y_k(t, \tau)-\sum_{\ell=1}^{m}W_{k\ell}(t - 1)x_\ell(t)\right).
\end{align*}
\textit{u-PEM} maps onto the same neural architecture as before, but now the lateral inhibition is directly implemented via the product of the weights and the neural activity, without needing further normalizing parameters. 
Further details are given in Appendix \ref{sec:SimplifiedPEM_Appendix}.

\section{Numerical results}
\label{sec:NumericalExperiments}

We compare \textit{PEM} and its unnormalized variant (\textit{u-PEM}) with both batch and online baselines. The batch baselines are \textit{CorInfoMax (Batch)} \citep{Erdogan_CorInfoMax} and \textit{ICA-InfoMax} \citep{Bell_ICA_InfoMax}; among them, \textit{CorInfoMax (Batch)} serves as the oracle reference, since it optimizes the exact determinant-maximization objective using the full dataset. The bio-plausible online baselines are \textit{CorInfoMax (Online)} \citep{bozkurt2023correlative} and \textit{Nonnegative Similarity Matching (Online)} \citep{Pehlevan_NSM_2017}. Among these, \textit{CorInfoMax (Online)} is the most directly comparable method, as it targets the exact log-determinant objective that our Taylor surrogate approximates. Our aim is therefore to match the robustness of exact determinant-based correlated-source separation while obtaining a more directly local and biologically plausible implementation. Performance is measured by the mean component signal-to-noise ratio (mSNR) defined in Eq.~\ref{eq:SNR_Mean_Definition}. Unless stated otherwise, all results are averaged over \(30\) independent realizations, and shaded bands denote \(95\%\) confidence intervals computed using Eq.~\ref{eq:CI_Definition}. Our Python implementation is provided with the supplementary material. Additional implementation details, extended results, and supplementary experiments, including comparisons with additional baselines, are reported in Appendix~\ref{appendix:Supplementary_Numerical_Experiments}. \looseness=-1

\noindent\textbf{Synthetic source separation.} We first consider synthetic linear mixtures with \(n=5\) sources, \(m=10\) mixtures, and \(T=10^5\) samples. For each realization, the mixing matrix \(\mA \in \mathbb{R}^{10\times 5}\) has i.i.d. standard normal entries, and the observations are generated as
\(
\vx(t)=\mA\vs(t)+\bm{\epsilon}(t),
\enspace
\bm{\epsilon}(t)\sim\mathcal{N}(\bm{0},\sigma^2\mI_{10}),
\enspace
t=1,\dots,T,
\)
with \(\sigma^2\) chosen to match the prescribed input SNR level \(\mathrm{SNR}_{\mathrm{in}}\).

We evaluate the method in two complementary regimes. In the first, we vary the source correlation level while keeping the observation noise fixed. In the second, we vary the observation noise while keeping the source-domain geometry fixed.

\begin{figure}[t!]
    \centering
    \begin{subfigure}{0.48\linewidth}
        \centering
        \includegraphics[width=\linewidth]{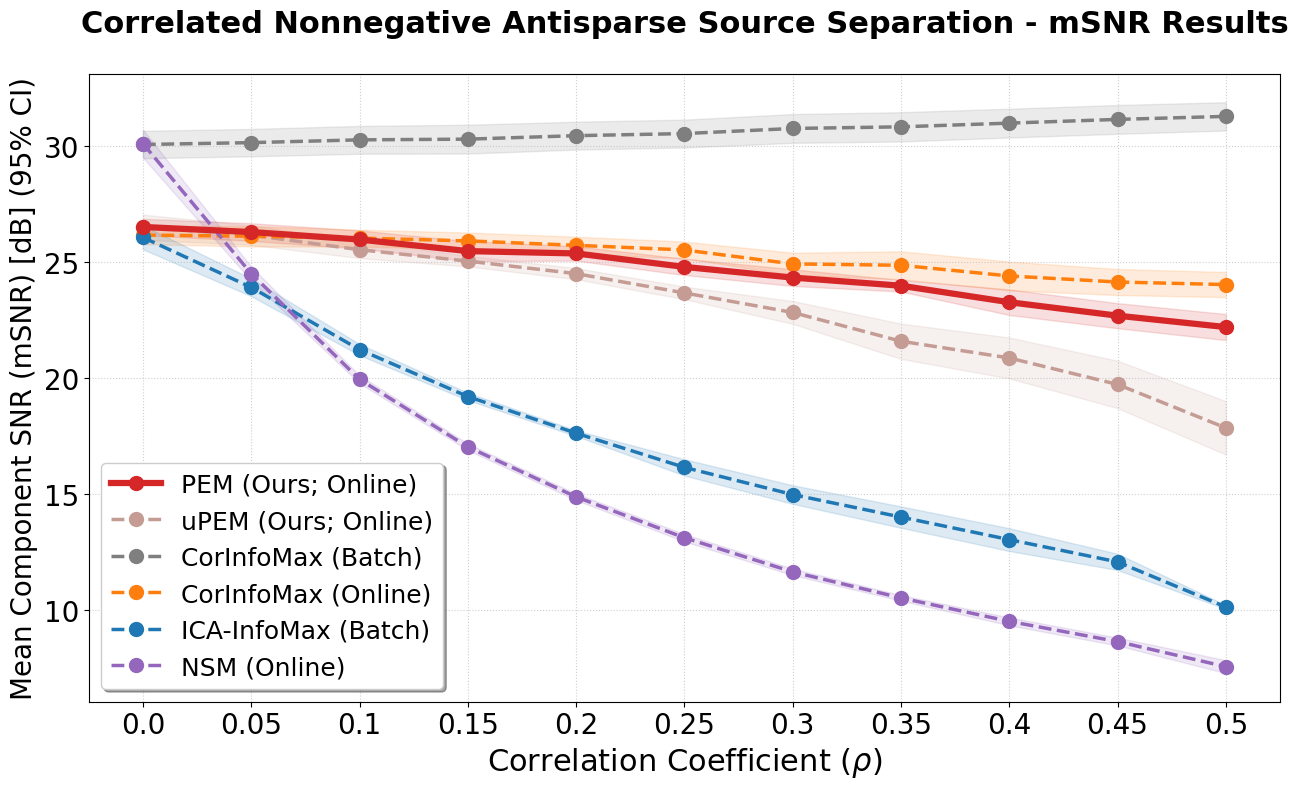}
        \caption{Correlated nonnegative antisparse (\(\gB_{\mathrm{max},+}\))}
        \label{fig:nn_antisparse_corr_main}
    \end{subfigure}
    \hfill
    \begin{subfigure}{0.48\linewidth}
        \centering
        \includegraphics[width=\linewidth]{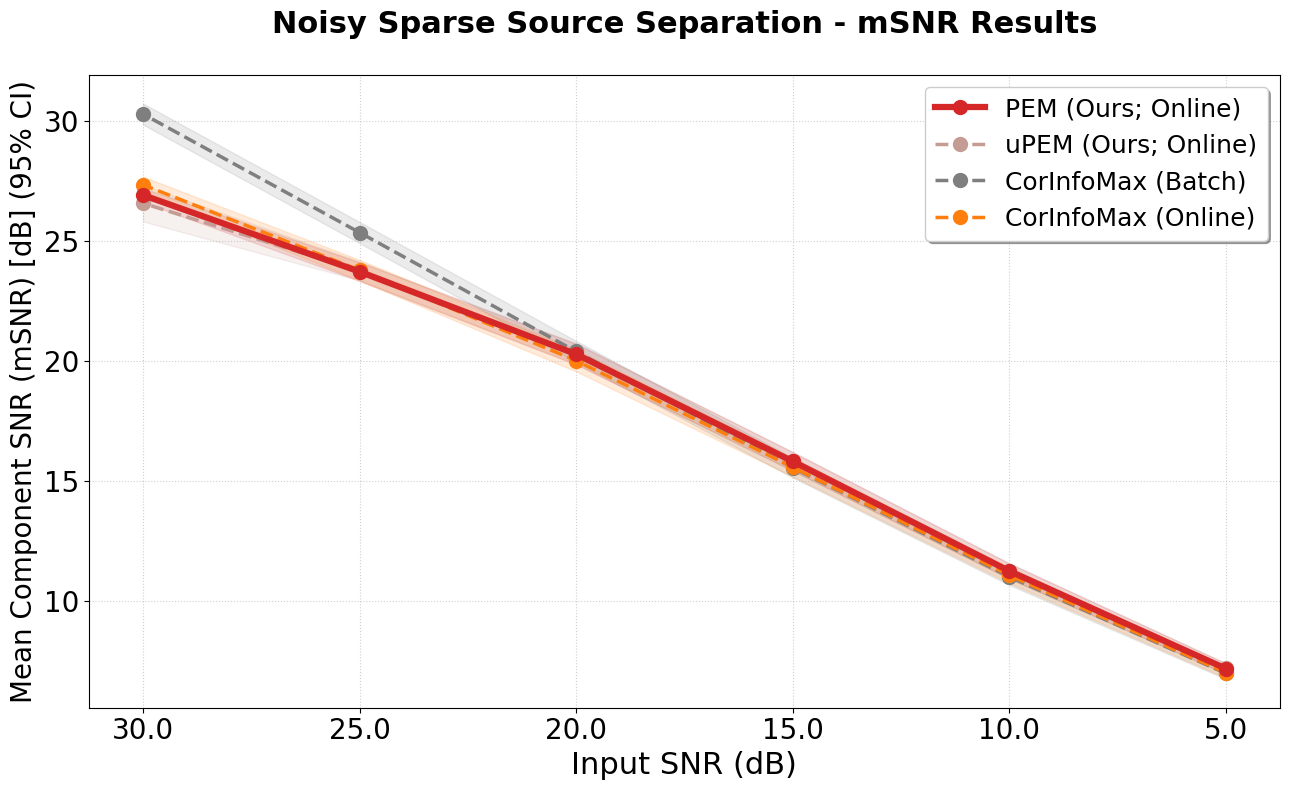}
        \caption{Noisy sparse (\(\gB_1\))}
        \label{fig:sparse_noisy_main}
    \end{subfigure}
    
    \caption{\textbf{Performance comparisons across source domains.}
    \textbf{(a)} Mean component SNR versus correlation \(\rho\) for nonnegative antisparse sources. \textit{Predictive Entropy Maximization} (\textit{PEM}) and its unnormalized variant (\textit{u-PEM}) remain robust and outperform online baselines relying on stronger independence or decorrelation assumptions.
    \textbf{(b)} Mean component SNR versus input SNR for sparse sources. The \textit{PEM} models stay close to the batch \textit{CorInfoMax (Batch)} baseline across noise levels while remaining online and biologically plausible. 
    }
    \label{fig:numerical_experiments_main}
    \vskip -0.2in
\end{figure}

For the correlation experiment, we consider the nonnegative antisparse domain \(\gB_{\mathrm{max},+}\). Sources are generated using a copula-\(t\) model, a standard way to generate correlated variables with uniform marginals; here \(\rho\) controls the dependence level and is varied over \(\{0,0.05,\dots,0.5\}\), while the mixture SNR is fixed at \(30\) dB. The results are shown in Figure \ref{fig:nn_antisparse_corr_main}. Both \textit{PEM} and its unnormalized variant (\textit{u-PEM}) remain robust as \(\rho\) increases and clearly outperform \textit{ICA-InfoMax} and \textit{NSM}, whose performance degrades rapidly as source correlation grows. \textit{PEM} appears more resistant to increased source correlation than \textit{u-PEM}, suggesting that the variance normalization inherited from the entropy objective contributes to robustness under correlated sources.

Relative to \textit{CorInfoMax}, the \textit{PEM} models preserve robust correlated-source separation while replacing inverse-covariance dynamics by the simpler surrogate-based updates introduced in Section \ref{sec:DecorrelationObjective}. As expected, the batch \textit{CorInfoMax (Batch)} baseline achieves the strongest overall performance.

For the noise experiment, we consider the sparse domain \(\gB_1\). Sources are generated uniformly from \(\gB_1\), and the mixture SNR is varied over \(\mathrm{SNR}_{\mathrm{in}} \in \{30,25,\dots,5\}\) dB. Figure \ref{fig:sparse_noisy_main} shows that both \textit{PEM} and \textit{u-PEM} track the batch \textit{CorInfoMax (Batch)} baseline closely across the full noise range. This indicates that replacing inverse-covariance dynamics by the Taylor surrogate does not materially degrade source separation performance, while still yielding an online and bio-plausible neural implementation.

Additional results for the antisparse, nonnegative sparse, and simplex domains follow the same pattern and are reported in Appendix Figure \ref{fig:numerical_experiments_appendix}.

\begin{wrapfigure}{t!}{0.42\textwidth}
    \vspace{-1.1em}
    \centering
    \includegraphics[width=\linewidth]{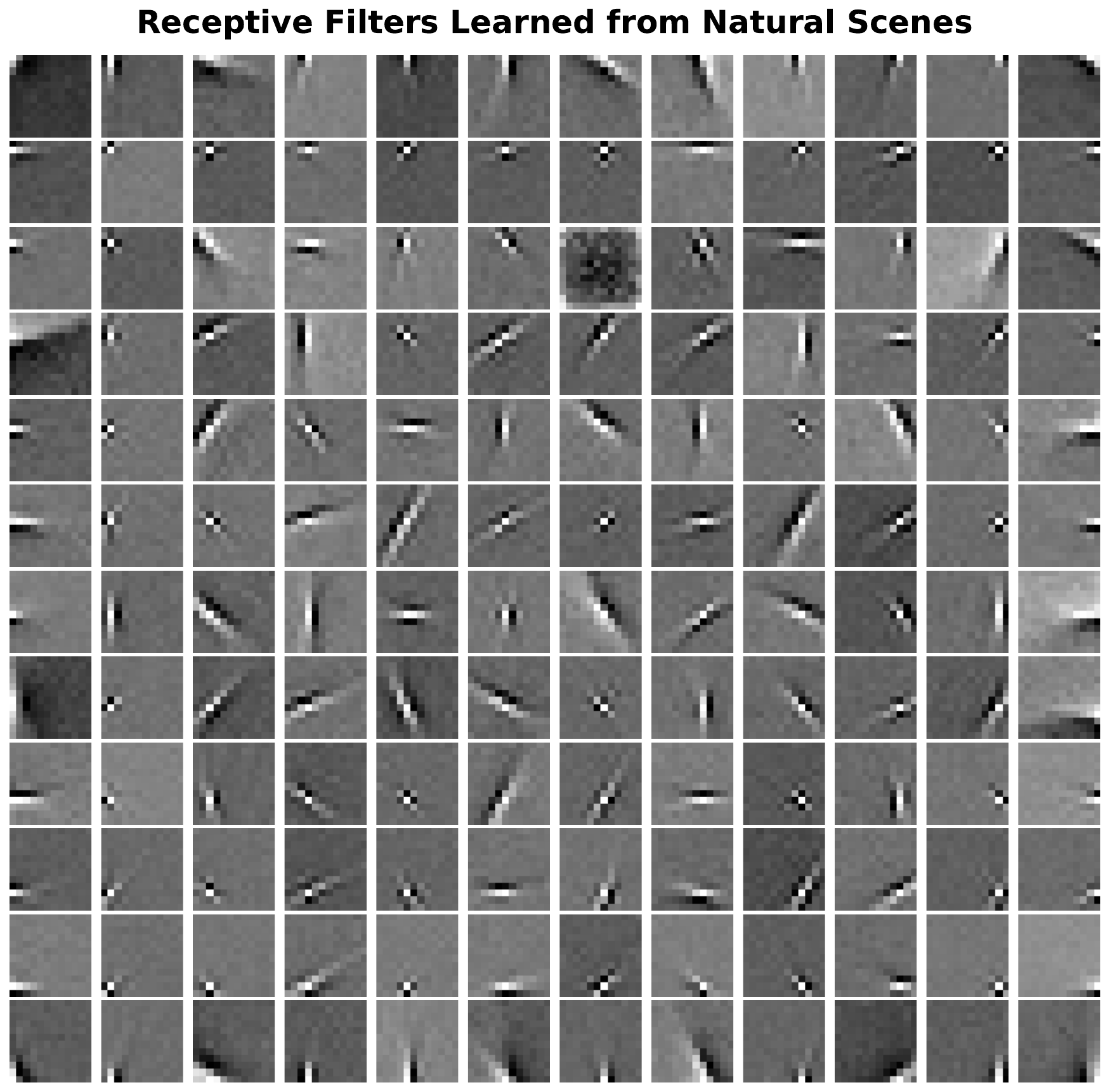}
    \caption{\textit{ Receptive fields} learned by the sparse \textit{Predictive Entropy Maximization} model from natural image patches exhibit localized and oriented structure characteristic of sparse sensory representations.\looseness=-1}
    \label{fig:receptive_field}
\end{wrapfigure}

\noindent\textbf{Learning sparse receptive fields.} We evaluate the sparse architecture illustrated in Figure \ref{fig:predictivedecorr_two_architectures}b on pre-whitened \(12\times 12\) natural image patches \citep{olshausen1996emergence}.\footnote{The dataset and original Sparsenet implementation are available at \url{https://www.rctn.org/bruno/sparsenet/}.} The network is trained on vectorized patches in \(\mathbb{R}^{144}\). The learned filters, shown in Figure~\ref{fig:receptive_field}, are localized and oriented, with the familiar Gabor-like structure associated with sparse coding models of the early visual cortex \citep{olshausen1996emergence}. This experiment shows that the same sparse neural mechanism used for source separation also captures statistically meaningful features from natural images.

\noindent\textbf{Auditory source separation.}
We also evaluate the sparse architecture on an audio separation task inspired by the cocktail-party setting \citep{cherry1953some,haykin2005cocktail}. We use three \(5\)-second audio sources from the \textit{librosa} library \citep{librosa_python}, namely \texttt{fishin}, \texttt{pistachio}, and \texttt{vibeace}, and transform the mixtures with a db4 discrete wavelet transform before running \textit{PEM}. This choice is motivated by the approximate sparsity of natural sounds in multiscale representations \citep{smith2006efficient,lewicki2002efficient,daubechies1992ten}; the transform-domain formulation and representative temporal alignments are reported in Appendix~\ref{appendix:Supplementary_Numerical_Experiments}. Across \(30\) random mixing scenarios, the three recovered sources achieve source-wise SNR values of \(24.12 \pm 1.98\) dB, \(25.07 \pm 2.14\) dB, and \(21.54 \pm 2.17\) dB (95\% CI), indicating that the sparse-domain dynamics remain effective on realistic audio mixtures.

\noindent\textbf{Diagnostic validation of the Taylor surrogate.} To complement the analytical results of Section~\ref{sec:DecorrelationObjective}, we empirically assess the accuracy of the second-order Taylor surrogate used to derive the neural dynamics. We consider the antisparse synthetic setting with controlled source correlation \(\rho \in \{0,\dots,0.5\}\), while fixing the input SNR at \(30\) dB.\looseness=-1

To make the approximation nontrivial from the outset, we initialize the covariance statistic with substantial off-diagonal structure by setting
\(
\hat{\mC}^{\lambda}(0)=\mG\mG^\top,
\enspace
\mG=\sqrt{0.2}\,\mI+\mZ,
\enspace
Z_{ij}\sim \mathcal{N}(0,1/5).
\)
At each recorded iteration, we compute both the exact Taylor remainder and the sharper spectral upper bound given by the first inequality in Eq.~\ref{eq:TaylorBound_main}, using the current covariance estimate \(\hat{\mC}^{\lambda}(t)\). Figure~\ref{fig:taylor_scatter} reports all recorded error--bound pairs across runs and correlation levels. The empirical points lie on or above the identity line, confirming that the analytical bound consistently upper-bounds the observed approximation error. Figure~\ref{fig:taylor_wrt_rho} summarizes the converged error and bound as functions of \(\rho\). As expected, the approximation error increases with source correlation, but remains small throughout the range considered and continues to be well controlled by the theoretical bound. Additional transient diagnostics, including time-evolution plots of the Taylor approximation error and its upper bound for \(\rho=0\) and \(\rho=0.4\), are reported in Appendix~\ref{appendix:Supplementary_Numerical_Experiments}.

\begin{figure}[t!]
    \centering
    \begin{subfigure}[b]{0.42\linewidth}
        \centering
        \includegraphics[width=\linewidth]{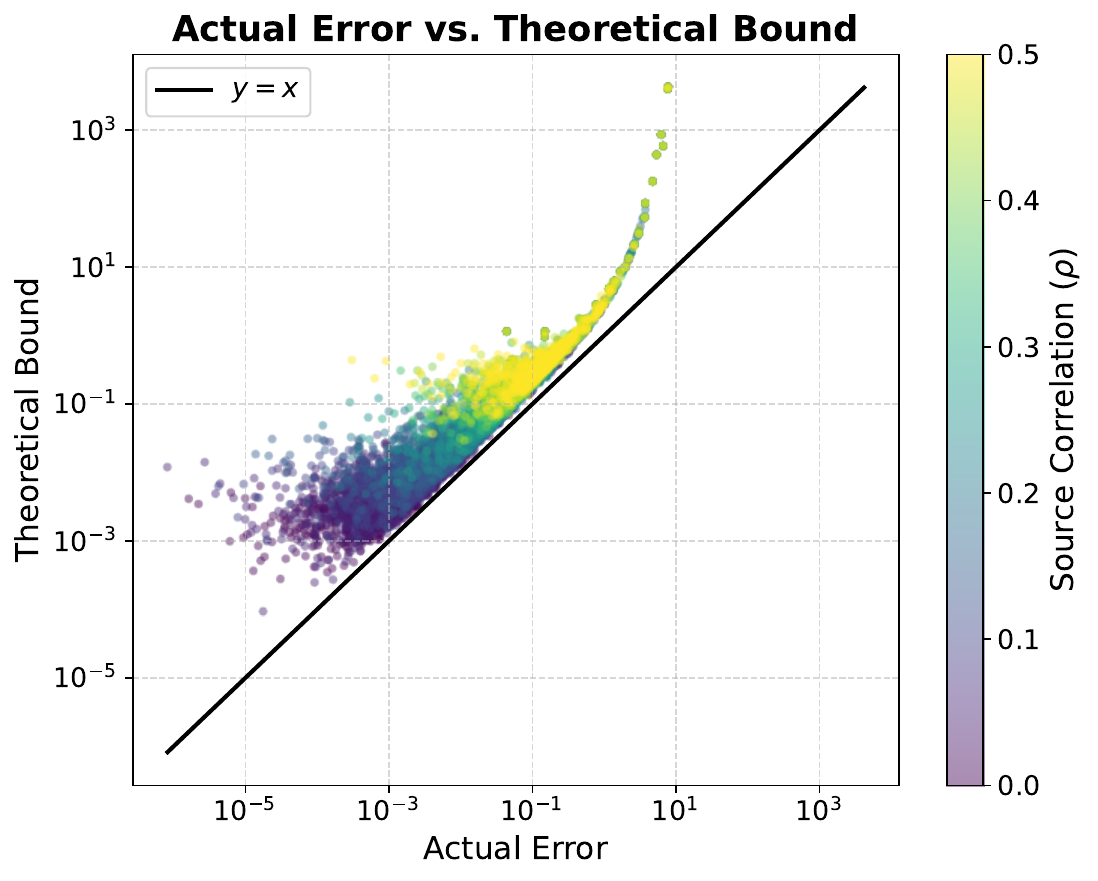}
        \caption{Actual error vs.\ theoretical bound}
        \label{fig:taylor_scatter}
    \end{subfigure}
    \begin{subfigure}[b]{0.55\linewidth}
        \centering
        \includegraphics[width=\linewidth]{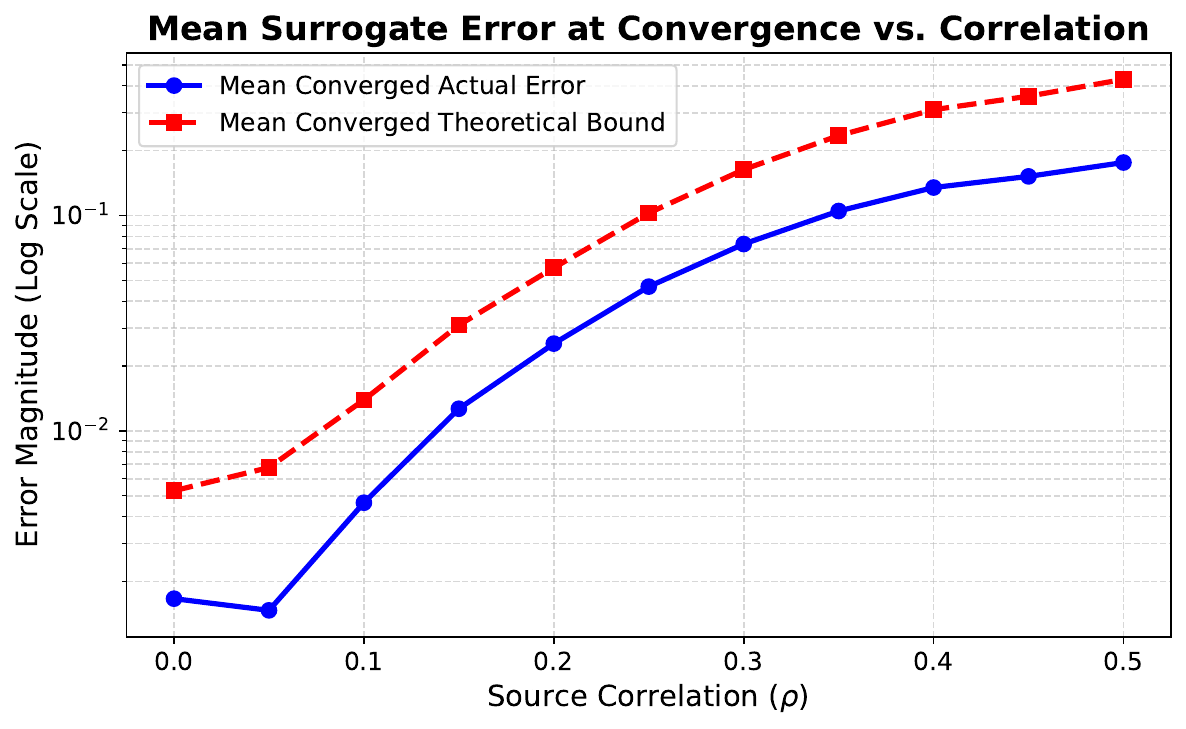}
        \caption{Converged error vs.\ source correlation}
        \label{fig:taylor_wrt_rho}
    \end{subfigure}
    \caption{\textbf{Taylor-surrogate diagnostics.}
\textbf{(a)} Exact Taylor remainder versus the theoretical bound from (Eq.~\ref{eq:TaylorBound_main}), over recorded runs and iterations, color-coded by source correlation \(\rho\). The solid line denotes \(y=x\).
\textbf{(b)} Mean Taylor remainder and corresponding bound (Eq.~\ref{eq:TaylorBound_main}) as functions of \(\rho\).\looseness=-1 }
\label{fig:Taylor_Analysis_Panel}
\end{figure}

\section{Conclusion}
\label{sec:Conclusion}
\noindent\textbf{Summary.}
In this work, we introduced \textit{Predictive Entropy Maximization}, an online determinant-based entropy maximization framework for blind source separation over identifiable source domains. By replacing the exact log-determinant objective with a second-order Taylor surrogate, we obtained a predictive neural architecture in which feedforward plasticity is local and error-driven, while recurrent interactions are mediated by covariance traces and implement adaptive lateral inhibition. We also provided explicit spectral bounds for the surrogate error, clarifying when this approximation is accurate. Empirically, the resulting method remains robust under increasing source correlation and observation noise, compares favorably with existing biologically plausible baselines, and extends naturally to sparse receptive-field learning and auditory source separation. Taken together, these results provide a normative account of how structured source separation can give rise to simple local plasticity rules and interpretable inhibitory dynamics in recurrent networks.

\noindent\textbf{Limitations and future directions.} The performance of the proposed model can be sensitive to hyperparameter choices and to the number of observed mixtures, as analyzed empirically in Appendix \ref{appendix:simulation_hyperparameters}-\ref{appendix:ablation_studies}. In addition, similar to other biologically plausible recurrent BSS networks, the computational cost of our method scales with the number of fast neural-dynamics iterations (see Appendix \ref{appendix:Compute_Complexity}), motivating future work on faster inference schemes.

\section*{Acknowledgements}
B.B. is supported by the Gatsby Charitable Foundation. This work was supported by the Wellcome Trust grant 313955/Z/24/Z and the Medical Research Council grant UKRI/MR/B000936/1. Part of this work was completed while B.B. was a visiting PhD student at the University of Oxford. B.B. thanks Vasco Portilheiro and Houssam Zenati for helpful discussions on an earlier version of this work.\looseness=-1
\newpage
\bibliography{bibliography}



\clearpage
\appendix
\numberwithin{equation}{section} 
\startcontents[appendix]

\newpage
\section*{Appendix}
\printcontents[appendix]{}{1}{\setcounter{tocdepth}{2}}
\bigskip

\newpage
\section{Review of Correlative Information Maximization}
\label{appendix:Review_CorInfoMax}

This section briefly reviews the correlative-information-maximization (CorInfoMax) framework underlying the batch method of \citet{Erdogan_CorInfoMax} and its online neural implementation in \citet{bozkurt2023correlative}. Our purpose is to make explicit the information-theoretic quantities that motivate CorInfoMax, the exact batch and online objectives it optimizes, and how its recurrent learning rules differ from those derived in the present paper.

\subsection{Correlative entropy and mutual information}

Following \citet{Erdogan_CorInfoMax}, we use the correlative entropy as a second-order entropy measure. For a finite set of vectors \(\mX=[\vx(1),\dots,\vx(T)]\in\mathbb{R}^{m\times T}\) with sample autocorrelation matrix
\[
\hat{\mR}_{\vx}
=
\frac{1}{T}\sum_{t=1}^{T}\vx(t)\vx(t)^\top,
\]
the corresponding deterministic correlative entropy is
\begin{equation}
    \hat{\mathcal H}_{\mathrm{CE}}^{(\varepsilon)}(\mX)
    :=
    \frac{1}{2}\log\det\!\bigl(\hat{\mR}_{\vx}+\varepsilon \mI\bigr)
    +
    \frac{m}{2}\log(2\pi e),
    \nonumber
\end{equation}
where \(\varepsilon>0\) is a small regularization constant. This quantity was introduced in \citet{Erdogan_CorInfoMax} under the name \emph{log-determinant entropy}. In later work, the terminology \emph{correlative entropy} has been adopted for the same second-order entropy measure \citep{bozkurt2023correlative,bozkurt2023supervisedcorinfomax,Ozsoy22:neurips}, and we follow that convention here.

For a pair of finite sets \(\mX=[\vx(1),\dots,\vx(T)]\in\mathbb{R}^{m\times T}\) and \(\mY=[\vy(1),\dots,\vy(T)]\in\mathbb{R}^{n\times T}\), define
\[
\hat{\mR}_{\vx\vy}
=
\frac{1}{T}\sum_{t=1}^{T}\vx(t)\vy(t)^\top,
\qquad
\hat{\mR}_{\vy\vx}=\hat{\mR}_{\vx\vy}^\top.
\]
The deterministic joint correlative entropy is
\begin{equation}
    \hat{\mathcal H}_{\mathrm{CE}}^{(\varepsilon)}(\mX,\mY)
    :=
    \frac{1}{2}
    \log\det\!\left(
        \begin{bmatrix}
            \hat{\mR}_{\vx}+\varepsilon \mI & \hat{\mR}_{\vx\vy}\\
            \hat{\mR}_{\vy\vx} & \hat{\mR}_{\vy}+\varepsilon \mI
        \end{bmatrix}
    \right)
    +
    \frac{m+n}{2}\log(2\pi e).\nonumber
\end{equation}
Using the block determinant identity, this can be written as
\begin{equation}
    \hat{\mathcal H}_{\mathrm{CE}}^{(\varepsilon)}(\mX,\mY)
    =
    \hat{\mathcal H}_{\mathrm{CE}}^{(\varepsilon)}(\mX)
    +
    \hat{\mathcal H}_{\mathrm{CE}}^{(\varepsilon)}(\mY\mid_L\mX),\nonumber
\end{equation}
where
\begin{equation}
    \hat{\mathcal H}_{\mathrm{CE}}^{(\varepsilon)}(\mY\mid_L\mX)
    :=
    \frac{1}{2}\log\det\!\bigl(\hat{\mR}_{\ve}+\varepsilon \mI\bigr)
    +
    \frac{n}{2}\log(2\pi e),\nonumber
\end{equation}
and
\begin{equation}
    \hat{\mR}_{\ve}
    :=
    \hat{\mR}_{\vy}
    -
    \hat{\mR}_{\vy\vx}
    \bigl(\hat{\mR}_{\vx}+\varepsilon \mI\bigr)^{-1}
    \hat{\mR}_{\vx\vy}.
    \label{eq:ld_error_covariance}
\end{equation}
The notation \(\mY\mid_L\mX\) emphasizes that this is the correlative entropy of the residual under the \emph{best linear} predictor of \(\mY\) from \(\mX\), not Shannon conditional entropy.

The corresponding deterministic  \emph{correlative mutual information} is defined by
\begin{equation}
    \hat{\mathcal I}_{\mathrm{CMI}}^{(\varepsilon)}(\mX,\mY)
    :=
    \hat{\mathcal H}_{\mathrm{CE}}^{(\varepsilon)}(\mY)
    -
    \hat{\mathcal H}_{\mathrm{CE}}^{(\varepsilon)}(\mY\mid_L\mX).\nonumber
\end{equation}
For sufficiently small \(\varepsilon\), \(\hat{\mathcal H}_{\mathrm{CE}}^{(\varepsilon)}(\mY\mid_L\mX)\) corresponds to the correlative entropy of the error of the best linear minimum-mean-square predictor of \(\vy\) from \(\vx\). Thus, the associated mutual information measures linear/correlative dependence rather than general statistical dependence.

\subsection{Batch CorInfoMax objective}

Let \(\mX=[\vx(1),\dots,\vx(T)]\in\mathbb{R}^{m\times T}\) denote the mixture matrix and \(\mY=[\vy(1),\dots,\vy(T)]\in\mathbb{R}^{n\times T}\) the separator outputs, constrained by \(\vy(t)\in\mathcal P\) for all \(t\). The batch CorInfoMax objective is the maximization of the deterministic correlative information:
\begin{subequations}
\label{eq:batch_corinfomax}
\begin{align}
    \underset{\mY\in\mathbb{R}^{n\times T}}{\mathrm{maximize}}
    \quad
    \hat{\mathcal I}_{\mathrm{CMI}}^{(\varepsilon)}(\mX,\mY)
    &=
    \frac{1}{2}\log\det\!\bigl(\hat{\mR}_{\vy}+\varepsilon \mI\bigr)
    -
    \frac{1}{2}\log\det\!\bigl(\hat{\mR}_{\ve}+\varepsilon \mI\bigr),\nonumber
    \\
    \text{subject to}\qquad
    \vy(t)&\in\mathcal P,
    \qquad
    t=1,\dots,T.\nonumber
\end{align}
\end{subequations}
Here \(\hat{\mR}_{\ve}\) is defined by Equation~\ref{eq:ld_error_covariance}. For sufficiently small \(\varepsilon\), it is the sample correlation matrix of the error of the best linear minimum-mean-square predictor of \(\vy\) from \(\vx\). In the non-zero-mean setting, the same formulation may be written with sample autocovariances in place of sample autocorrelations. Under the same identifiable-domain and sufficient-scattering assumptions discussed in Section~\ref{sec:ProblemSetup}, global optima of Equation~\ref{eq:batch_corinfomax} recover the sources up to ideal separation.

For the purposes of the present paper, the key point is that batch CorInfoMax optimizes the \emph{exact} regularized log-determinant objective. Consequently, inverse second-order matrices appear directly in the gradient and become central dynamical variables in the online formulation.

\subsection{Online CorInfoMax objective and neural dynamics}
The online CorInfoMax construction replaces the batch correlation matrices by exponentially weighted versions \citep{bozkurt2023correlative}. Using notation parallel to the main text, define
\begin{align}
    \hat{\mR}_{\vy}^{\lambda_y}(t)
    &:=
    \frac{1}{\eta_y(t)}
    \sum_{t'=1}^{t}
    \lambda_y^{\,t-t'}\vy(t')\vy(t')^\top,
    \label{eq:online_corinfomax_Ry}
    \\
    \hat{\mR}_{\ve}^{\lambda_e}(t)
    &:=
    \frac{1}{\eta_e(t)}
    \sum_{t'=1}^{t}
    \lambda_e^{\,t-t'}\ve(t')\ve(t')^\top,
    \label{eq:online_corinfomax_Re}
\end{align}
with forgetting factors \(\lambda_y,\lambda_e\in(0,1)\) and normalization factors
\[
\eta_y(t)=\sum_{t'=1}^{t}\lambda_y^{\,t-t'},
\qquad
\eta_e(t)=\sum_{t'=1}^{t}\lambda_e^{\,t-t'}.
\]
The error vector is
\begin{equation}
    \ve(t)=\vy(t)-\mW(t)\vx(t).\nonumber
\end{equation}
The corresponding online CorInfoMax problem at time \(t\) is
\begin{subequations}
\label{eq:online_corinfomax_obj}
\begin{align}
    \underset{\vy(t)\in\mathbb{R}^n}{\mathrm{maximize}}
    \quad
    \mathcal J_t^{\mathrm{CI}}(\vy(t))
    &:=
    \frac{1}{2}\log\det\!\bigl(\hat{\mR}_{\vy}^{\lambda_y}(t)+\varepsilon\mI\bigr)
    -
    \frac{1}{2}\log\det\!\bigl(\hat{\mR}_{\ve}^{\lambda_e}(t)+\varepsilon\mI\bigr),\nonumber
    \\
    \text{subject to}\qquad
    \vy(t)&\in\mathcal P.\nonumber
\end{align}
\end{subequations}

In the online formulation, the best linear predictor \(\mW(t)\) is treated as an adaptive parameter and updated through an online regularized least-squares criterion, which yields the usual LMS-type rule. Define the inverse-correlation states
\begin{equation}
    \mB_{\vy}(t)
    :=
    \bigl(\hat{\mR}_{\vy}^{\lambda_y}(t)+\varepsilon\mI\bigr)^{-1},
    \qquad
    \mB_{\ve}(t)
    :=
    \bigl(\hat{\mR}_{\ve}^{\lambda_e}(t)+\varepsilon\mI\bigr)^{-1}.\nonumber
\end{equation}
Gradient ascent on Equation~\ref{eq:online_corinfomax_obj} yields
\begin{equation}
    \nabla_{\vy(t)}\mathcal J_t^{\mathrm{CI}}
    =
    \gamma_y(t)\,\mB_{\vy}(t-1)\vy(t)
    -
    \gamma_e(t)\,\mB_{\ve}(t-1)\ve(t),\nonumber
\end{equation}
where
\begin{equation}
    \gamma_y(t)
    :=
    \Bigl(
        \lambda_y\eta_y(t-1)
        +
        \vy(t)^\top \mB_{\vy}(t-1)\vy(t)
    \Bigr)^{-1},
    \qquad
    \gamma_e(t)
    :=
    \Bigl(
        \lambda_e\eta_e(t-1)
        +
        \ve(t)^\top \mB_{\ve}(t-1)\ve(t)
    \Bigr)^{-1}.\nonumber
\end{equation}
Depending on the domain \(\mathcal P\), CorInfoMax then combines this gradient step with either a projected ascent update (for box-type domains) or a primal-dual/proximal step (for sparse and simplex domains).

The feedforward predictor is updated by the local LMS rule
\begin{equation}
    \mW(t+1)
    =
    \mW(t)
    +
    \mu_W(t)\ve(t)\vx(t)^\top.\nonumber
\end{equation}
By contrast, the lateral learning rules are based on inverse-correlation states. Applying the Matrix Inversion Lemma to Equations~\ref{eq:online_corinfomax_Ry} and \ref{eq:online_corinfomax_Re} yields the exact recursions
\begin{align}
    \mB_{\vy}(t+1)
    &=
    \frac{\eta_y(t)}{\lambda_y \eta_y(t-1)}
    \left(
        \mB_{\vy}(t)
        -
        \gamma_y(t)\,
        \mB_{\vy}(t)\vy(t)\vy(t)^\top\mB_{\vy}(t)
    \right),
    \label{eq:online_corinfomax_By_exact}
    \\
    \mB_{\ve}(t+1)
    &=
    \frac{\eta_e(t)}{\lambda_e \eta_e(t-1)}
    \left(
        \mB_{\ve}(t)
        -
        \gamma_e(t)\,
        \mB_{\ve}(t)\ve(t)\ve(t)^\top\mB_{\ve}(t)
    \right).
    \label{eq:online_corinfomax_Be_exact}
\end{align}

In \citet{bozkurt2023correlative}, these exact inverse-state updates are then simplified to obtain a biologically plausible implementation. The inverse error-correlation update in Equation~\ref{eq:online_corinfomax_Be_exact} is not retained as a learned state; instead, the approximation
\begin{equation}
    \hat{\mR}_{\ve}^{\lambda_e}(t)+\varepsilon\mI \approx \varepsilon \mI,
    \qquad
    \text{hence}
    \qquad
    \mB_{\ve}(t)\approx \varepsilon^{-1}\mI,
    \label{eq:online_corinfomax_Be_approx}
\end{equation}
is invoked, motivated by the expectation that the prediction error becomes small in the noiseless linear setting. In addition, for \(\lambda_y\) close to \(1\) and \(t\) sufficiently large, the scalar factor \(\gamma_y(t)\) is approximated by
\begin{equation}
    \gamma_y(t)\approx \frac{1-\lambda_y}{\lambda_y}.
    \label{eq:online_corinfomax_gamma_approx}
\end{equation}
Substituting Equation~\ref{eq:online_corinfomax_gamma_approx} into Equation~\ref{eq:online_corinfomax_By_exact} yields the simplified lateral update used in the CorInfoMax neural networks:
\begin{equation}
    \mB_{\vy}(t+1)
    =
    \lambda_y^{-1}
    \left(
        \mB_{\vy}(t)
        -
        \frac{1-\lambda_y}{\lambda_y}\,
        \mB_{\vy}(t)\vy(t)\vy(t)^\top\mB_{\vy}(t)
    \right).
    \label{eq:online_corinfomax_By_used}
\end{equation}

\subsection{Comparison with Predictive Entropy Maximization.}
\label{app:Comparison}
The distinction from the present paper can now be stated precisely. In CorInfoMax, the exact regularized log-determinant objective produces gradients involving inverse second-order states, so recurrent plasticity is expressed through \(\mB_{\vy}(t)\) rather than through covariance traces. Even after the approximations in Equations~\ref{eq:online_corinfomax_Be_approx} and \ref{eq:online_corinfomax_gamma_approx}, the lateral update in Equation~\ref{eq:online_corinfomax_By_used} remains a rank-one update built from transformed population activities.

Indeed, if we define \(\vz(t):=\mB_{\vy}(t)\vy(t)\), then Equation~\ref{eq:online_corinfomax_By_used} becomes
\[
\mB_{\vy}(t+1)
=
\lambda_y^{-1}
\left(
    \mB_{\vy}(t)
    -
    \frac{1-\lambda_y}{\lambda_y}\,
    \vz(t)\vz(t)^\top
\right).
\]
Hence the update of an individual lateral coefficient depends on the transformed signal \(\vz(t)\), not directly on the co-activity of the neuron pair it couples. In the \(3\times 3\) case, if
\[
\mB_{\vy}(t)
=
\begin{bmatrix}
b_{11}(t) & b_{12}(t) & b_{13}(t)\\
b_{21}(t) & b_{22}(t) & b_{23}(t)\\
b_{31}(t) & b_{32}(t) & b_{33}(t)
\end{bmatrix},
\qquad
\vy(t)
=
\begin{bmatrix}
y_1(t)\\
y_2(t)\\
y_3(t)
\end{bmatrix},
\]
then
\[
z_1(t)=b_{11}(t)y_1(t)+b_{12}(t)y_2(t)+b_{13}(t)y_3(t),
\qquad
z_2(t)=b_{21}(t)y_1(t)+b_{22}(t)y_2(t)+b_{23}(t)y_3(t),
\]
and Equation~\ref{eq:online_corinfomax_By_used} gives
\begin{equation}
    b_{21}(t+1)
    =
    \lambda_y^{-1}
    \left(
        b_{21}(t)
        -
        \frac{1-\lambda_y}{\lambda_y}\,
        z_2(t)z_1(t)
    \right).\nonumber
\end{equation}
Thus, the update of \(b_{21}(t)\) depends not only on the pair \((y_2(t),y_1(t))\), but also on \(y_3(t)\). In general, a single recurrent update depends on the activity of the full output population. This violates the local plasticity condition of biological plausibility, as defined in the introduction.

By contrast, Predictive Entropy Maximization introduces a second-order Taylor surrogate of the regularized log-determinant and works directly with covariance traces rather than inverse-covariance states. The corresponding recurrent statistics are the exponentially weighted variance and cross-covariance traces of the centered outputs, defined by \(\hat v_i(t):=[\hat{\mC}^{\lambda}(t)]_{ii}\) and \(\hat c_{ij}(t):=[\hat{\mC}^{\lambda}(t)]_{ij}\) for \(i\neq j\). Their recursions are
\[
\hat v_i(t)
=
\lambda \hat v_i(t-1)
+
(1-\lambda)\bar y_i(t)^2,
\qquad
\hat c_{ij}(t)
=
\lambda \hat c_{ij}(t-1)
+
(1-\lambda)\bar y_i(t)\bar y_j(t),
\]
so the cross-covariance update is written directly in terms of centered pairwise co-activity, while the variance update depends only on the squared centered activity of the corresponding neuron. This is the key distinction emphasized throughout the main text: the surrogate formulation removes the need to maintain inverse-covariance states and yields a more directly local interpretation of recurrent plasticity.

\section{Derivation of the update rules for the online optimization objective}
\label{appendix:UpdateRuleDerivations}

In this section, we derive the recursive updates for the online second-order statistics and the feedforward weights, as well as the output gradient used in the fast neural dynamics. 

\subsection{Recursive updates for the running mean and covariance}

Recall from Equation~\ref{eq:ExponentiallyWeightedSampleCovariance} that the exponentially weighted output covariance is defined by
\begin{equation}
    \hat{\mC}^{\lambda}(t)
    =
    \frac{1}{\eta(t)}
    \sum_{t'=1}^{t}
    \lambda^{\,t-t'}
    \bigl(\vy(t')-\hat{\bm{\mu}}^{\lambda}(t')\bigr)
    \bigl(\vy(t')-\hat{\bm{\mu}}^{\lambda}(t')\bigr)^\top,\nonumber
\end{equation}
where
\begin{equation}
    \eta(t)
    =
    \sum_{t'=1}^{t}\lambda^{\,t-t'}
    =
    \frac{1-\lambda^t}{1-\lambda}.\nonumber
\end{equation}
Since \(\eta(t)=\lambda \eta(t-1)+1\), we define \(\alpha(t):=\frac{1}{\eta(t)}\).

Using this identity, we isolate the most recent sample in the covariance sum and obtain
\begin{align}
    \hat{\mC}^{\lambda}(t)
    &=
    \frac{1}{\eta(t)}
    \left[
        \lambda
        \sum_{t'=1}^{t-1}
        \lambda^{\,t-1-t'}
        \bigl(\vy(t')-\hat{\bm{\mu}}^{\lambda}(t')\bigr)
        \bigl(\vy(t')-\hat{\bm{\mu}}^{\lambda}(t')\bigr)^\top
    \right]
    \nonumber\\
    &\qquad
    +
    \frac{1}{\eta(t)}
    \bigl(\vy(t)-\hat{\bm{\mu}}^{\lambda}(t)\bigr)
    \bigl(\vy(t)-\hat{\bm{\mu}}^{\lambda}(t)\bigr)^\top
    \nonumber\\
    &=
    \frac{\lambda \eta(t-1)}{\eta(t)}
    \hat{\mC}^{\lambda}(t-1)
    +
    \frac{1}{\eta(t)}
    \bigl(\vy(t)-\hat{\bm{\mu}}^{\lambda}(t)\bigr)
    \bigl(\vy(t)-\hat{\bm{\mu}}^{\lambda}(t)\bigr)^\top
    \nonumber\\
    &=
    \bigl(1-\alpha(t)\bigr)\hat{\mC}^{\lambda}(t-1)
    +
    \alpha(t)
    \bigl(\vy(t)-\hat{\bm{\mu}}^{\lambda}(t)\bigr)
    \bigl(\vy(t)-\hat{\bm{\mu}}^{\lambda}(t)\bigr)^\top.
    \label{eq:UpdateOfWeightedCovarianceMatrix}
\end{align}
Exactly the same argument applied to the exponentially weighted mean
\[
    \hat{\bm{\mu}}^{\lambda}(t)
    =
    \frac{1}{\eta(t)}
    \sum_{t'=1}^{t}\lambda^{\,t-t'}\vy(t')
\]
yields the recursion
\begin{equation}
    \hat{\bm{\mu}}^{\lambda}(t)
    =
    \bigl(1-\alpha(t)\bigr)\hat{\bm{\mu}}^{\lambda}(t-1)
    +
    \alpha(t)\vy(t).
    \label{eq:UpdateOfWeightedMeanVector}
\end{equation}

For large \(t\), the normalization converges to \(\eta(t)\approx (1-\lambda)^{-1}\), hence \(\alpha(t)\approx 1-\lambda\). Under this steady-state approximation, Equations~\ref{eq:UpdateOfWeightedCovarianceMatrix} and \ref{eq:UpdateOfWeightedMeanVector} reduce to
\begin{align}
    \hat{\bm{\mu}}^{\lambda}(t)
    &=
    \lambda \hat{\bm{\mu}}^{\lambda}(t-1)
    +
    (1-\lambda)\vy(t),
    \label{eq:UpdateOfWeightedMeanVectorSimplified}
    \\
    \hat{\mC}^{\lambda}(t)
    &=
    \lambda \hat{\mC}^{\lambda}(t-1)
    +
    (1-\lambda)
    \bigl(\vy(t)-\hat{\bm{\mu}}^{\lambda}(t)\bigr)
    \bigl(\vy(t)-\hat{\bm{\mu}}^{\lambda}(t)\bigr)^\top.
    \label{eq:UpdateOfWeightedCovarianceMatrixSimplified}
\end{align}
For convenience, define the centered activity
\[
    \bar{\vy}(t):=\vy(t)-\hat{\bm{\mu}}^{\lambda}(t).
\]
Then Equation~\ref{eq:UpdateOfWeightedCovarianceMatrixSimplified} can be written as
\[
    \hat{\mC}^{\lambda}(t)
    =
    \lambda \hat{\mC}^{\lambda}(t-1)
    +
    (1-\lambda)\bar{\vy}(t)\bar{\vy}(t)^\top.
\]

We now decompose the covariance matrix as
\begin{equation}
    \hat{\mC}^{\lambda}(t)
    =
    \hat{\mD}^{\lambda}(t)
    +
    \hat{\mO}^{\lambda}(t),
\end{equation}
where \(\hat{\mD}^{\lambda}(t)\) is diagonal and \(\hat{\mO}^{\lambda}(t)\) has zero diagonal. Reading off the diagonal and off-diagonal parts of Equation~\ref{eq:UpdateOfWeightedCovarianceMatrixSimplified} yields
\begin{align}
    \hat{v}_i(t)
    &=
    \lambda \hat{v}_i(t-1)
    +
    (1-\lambda)\bar y_i(t)^2,
    \qquad i=1,\dots,n,
    \label{eq:DiagonalVarianceUpdate}
    \\
    \hat{c}_{ij}(t)
    &=
    \lambda \hat{c}_{ij}(t-1)
    +
    (1-\lambda)\bar y_i(t)\bar y_j(t),
    \qquad i\neq j.
    \label{eq:OffDiagonalCovarianceUpdate}
\end{align}

\subsection{Derivation of the feedforward weight update}

Let \(\ve(t):=\vy(t)-\mW\vx(t)\) denote the instantaneous prediction error. We define the instantaneous quadratic cost
\[
    \mathcal{J}_t(\mW)
    \propto
    \frac{1}{2}\|\ve(t)\|_2^2
    =
    \frac{1}{2}\|\vy(t)-\mW\vx(t)\|_2^2.
\]
Differentiating with respect to \(\mW\) gives
\begin{align}
    \nabla_{\mW}\mathcal{J}_t(\mW)
    &=
    \frac{\partial}{\partial \mW}
    \left[
        \frac{1}{2}
        (\vy(t)-\mW\vx(t))^\top
        (\vy(t)-\mW\vx(t))
    \right]
    \nonumber\\
    &=
    -(\vy(t)-\mW\vx(t))\vx(t)^\top
    \nonumber\\
    &=
    -\ve(t)\vx(t)^\top.\nonumber
\end{align}
A gradient step with learning rate \(\alpha_W(t)\) therefore yields
\begin{align}
    \mW(t)
    &=
    \mW(t-1)-\alpha_W(t)\nabla_{\mW}\mathcal{J}_t(\mW(t-1))
    \nonumber\\
    &=
    \mW(t-1)+\alpha_W(t)\ve(t)\vx(t)^\top.
    \label{eq:FeedforwardUpdateDerivation}
\end{align}

This formulation yields a biologically plausible, local error-modulated Hebbian learning rule where the synaptic update is proportional to the product of the presynaptic activity \(\vx(t)\) and the local postsynaptic error signal \(\ve(t)\).

\subsection{Gradient of the surrogate objective for the output dynamics}
\label{Appendix:OutputGradDerivation}

We now derive the gradient of the fast-time-scale objective with respect to the current output \(\vy(t)\). Recall from Equation~\ref{eq:full_cost_J} that
\begin{equation}
    \mathcal{J}_t(\vy)
    =
    - \sum_{i=1}^n \log\!\bigl(\hat{v}_i(t)+\varepsilon\bigr)
    +
    \frac{1}{2}
    \sum_{i=1}^n
    \sum_{\substack{j=1\\j\neq i}}^n
    \frac{\hat{c}_{ij}(t)^2}
         {\bigl(\hat{v}_i(t)+\varepsilon\bigr)\bigl(\hat{v}_{j}(t)+\varepsilon\bigr)}
    +
    \gamma \|\vy(t)-\mW(t-1)\vx(t)\|_2^2.
    \nonumber
\end{equation}
At time \(t\), the current statistics \(\hat{\bm{\mu}}^{\lambda}(t)\), \(\hat{\mD}^{\lambda}(t)\), and \(\hat{\mO}^{\lambda}(t)\) depend on \(\vy(t)\) through the steady-state updates in Equations~\ref{eq:UpdateOfWeightedMeanVectorSimplified}, \ref{eq:DiagonalVarianceUpdate}, and \ref{eq:OffDiagonalCovarianceUpdate}. We first compute the necessary derivatives.

\paragraph{Derivative of the centered activity.}
For each component,
\[
    \hat{\mu}_{k}(t)
    =
    \lambda \hat{\mu}_{k}(t-1)
    +
    (1-\lambda)y_k(t).
\]
Hence
\[
    \bar y_k(t)
    =
    y_k(t)-\hat{\mu}_{k}(t)
    =
    \lambda \bigl(y_k(t)-\hat{\mu}_{k}(t-1)\bigr),
\]
and therefore
\begin{equation}
    \frac{\partial \bar y_i(t)}{\partial y_k(t)}
    =
    \lambda\,\delta_{ik},
    \label{eq:DerivativeCenteredOutput}
\end{equation}
where \(\delta_{ik}\) is the Kronecker delta.

\paragraph{Derivatives of the diagonal and off-diagonal statistics.}
Using Equation~\ref{eq:DiagonalVarianceUpdate} together with Equation~\ref{eq:DerivativeCenteredOutput}, we obtain
\begin{equation}
    \frac{\partial \hat{v}_i(t)}{\partial y_k(t)}
    =
    2\lambda(1-\lambda)\bar y_i(t)\,\delta_{ik}.
    \label{eq:DerivativeDiagonalStatistic}
\end{equation}
Similarly, from Equation~\ref{eq:OffDiagonalCovarianceUpdate},
\begin{equation}
    \frac{\partial \hat{c}_{ij}(t)}{\partial y_k(t)}
    =
    \lambda(1-\lambda)
    \bigl(
        \delta_{ik}\bar y_j(t)+\delta_{jk}\bar y_i(t)
    \bigr),
    \qquad i\neq j.
    \label{eq:DerivativeOffDiagonalStatistic}
\end{equation}

\paragraph{Derivative of the variance term.}
Using Equation~\ref{eq:DerivativeDiagonalStatistic},
\begin{align}
    \frac{\partial}{\partial y_k(t)}
    \left(
        -\sum_{i=1}^n \log(\hat{v}_i(t)+\varepsilon)
    \right)
    &=
    -\sum_{i=1}^n
    \frac{1}{\hat{v}_i(t)+\varepsilon}
    \frac{\partial \hat{v}_i(t)}{\partial y_k(t)}
    \nonumber\\
    &=
    -\frac{2\lambda(1-\lambda)\bar y_k(t)}
           {\hat{v}_k(t)+\varepsilon}.
    \label{eq:DerivativeVarianceTerm}
\end{align}

\paragraph{Derivative of the cross-covariance penalty.}
Define
\[
    \mathcal{R}_t
    :=
    \frac{1}{2}
    \sum_{i=1}^n
    \sum_{\substack{j=1\\j\neq i}}^n
    \frac{\hat{c}_{ij}(t)^2}
         {\bigl(\hat{v}_i(t)+\varepsilon\bigr)\bigl(\hat{v}_{j}(t)+\varepsilon\bigr)}.
\]
Only terms with \(i=k\) or \(j=k\) contribute to the derivative with respect to \(y_k(t)\). Using the symmetry of \(\hat{\mO}^{\lambda}(t)\), together with Equations~\ref{eq:DerivativeDiagonalStatistic} and \ref{eq:DerivativeOffDiagonalStatistic}, a direct calculation gives
\begin{align}
    \frac{\partial \mathcal{R}_t}{\partial y_k(t)}
    &=
    2\lambda(1-\lambda)
    \sum_{\substack{j=1\\j\neq k}}^n
    \frac{
        \hat{c}_{kj}(t)\bar y_j(t)
    }{
        \bigl(\hat{v}_k(t)+\varepsilon\bigr)
        \bigl(\hat{v}_{j}(t)+\varepsilon\bigr)
    }
    \nonumber\\
    &\qquad
    -
    2\lambda(1-\lambda)
    \sum_{\substack{j=1\\j\neq k}}^n
    \frac{
        \hat{c}_{kj}(t)^2 \bar y_k(t)
    }{
        \bigl(\hat{v}_k(t)+\varepsilon\bigr)^2
        \bigl(\hat{v}_{j}(t)+\varepsilon\bigr)
    }.
    \label{eq:DerivativeCrossCovariancePenalty}
\end{align}

\paragraph{Derivative of the prediction term.}
Let \(e_k(t):=[\vy(t)-\mW(t-1)\vx(t)]_k\). Then
\begin{equation}
    \frac{\partial}{\partial y_k(t)}
    \gamma \|\vy(t)-\mW(t-1)\vx(t)\|_2^2
    =
    2\gamma e_k(t).
    \label{eq:DerivativePredictionTerm}
\end{equation}

\paragraph{Full gradient.}
Combining Equations~\ref{eq:DerivativeVarianceTerm}, \ref{eq:DerivativeCrossCovariancePenalty}, and \ref{eq:DerivativePredictionTerm}, we obtain
\begin{align}
    \frac{\partial \mathcal{J}_t(\vy)}{\partial y_k(t)}
    &=
    2\lambda(1-\lambda)
    \left[
        -\frac{\bar y_k(t)}
               {\hat{v}_k(t)+\varepsilon}
        +
        \sum_{\substack{j=1\\j\neq k}}^n
        \frac{
            \hat{c}_{kj}(t)\bar y_j(t)
        }{
            \bigl(\hat{v}_k(t)+\varepsilon\bigr)
            \bigl(\hat{v}_{j}(t)+\varepsilon\bigr)
        }
    \right]
    \nonumber\\
    &\qquad
    -
    2\lambda(1-\lambda)
    \sum_{\substack{j=1\\j\neq k}}^n
    \frac{
        \hat{c}_{kj}(t)^2 \bar y_k(t)
    }{
        \bigl(\hat{v}_k(t)+\varepsilon\bigr)^2
        \bigl(\hat{v}_{j}(t)+\varepsilon\bigr)
    }
    +
    2\gamma e_k(t).
    \label{eq:GradientOfDetMax_TermToBeDiscarded}
\end{align}

The last term in Equation~\ref{eq:GradientOfDetMax_TermToBeDiscarded} is quadratic in the off-diagonal covariance entries. Since the surrogate objective is used precisely in the regime where the normalized cross-covariances are small, this term is of higher order than the leading recurrent interaction retained in the main text. Discarding it yields the approximation
\begin{align}
    \frac{\partial \mathcal{J}_t(\vy)}{\partial y_k(t)}
    &\approx
    2\lambda(1-\lambda)
    \left[
        -\frac{\bar y_k(t)}
               {\hat{v}_k(t)+\varepsilon}
        +
        \sum_{\substack{j=1\\j\neq k}}^n
        \frac{
            \hat{c}_{kj}(t)\bar y_j(t)
        }{
            \bigl(\hat{v}_k(t)+\varepsilon\bigr)
            \bigl(\hat{v}_{j}(t)+\varepsilon\bigr)
        }
    \right]
    +
    2\gamma e_k(t).\nonumber
\end{align}
Absorbing the constant prefactor \(\lambda(1-\lambda)\) into \(\gamma\), we recover the simplified gradient form used in the main text:
\[
    \frac{\partial \mathcal{J}_t(\vy)}{\partial y_k(t)}
    \propto
    \left[
        -\frac{\bar y_k(t)}
               {\hat{v}_k(t)+\varepsilon}
        +
        \sum_{\substack{j=1\\j\neq k}}^n
        \frac{
            \hat{c}_{kj}(t)\bar y_j(t)
        }{
            \bigl(\hat{v}_k(t)+\varepsilon\bigr)
            \bigl(\hat{v}_{j}(t)+\varepsilon\bigr)
        }
    \right]
    +
    \gamma e_k(t).
\]
This is the descent direction used for the fast neural dynamics in Section~\ref{sec:DecorrelationObjective}.

\paragraph{Descent property of the truncated direction.}
We now show that the truncated direction used in the fast neural dynamics is indeed a strict local descent direction for the exact objective under a simple pointwise condition. Define the regularized diagonal matrix
\begin{equation}
    \hat{\mD}^{\lambda,\varepsilon}(t)
    :=
    \hat{\mD}^{\lambda}(t)+\varepsilon \mI,\nonumber
\end{equation}
let \(\vg(t)\in\mathbb{R}^n\) denote the truncated direction used in the fast neural dynamics, and define
\begin{equation}
    \mB^{\lambda, \varepsilon}(t)
    :=
    \bigl(\hat{\mD}^{\lambda,\varepsilon}(t)\bigr)^{-1/2}
    \hat{\mO}^{\lambda}(t)
    \bigl(\hat{\mD}^{\lambda,\varepsilon}(t)\bigr)^{-1/2}.\nonumber
\end{equation}
Further, let \(\vr(t)\in\mathbb{R}^n\) be the vector with entries
\begin{equation}
    r_k(t)
    :=
    \frac{[\mB^{\lambda, \varepsilon}(t)^2]_{kk}}{\hat v_k(t)+\varepsilon}\,\bar y_k(t),
    \qquad k=1,\dots,n.\nonumber
\end{equation}

\begin{proposition}[Sufficient condition for descent of the truncated direction]
\label{prop:descent_truncated_direction}
With the notation above, the exact gradient can be written as
\begin{equation}
    \nabla_{\vy}\mathcal J_t(\vy)
    =
    2\lambda(1-\lambda)\bigl(\vg(t)-\vr(t)\bigr).
    \label{eq:gradient_decomposition_descent}
\end{equation}
Consequently, \(-\vg(t)\) is a strict descent direction for \(\mathcal J_t\) whenever
\begin{equation}
    \|\vr(t)\|_2 < \|\vg(t)\|_2.
    \label{eq:pointwise_descent_condition}
\end{equation}
Moreover,
\begin{equation}
    \|\vr(t)\|_2
    \le
    \frac{\|\mB^{\lambda, \varepsilon}(t)\|_2^2}{\hat v_{\min}^{(\varepsilon)}(t)}\,
    \|\bar{\vy}(t)\|_2,
    \qquad
    \hat v_{\min}^{(\varepsilon)}(t):=\min_{1\le k\le n}\bigl(\hat v_k(t)+\varepsilon\bigr),
    \label{eq:coarse_descent_bound}
\end{equation}
so the simpler sufficient condition
\begin{equation}
    \frac{\|\mB^{\lambda, \varepsilon}(t)\|_2^2}{\hat v_{\min}^{(\varepsilon)}(t)}\,
    \|\bar{\vy}(t)\|_2
    <
    \|\vg(t)\|_2
    \label{eq:coarse_descent_condition}
\end{equation}
also guarantees that \(-\vg(t)\) is a strict descent direction.
\end{proposition}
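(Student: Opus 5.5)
Starting from Equation~\ref{eq:GradientOfDetMax_TermToBeDiscarded}, I will identify the truncated direction with the bracketed term plus the prediction part. Concretely, $\vg(t)$ has entries
\[
g_k=-\frac{\bar y_k}{\hat v_k+\varepsilon}+\sum_{j\neq k}\frac{\hat c_{kj}\bar y_j}{(\hat v_k+\varepsilon)(\hat v_j+\varepsilon)}+\frac{\gamma}{\lambda(1-\lambda)}e_k .
\]
This uses the same rescaling of $\gamma$ as in the main text, so that the prediction term sits inside the factor $2\lambda(1-\lambda)$. The discarded term equals $2\lambda(1-\lambda)\sum_{j\neq k}\hat c_{kj}^2\bar y_k/((\hat v_k+\varepsilon)^2(\hat v_j+\varepsilon))$.

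The first step is to recognize this as $2\lambda(1-\lambda) r_k$. Since $B_{kj}=\hat c_{kj}/\sqrt{(\hat v_k+\varepsilon)(\hat v_j+\varepsilon)}$ for $j\neq k$ and $B_{kk}=0$, we have $[\mB^2]_{kk}=\sum_{j\neq k}B_{kj}^2=\sum_{j\neq k}\hat c_{kj}^2/((\hat v_k+\varepsilon)(\hat v_j+\varepsilon))$. Dividing by $\hat v_k+\varepsilon$ and multiplying by $\bar y_k$ gives exactly the discarded term. Since that term enters the gradient with a minus sign, Equation~\ref{eq:gradient_decomposition_descent} follows.

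The descent claim then comes from a Cauchy--Schwarz estimate:
\[
\langle \nabla_{\vy}\mathcal J_t,\vg\rangle = 2\lambda(1-\lambda)\bigl(\|\vg\|_2^2-\langle \vr,\vg\rangle\bigr)\ge 2\lambda(1-\lambda)\|\vg\|_2\bigl(\|\vg\|_2-\|\vr\|_2\bigr).
\]
Under condition~\ref{eq:pointwise_descent_condition} we have $\|\vg\|_2>\|\vr\|_2\ge 0$, so the right-hand side is strictly positive. The directional derivative along $-\vg$ is therefore negative, and by the first-order Taylor expansion $\mathcal J_t(\vy-s\vg)<\mathcal J_t(\vy)$ for all small $s>0$.

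For the bound~\ref{eq:coarse_descent_bound}, write $\vr=\mathrm{diag}\bigl([\mB^2]_{kk}/(\hat v_k+\varepsilon)\bigr)\bar{\vy}$. Then $\|\vr\|_2\le\max_k\bigl([\mB^2]_{kk}/(\hat v_k+\varepsilon)\bigr)\|\bar{\vy}\|_2$. Two facts control the maximum:
\begin{itemize}
\item $0\le[\mB^2]_{kk}=\ve_k^\top\mB^2\ve_k\le\lambda_{\max}(\mB^2)=\|\mB\|_2^2$, because $\mB$ is symmetric, so $\mB^2$ is positive semidefinite with top eigenvalue $\|\mB\|_2^2$.
\item $1/(\hat v_k+\varepsilon)\le 1/\hat v^{(\varepsilon)}_{\min}$ by definition of $\hat v^{(\varepsilon)}_{\min}$.
\end{itemize}
Together these give~\ref{eq:coarse_descent_bound}. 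Condition~\ref{eq:coarse_descent_condition} then implies $\|\vr\|_2<\|\vg\|_2$, and the previous paragraph applies.

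The only delicate point is bookkeeping: matching the definition of $\vg(t)$, including how the $\gamma$ rescaling absorbs $\lambda(1-\lambda)$, so that the factorization $2\lambda(1-\lambda)(\vg-\vr)$ is exact. The remaining steps are routine.
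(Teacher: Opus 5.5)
Your proof is correct and follows the paper's argument essentially step for step. You identify the discarded term as $2\lambda(1-\lambda)r_k$ via $[\mB^2]_{kk}=\sum_{j\neq k}B_{kj}^2$, apply Cauchy--Schwarz to $\langle\nabla_{\vy}\mathcal J_t,\vg\rangle$, and bound $\|\vr\|_2$ through the diagonal operator $(\hat{\mD}^{\lambda,\varepsilon})^{-1}\mathrm{diag}(\mB^2)$ using $[\mB^2]_{kk}\le\|\mB\|_2^2$; your explicit coefficient $\gamma/(\lambda(1-\lambda))$ on $e_k$ and your note that $[\mB^2]_{kk}\ge 0$ are slightly more careful than the paper, which writes $\gamma e_k$ in $g_k$ and relies on its earlier convention of absorbing $\lambda(1-\lambda)$ into $\gamma$ for the factorization to be exact.
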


\begin{proof}
By construction, the \(k\)-th component of \(\vg(t)\) is
\[
g_k(t)
=
-\frac{\bar y_k(t)}{\hat v_k(t)+\varepsilon}
+
\sum_{\substack{j=1\\j\neq k}}^n
\frac{\hat c_{kj}(t)\bar y_j(t)}
     {\bigl(\hat v_k(t)+\varepsilon\bigr)\bigl(\hat v_j(t)+\varepsilon\bigr)}
+
\gamma e_k(t).
\]
Since \(\hat{\mO}^{\lambda}(t)\) has zero diagonal,
\[
[\mB^{\lambda, \varepsilon}(t)^2]_{kk}
=
\sum_{j=1}^n B^{\lambda, \varepsilon}_{kj}(t)^2
=
\sum_{\substack{j=1\\j\neq k}}^n
\frac{\hat c_{kj}(t)^2}
     {\bigl(\hat v_k(t)+\varepsilon\bigr)\bigl(\hat v_j(t)+\varepsilon\bigr)}.
\]
Therefore
\[
r_k(t)
=
\frac{[\mB^{\lambda, \varepsilon}(t)^2]_{kk}}{\hat v_k(t)+\varepsilon}\bar y_k(t)
=
\sum_{\substack{j=1\\j\neq k}}^n
\frac{\hat c_{kj}(t)^2\,\bar y_k(t)}
     {\bigl(\hat v_k(t)+\varepsilon\bigr)^2\bigl(\hat v_j(t)+\varepsilon\bigr)}.
\]
Comparing this identity with Equation~\ref{eq:GradientOfDetMax_TermToBeDiscarded} shows that
\[
\frac{\partial \mathcal J_t(\vy)}{\partial y_k(t)}
=
2\lambda(1-\lambda)\bigl(g_k(t)-r_k(t)\bigr),
\]
which proves Equation~\ref{eq:gradient_decomposition_descent}.

Now let \(\vd(t):=-\vg(t)\). For any sufficiently small step size \(\eta>0\),
\[
\mathcal J_t(\vy+\eta \vd)
=
\mathcal J_t(\vy)
+
\eta\,
\langle \nabla_{\vy}\mathcal J_t(\vy), \vd(t)\rangle
+
O(\eta^2)
=
\mathcal J_t(\vy)
-
\eta\,
\langle \nabla_{\vy}\mathcal J_t(\vy), \vg(t)\rangle
+
O(\eta^2).
\]
Hence it is enough to show that \(\langle \nabla_{\vy}\mathcal J_t(\vy), \vg(t)\rangle>0\). Using Equation~\eqref{eq:gradient_decomposition_descent},
\[
\langle \nabla_{\vy}\mathcal J_t(\vy), \vg(t)\rangle
=
2\lambda(1-\lambda)
\Bigl(
\|\vg(t)\|_2^2
-
\langle \vr(t),\vg(t)\rangle
\Bigr).
\]
By Cauchy--Schwarz,
\[
\langle \nabla_{\vy}\mathcal J_t(\vy), \vg(t)\rangle
\ge
2\lambda(1-\lambda)\,
\|\vg(t)\|_2
\bigl(
\|\vg(t)\|_2-\|\vr(t)\|_2
\bigr).
\]
Since \(2\lambda(1-\lambda)>0\), condition~\eqref{eq:pointwise_descent_condition} implies strict descent.

Finally,
\[
\|\vr(t)\|_2
=
\left\|
\bigl(\hat{\mD}^{\lambda,\varepsilon}(t)\bigr)^{-1}
\text{diag}\!\bigl(\mB^{\lambda, \varepsilon}(t)^2\bigr)\bar{\vy}(t)
\right\|_2
\le
\left\|
\bigl(\hat{\mD}^{\lambda,\varepsilon}(t)\bigr)^{-1}
\text{diag}\!\bigl(\mB^{\lambda, \varepsilon}(t)^2\bigr)
\right\|_2
\|\bar{\vy}(t)\|_2.
\]
Moreover, for every \(k\),
\[
[\mB^{\lambda, \varepsilon}(t)^2]_{kk}\le \|\mB^{\lambda, \varepsilon}(t)^2\|_2=\|\mB^{\lambda, \varepsilon}(t)\|_2^2,
\]
hence
\[
\left\|
\bigl(\hat{\mD}^{\lambda,\varepsilon}(t)\bigr)^{-1}
\text{diag}\!\bigl(\mB^{\lambda, \varepsilon}(t)^2\bigr)
\right\|_2
\le
\frac{\|\mB^{\lambda, \varepsilon}(t)\|_2^2}{\hat v_{\min}^{(\varepsilon)}(t)}.
\]
This proves Equation~\ref{eq:coarse_descent_bound}, and Equation~\eqref{eq:coarse_descent_condition} is then immediate.
\end{proof}


\section{Domain-specific network realizations}
\label{Appendix:NetworkDynamicsIllustrations}

The slow synaptic and statistical updates are identical across all source domains and are derived in Appendix~\ref{appendix:UpdateRuleDerivations}. The only domain-dependent part is the fast output inference step. All domains share the same descent direction, while the source domain determines the output nonlinearity and, when necessary, the dynamics of auxiliary inhibitory variables. 

For each sample \(t\) and neural-dynamics iteration \(\tau\), define
\begin{equation}
    g_k(t;\tau)
    :=
    -\frac{\bar y_k(t;\tau)}{\hat{v}_k(t)+\varepsilon}
    +
    \sum_{\substack{j=1\\j\neq k}}^n
    \frac{
        \hat{c}_{kj}(t)\,\bar y_j(t;\tau)
    }{
        \bigl(\hat{v}_k(t)+\varepsilon\bigr)\bigl(\hat{v}_{j}(t)+\varepsilon\bigr)
    }
    +
    \gamma e_k(t;\tau),\nonumber
\end{equation}
where
\[
\bar y_k(t;\tau)=y_k(t;\tau)-\hat\mu_k(t),
\qquad
e_k(t;\tau)=y_k(t;\tau)-\sum_{\ell=1}^m W_{k\ell}(t-1)x_\ell(t).
\]
We further define the unconstrained pre-activation update
\begin{equation}
    \tilde y_k(t;\tau+1)
    :=
    y_k(t;\tau)-\eta_y(t,\tau)\,g_k(t;\tau).
    \label{eq:preactivation_update_common_appendix}
\end{equation}
The domain-specific architectures below are then obtained by combining Equation~\ref{eq:preactivation_update_common_appendix} with the appropriate projection or proximal operator \citep{parikh2014proximal}.

\begin{algorithm}[t!]
\caption{Predictive Entropy Maximization: generic online procedure}
\label{alg:predictivedecorr}
\begin{algorithmic}[1]
\STATE \textbf{Input:} streaming mixtures \(\{\vx(t)\}_{t=1}^T\), source domain \(\mathcal P\)
\STATE \textbf{Hyperparameters:} forgetting factor \(\lambda\), feedforward step size schedule \(\alpha_W(t)\), neural step size schedule \(\eta_y(t,\tau)\), fast-dynamics horizon \(\tau_{\max}\)
\STATE Initialize \(\mW(0)\), \(\hat{\bm{\mu}}^{\lambda}(0)\), \(\{\hat v_i(0)\}_{i=1}^n\), \(\{\hat c_{ij}(0)\}_{i\neq j}\), and \(\theta_{\mathcal P}(0)\) if needed
\FOR{\(t=1,\ldots,T\)}
    \STATE Initialize \(\vy(t;0)\) and \(\theta_{\mathcal P}(t;0)\) if required by the domain
    \FOR{\(\tau=0,\ldots,\tau_{\max}-1\)}
        \STATE Compute \(\vg(t;\tau)\) from Equation~\ref{eq:output_inference_direction}
        \STATE \(\vy(t;\tau+1)\gets \sigma_{\mathcal P}\!\left(\vy(t;\tau)-\eta_y(t,\tau)\vg(t;\tau)\,;\,\theta_{\mathcal P}(t;\tau)\right)\)
        \STATE Update \(\theta_{\mathcal P}(t;\tau+1)\) if required by the domain
    \ENDFOR
    \STATE Set \(\vy(t)\gets \vy(t;\tau_{\max})\) and \(\ve(t)\gets \vy(t)-\mW(t-1)\vx(t)\)
    \STATE \(\mW(t)\gets \mW(t-1)+\alpha_W(t)\ve(t)\vx(t)^\top\)
    \STATE \(\hat{\bm{\mu}}^{\lambda}(t)\gets \lambda \hat{\bm{\mu}}^{\lambda}(t-1)+(1-\lambda)\vy(t)\), \quad \(\bar{\vy}(t)\gets \vy(t)-\hat{\bm{\mu}}^{\lambda}(t)\)
    \STATE \(\hat v_i(t)\gets \lambda \hat v_i(t-1)+(1-\lambda)\bar y_i(t)^2,\qquad \forall i\)
    \STATE \(\hat c_{ij}(t)\gets \lambda \hat c_{ij}(t-1)+(1-\lambda)\bar y_i(t)\bar y_j(t),\qquad \forall i\neq j\)
\ENDFOR
\end{algorithmic}
\end{algorithm}

\subsection{Antisparse sources}
\label{app:antisparse_sources}

Consider the antisparse domain
\[
\gB_{\mathrm{max}} := \{\vs \in \mathbb{R}^n : \|\vs\|_\infty \le 1\}.
\]
The output constraint is enforced directly through the clipping nonlinearity onto the box \([-1,1]^n\). The fast output dynamics are
\begin{align*}
    \tilde y_k(t;\tau+1)
    &=
    y_k(t;\tau)-\eta_y(t,\tau)\,g_k(t;\tau),\\
    y_k(t;\tau+1)
    &=
    \operatorname{clip}_{[-1,1]}\!\bigl(\tilde y_k(t;\tau+1)\bigr),
    \qquad k=1,\dots,n,
\end{align*}
where
\[
    \operatorname{clip}_{[-1,1]}(u)
    :=
    \begin{cases}
        -1, & u < -1,\\
        u, & -1 \le u \le 1,\\
        1, & u > 1.
    \end{cases}
\]
Thus, the corresponding network consists only of the output population, driven by the feedforward estimate \(\mW(t)\vx(t)\), variance-normalized self-excitation, and covariance-dependent lateral inhibition.\looseness=-1

\subsection{Nonnegative antisparse sources}
\label{app:nn_antisparse_sources}

Consider the nonnegative antisparse domain
\[
\gB_{\mathrm{max}, +} := \gB_{\mathrm{max}} \cap \mathbb{R}_+^n.
\]
The dynamics are identical to the antisparse case except that the projection is now onto the box \([0,1]^n\):
\begin{align*}
    \tilde y_k(t;\tau+1)
    &=
    y_k(t;\tau)-\eta_y(t,\tau)\,g_k(t;\tau),\\
    y_k(t;\tau+1)
    &=
    \operatorname{clip}_{[0,1]}\!\bigl(\tilde y_k(t;\tau+1)\bigr),
    \qquad k=1,\dots,n,
\end{align*}
where
\[
    \operatorname{clip}_{[0,1]}(u)
    :=
    \begin{cases}
        0, & u < 0,\\
        u, & 0 \le u \le 1,\\
        1, & u > 1.
    \end{cases}
\]

\subsection{Sparse sources}
\label{app:sparse_sources}

Consider the sparse domain
\[
\gB_1 := \{\vs \in \mathbb{R}^n : \|\vs\|_1 \le 1\}.
\]
We enforce the \(\ell_1\)-constraint through a shared nonnegative variable \(\lambda_L\), which acts as an adaptive threshold. Equivalently, one may view the fast-time update as a proximal-gradient step for the \(\ell_1\)-penalized objective associated with the Lagrangian
\[
\mathcal J_t(\vy)+\lambda_L(\|\vy\|_1-1),
\qquad \lambda_L\ge 0.
\]
Using the proximal mapping of the \(\ell_1\)-norm \citep{parikh2014proximal}, the resulting updates are
\begin{align*}
    \tilde y_k(t;\tau+1)
    &=
    y_k(t;\tau)-\eta_y(t,\tau)\,g_k(t;\tau),\\
    y_k(t;\tau+1)
    &=
    \operatorname{ST}_{\lambda_L(t;\tau)}\!\bigl(\tilde y_k(t;\tau+1)\bigr),
    \qquad k=1,\dots,n,\\
    \lambda_L(t;\tau+1)
    &=
    \operatorname{ReLU}\!\left(
        \lambda_L(t;\tau)
        +
        \eta_{\lambda}(t,\tau)
        \left(
            \sum_{i=1}^n |y_i(t;\tau+1)| - 1
        \right)
    \right),
\end{align*}
where
\(
    \operatorname{ST}_{\lambda}(u)
    :=
    \operatorname{sign}(u)\max\{|u|-\lambda,0\}
\)
is the \textit{soft-thresholding} operator. The resulting architecture contains one additional inhibitory unit encoding \(\lambda_L\), which delivers a common threshold to all output neurons.

\subsection{Nonnegative sparse sources}
\label{app:nn_sparse_sources}

Consider the nonnegative sparse domain
\[
\gB_{1,+} := \gB_1 \cap \mathbb{R}_+^n.
\]
Since \(\vy\ge 0\), the \(\ell_1\)-constraint reduces to \(\bm{1}^\top\vy\le 1\). As in the sparse case, we introduce a shared nonnegative variable \(\lambda_L\) and interpret the fast-time step as a proximal / projected-gradient update for
\[
\mathcal J_t(\vy)+\lambda_L(\bm{1}^\top\vy-1),
\qquad \lambda_L\ge 0,
\]
followed by projection onto the nonnegative orthant \citep{parikh2014proximal}. The resulting updates are
\begin{align*}
    \tilde y_k(t;\tau+1)
    &=
    y_k(t;\tau)-\eta_y(t,\tau)\,g_k(t;\tau),\\
    y_k(t;\tau+1)
    &=
    \operatorname{ReLU}\!\bigl(\tilde y_k(t;\tau+1)-\lambda_L(t;\tau)\bigr),
    \qquad k=1,\dots,n,\\
    \lambda_L(t;\tau+1)
    &=
    \operatorname{ReLU}\!\left(
        \lambda_L(t;\tau)
        +
        \eta_{\lambda}(t,\tau)
        \left(
            \sum_{i=1}^n y_i(t;\tau+1) - 1
        \right)
    \right).
\end{align*}
Thus, the nonnegative sparse dynamics correspond to a thresholded ReLU nonlinearity driven by the shared inhibitory variable \(\lambda_L\).

\subsection{Simplex sources}
\label{app:simplex_sources}

Consider the simplex domain
\[
\Delta := \{\vs \in \mathbb{R}^n : \vs \ge \bm{0},\ \bm{1}^\top \vs = 1\}.
\]
The only difference with respect to the nonnegative sparse case is that the population constraint is now an equality rather than an inequality. Accordingly, the shared offset variable \(\lambda_L\) is no longer constrained to be nonnegative. The same proximal / projected interpretation then yields
\begin{align*}
    \tilde y_k(t;\tau+1)
    &=
    y_k(t;\tau)-\eta_y(t,\tau)\,g_k(t;\tau),\\
    y_k(t;\tau+1)
    &=
    \operatorname{ReLU}\!\bigl(\tilde y_k(t;\tau+1)-\lambda_L(t;\tau)\bigr),
    \qquad k=1,\dots,n,\\
    \lambda_L(t;\tau+1)
    &=
    \lambda_L(t;\tau)
    +
    \eta_{\lambda}(t,\tau)
    \left(
        \sum_{i=1}^n y_i(t;\tau+1) - 1
    \right).
\end{align*}
Thus, the simplex network has the same output architecture as the nonnegative sparse network, except that the inhibitory unit corresponding to \(\lambda_L\) is linear rather than rectified.

\paragraph{Summary.}
All five domains share the same feedforward pathway, the same covariance-driven recurrent interactions, and the same slow synaptic updates. What changes with the source domain is only the output-layer nonlinearity and, for \(\gB_1\), \(\gB_{1,+}\), and \(\Delta\), the inclusion of one shared inhibitory variable enforcing the corresponding population constraint. In particular, box-type domains require only local clipping operations, whereas sparse and simplex domains lead to piecewise-linear thresholding dynamics mediated by a single auxiliary inhibitory unit.

\section{Error analysis and analytical bounds for the second-order Taylor approximation}
\label{sec:TaylorRemainderAnalysis}

In this section, we analyze the approximation error induced by the second-order surrogate in Equation~\ref{eq:DecorrelationObjectiveBSS}. The analysis proceeds in two steps. We first study the Taylor remainder pointwise for a generic covariance matrix, deriving an exact spectral representation together with sharp spectral and norm-based bounds. We then specialize these results to the online covariance sequence used by the algorithm and, finally, use the same bounds to relate the batch surrogate objective to the exact regularized determinant objective.

\subsection{Pointwise remainder representation and bounds}

We begin with a generic covariance matrix. Writing \(\mC=\mD+\mO\), where \(\mD\) collects the diagonal entries of \(\mC\) and \(\mO\) its off-diagonal part, and introducing the regularized diagonal matrix \(\mD^\varepsilon:=\mD+\varepsilon \mI\), we define the normalized perturbation
\[
\mB^\varepsilon:=(\mD^\varepsilon)^{-1/2}\mO(\mD^\varepsilon)^{-1/2}.
\]
This yields an exact spectral representation of the Taylor remainder for \(\log\det(\mC+\varepsilon \mI)\).

\begin{theorem}[Exact spectral representation of the second-order remainder]
\label{thm:logdet_spectral_remainder_main}
Fix \(\varepsilon>0\). Let \(\mC \in \mathbb{R}^{n\times n}\) be a symmetric positive semidefinite matrix, and write
\begin{equation}
\mC = \mD + \mO,
\qquad
\mD = \mathrm{diag}(\mC),
\nonumber
\end{equation}
where \(\mD\) is the diagonal matrix formed from the diagonal of \(\mC\) and \(\mO\) is the off-diagonal part of \(\mC\), so that \(O_{ii}=0\) for all \(i\). Define the regularized diagonal matrix
\begin{equation}
\mD^\varepsilon := \mD + \varepsilon \mI,
\nonumber
\end{equation}
the normalized off-diagonal matrix
\begin{equation}
\mB^\varepsilon := (\mD^\varepsilon)^{-1/2}\mO(\mD^\varepsilon)^{-1/2},
\nonumber
\end{equation}
and let \(\lambda_1,\dots,\lambda_n\) denote the eigenvalues of \(\mB^\varepsilon\). Then:
\begin{enumerate}
    \item \(\mB^\varepsilon\) is symmetric, \(\Tr(\mB^\varepsilon)=0\), and \(\lambda_i>-1\) for all \(i=1,\dots,n\);
    \item the regularized log-determinant admits the exact decomposition
    \begin{equation}
    \log\det(\mC+\varepsilon \mI)
    =
    \sum_{i=1}^n \log (C_{ii}+\varepsilon)
    -
    \frac{1}{2}
    \sum_{i=1}^n \lambda_i^2
    +
    R_2,
    \label{eq:exact_logdet_decomposition}
    \end{equation}
    where the remainder is given exactly by
    \begin{equation}
    R_2
    =
    \sum_{i=1}^n
    \left[
        \log(1+\lambda_i)-\lambda_i+\frac{1}{2}\lambda_i^2
    \right].
    \label{eq:exact_R2_formula}
    \end{equation}
\end{enumerate}
Moreover, the second-order term can be written entrywise as
\begin{equation}
\sum_{i=1}^n \lambda_i^2
=
\Tr((\mB^\varepsilon)^2)
=
\sum_{i=1}^n \sum_{\substack{j=1\\j\neq i}}^n
\frac{C_{ij}^2}{(C_{ii}+\varepsilon)(C_{jj}+\varepsilon)},
\label{eq:entrywise_second_order_term}
\end{equation}
and therefore
\begin{equation}
\log\det(\mC+\varepsilon \mI)
=
\sum_{i=1}^n \log (C_{ii}+\varepsilon)
-
\frac{1}{2}
\sum_{i=1}^n \sum_{\substack{j=1\\j\neq i}}^n
\frac{C_{ij}^2}{(C_{ii}+\varepsilon)(C_{jj}+\varepsilon)}
+
R_2.
\label{eq:entrywise_exact_expansion}
\end{equation}
\end{theorem}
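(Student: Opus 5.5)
The proof rests on a single congruence factorization. Since \(\mC+\varepsilon\mI = \mD^\varepsilon + \mO\), and \(\mD^\varepsilon\) is diagonal with strictly positive entries \(C_{ii}+\varepsilon\) (positive because \(C_{ii}\ge 0\) for a PSD matrix and \(\varepsilon>0\)), we can write
\[
\mC+\varepsilon\mI = (\mD^\varepsilon)^{1/2}\bigl(\mI+\mB^\varepsilon\bigr)(\mD^\varepsilon)^{1/2}.
\]
Taking determinants and logarithms gives
\[
\log\det(\mC+\varepsilon\mI)=\sum_i\log(C_{ii}+\varepsilon)+\log\det(\mI+\mB^\varepsilon).
\]
The rest of the argument is spectral calculus on \(\mB^\varepsilon\).

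For item 1, symmetry of \(\mB^\varepsilon\) follows from symmetry of \(\mO\) and the fact that \((\mD^\varepsilon)^{-1/2}\) is diagonal. The trace vanishes because \(B^\varepsilon_{ii}=O_{ii}/(C_{ii}+\varepsilon)=0\). For the eigenvalue bound, note that \(\mI+\mB^\varepsilon=(\mD^\varepsilon)^{-1/2}(\mC+\varepsilon\mI)(\mD^\varepsilon)^{-1/2}\) is congruent to \(\mC+\varepsilon\mI\). The latter is positive definite, since \(\mC\succeq 0\) and \(\varepsilon>0\), so \(\mI+\mB^\varepsilon\succ 0\). Hence every eigenvalue satisfies \(1+\lambda_i>0\), i.e. \(\lambda_i>-1\). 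This is the one place where \(\varepsilon>0\) and positive semidefiniteness are essential: they keep the logarithm well defined.

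For item 2, diagonalize the symmetric matrix \(\mB^\varepsilon\) to get \(\log\det(\mI+\mB^\varepsilon)=\sum_i\log(1+\lambda_i)\), which is legitimate because all \(1+\lambda_i>0\). Add and subtract \(\sum_i\lambda_i\) and \(\tfrac12\sum_i\lambda_i^2\). The linear term \(\sum_i\lambda_i=\Tr(\mB^\varepsilon)\) is zero by item 1, so we are left with
\[
\log\det(\mI+\mB^\varepsilon)=-\tfrac12\sum_i\lambda_i^2+\sum_i\bigl[\log(1+\lambda_i)-\lambda_i+\tfrac12\lambda_i^2\bigr].
\]
This is exactly \eqref{eq:exact_logdet_decomposition} with \(R_2\) as in \eqref{eq:exact_R2_formula}. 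Since the remainder is defined by this identity, there is nothing to estimate here; the bounds stated elsewhere are a separate matter.

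For the entrywise form \eqref{eq:entrywise_second_order_term}, use \(\sum_i\lambda_i^2=\Tr((\mB^\varepsilon)^2)=\|\mB^\varepsilon\|_F^2\), which holds by symmetry. Then compute entrywise: \(B^\varepsilon_{ij}=C_{ij}/\sqrt{(C_{ii}+\varepsilon)(C_{jj}+\varepsilon)}\) for \(i\ne j\), and the diagonal entries are zero, so
\[
\sum_{i,j}(B^\varepsilon_{ij})^2=\sum_{i}\sum_{j\ne i}\frac{C_{ij}^2}{(C_{ii}+\varepsilon)(C_{jj}+\varepsilon)}.
\]
Substituting this into item 2 gives \eqref{eq:entrywise_exact_expansion}.

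The only step requiring care is the eigenvalue bound \(\lambda_i>-1\), which rests on the congruence argument above; everything else is routine bookkeeping.
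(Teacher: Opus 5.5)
Your proposal is correct and follows the paper's proof essentially step for step. Both use the congruence factorization \(\mC+\varepsilon\mI=(\mD^\varepsilon)^{1/2}(\mI+\mB^\varepsilon)(\mD^\varepsilon)^{1/2}\), deduce \(\lambda_i>-1\) from positive definiteness of \(\mI+\mB^\varepsilon\), diagonalize and add and subtract the vanishing linear term and the quadratic term, and identify \(\Tr((\mB^\varepsilon)^2)\) entrywise.
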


\begin{proof}
Since \(\mC\) is symmetric positive semidefinite, each diagonal entry satisfies \(C_{ii}\ge 0\). Hence \(\mD^\varepsilon=\mD+\varepsilon \mI\) is positive definite and \((\mD^\varepsilon)^{-1/2}\) is well-defined. Because \(\mO=\mC-\mD\) is symmetric and \((\mD^\varepsilon)^{-1/2}\) is diagonal, the matrix
\[
\mB^\varepsilon=(\mD^\varepsilon)^{-1/2}\mO(\mD^\varepsilon)^{-1/2}
\]
is symmetric. Its diagonal entries are zero, since
\[
(\mB^\varepsilon)_{ii}
=
(D_{ii}+\varepsilon)^{-1/2} O_{ii} (D_{ii}+\varepsilon)^{-1/2}
=
0,
\qquad i=1,\dots,n.
\]
Therefore
\begin{equation}
\Tr(\mB^\varepsilon)=0.
\label{eq:trace_B_zero}
\end{equation}

Next, observe that
\begin{equation}
\mC+\varepsilon \mI
=
\mD^\varepsilon+\mO
=
(\mD^\varepsilon)^{1/2}(\mI+\mB^\varepsilon)(\mD^\varepsilon)^{1/2}.
\label{eq:C_factorization}
\end{equation}
Since \(\mC+\varepsilon \mI\) is positive definite and \((\mD^\varepsilon)^{1/2}\) is invertible, Equation~\ref{eq:C_factorization} implies that
\[
\mI+\mB^\varepsilon
=
(\mD^\varepsilon)^{-1/2}(\mC+\varepsilon \mI)(\mD^\varepsilon)^{-1/2}
\]
is also positive definite. Hence all eigenvalues of \(\mI+\mB^\varepsilon\) are strictly positive. If \(\lambda_1,\dots,\lambda_n\) are the eigenvalues of \(\mB^\varepsilon\), then the eigenvalues of \(\mI+\mB^\varepsilon\) are \(1+\lambda_1,\dots,1+\lambda_n\), and therefore \(\lambda_i>-1\) for all \(i=1,\dots,n\).

Using Equation~\ref{eq:C_factorization} and multiplicativity of the determinant, we obtain
\[
\det(\mC+\varepsilon \mI)=\det(\mD^\varepsilon)\det(\mI+\mB^\varepsilon).
\]
Taking logarithms yields
\begin{equation}
\log\det(\mC+\varepsilon \mI)
=
\log\det(\mD^\varepsilon)+\log\det(\mI+\mB^\varepsilon).
\label{eq:logdet_factorization}
\end{equation}
Since \(\mD^\varepsilon\) is diagonal,
\begin{equation}
\log\det(\mD^\varepsilon)=\sum_{i=1}^n \log (C_{ii}+\varepsilon).
\label{eq:logdet_D}
\end{equation}

Now let
\[
\mB^\varepsilon = \mQ \,\mathrm{diag}(\lambda_1,\dots,\lambda_n)\,\mQ^\top
\]
be an eigendecomposition of \(\mB^\varepsilon\), with \(\mQ\) orthogonal. Then
\[
\mI+\mB^\varepsilon
=
\mQ\,\mathrm{diag}(1+\lambda_1,\dots,1+\lambda_n)\,\mQ^\top,
\]
so
\begin{equation}
\log\det(\mI+\mB^\varepsilon)=\sum_{i=1}^n \log(1+\lambda_i).
\label{eq:logdet_IplusB}
\end{equation}
Substituting Equations~\ref{eq:logdet_D} and \ref{eq:logdet_IplusB} into Equation~\ref{eq:logdet_factorization} gives
\begin{equation}
\log\det(\mC+\varepsilon \mI)
=
\sum_{i=1}^n \log (C_{ii}+\varepsilon)
+
\sum_{i=1}^n \log(1+\lambda_i).
\label{eq:pre_remainder}
\end{equation}

Using Equation~\ref{eq:trace_B_zero}, we have \(\sum_{i=1}^n \lambda_i = 0\). Therefore we may add and subtract \(\sum_i \lambda_i - \frac{1}{2}\sum_i \lambda_i^2\) inside Equation~\ref{eq:pre_remainder} to obtain
\[
\log\det(\mC+\varepsilon \mI)
=
\sum_{i=1}^n \log (C_{ii}+\varepsilon)
-
\frac{1}{2}\sum_{i=1}^n \lambda_i^2
+
\sum_{i=1}^n
\left[
\log(1+\lambda_i)-\lambda_i+\frac{1}{2}\lambda_i^2
\right].
\]
This proves Equations~\ref{eq:exact_logdet_decomposition} and \ref{eq:exact_R2_formula}.

It remains to identify the quadratic term in coordinates. Since \(\mB^\varepsilon\) is symmetric,
\[
\sum_{i=1}^n \lambda_i^2=\Tr((\mB^\varepsilon)^2).
\]
Also, because \((\mB^\varepsilon)_{ii}=0\),
\[
\Tr((\mB^\varepsilon)^2)
=
\sum_{i=1}^n \sum_{\substack{j=1\\j\neq i}}^n (\mB^\varepsilon)_{ij}^2.
\]
Finally, for \(i\neq j\),
\[
(\mB^\varepsilon)_{ij}
=
(D_{ii}+\varepsilon)^{-1/2} O_{ij} (D_{jj}+\varepsilon)^{-1/2}
=
\frac{C_{ij}}{\sqrt{(C_{ii}+\varepsilon)(C_{jj}+\varepsilon)}}.
\]
Substituting this identity into the previous display proves Equation~\ref{eq:entrywise_second_order_term}, and Equation~\ref{eq:entrywise_exact_expansion} follows immediately.
\end{proof}

Theorem~\ref{thm:logdet_spectral_remainder_main} shows that the approximation error is determined by the spectrum of the regularized normalized perturbation \(\mB^\varepsilon=(\mD^\varepsilon)^{-1/2}\mO(\mD^\varepsilon)^{-1/2}\). This identifies \(\mB^\varepsilon\) as the natural perturbation variable for assessing the accuracy of the regularized Taylor surrogate. The next result turns the exact representation above into a sharp two-sided bound on the remainder.

\begin{theorem}[Two-sided spectral bound]
\label{thm:sharper_spectral_bound}
Under the assumptions and notation of Theorem~\ref{thm:logdet_spectral_remainder_main}, the remainder \(R_2\) satisfies
\begin{equation}
-\frac{1}{3}
\sum_{\lambda_i<0}
\frac{|\lambda_i|^3}{1+\lambda_i}
\;\le\;
R_2
\;\le\;
\frac{1}{3}
\sum_{\lambda_i\ge 0}
\lambda_i^3.
\label{eq:two_sided_R2_bound}
\end{equation}
Consequently,
\begin{equation}
|R_2|
\le
\max\left\{
\frac{1}{3}
\sum_{\lambda_i\ge 0}\lambda_i^3,\,
\frac{1}{3}
\sum_{\lambda_i<0}\frac{|\lambda_i|^3}{1+\lambda_i}
\right\}.
\label{eq:max_R2_bound}
\end{equation}
\end{theorem}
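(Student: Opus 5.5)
By Theorem~\ref{thm:logdet_spectral_remainder_main}, $R_2=\sum_i f(\lambda_i)$ with $f(x):=\log(1+x)-x+\tfrac12 x^2$ and every $\lambda_i>-1$. The proof therefore reduces to a scalar inequality for $f$ on $(-1,\infty)$, which is then summed over the eigenvalues split by sign. The key tool is the derivative
\[
f'(x)=\frac{1}{1+x}-1+x=\frac{x^2}{1+x},
\]
so that $f(x)=\int_0^x \frac{s^2}{1+s}\,ds$.

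\textbf{Case $x\ge 0$.} On $[0,x]$ the integrand satisfies $0\le \frac{s^2}{1+s}\le s^2$. Integrating gives $0\le f(x)\le x^3/3$.

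\textbf{Case $-1<x<0$.} Write $f(x)=-\int_x^0 \frac{s^2}{1+s}\,ds$, whose integrand is nonnegative, so $f(x)\le 0$. For $s\in[x,0]$ we have $1+s\ge 1+x>0$, hence $\frac{s^2}{1+s}\le \frac{s^2}{1+x}$. Integrating yields
\[
f(x)\ge -\frac{1}{1+x}\int_x^0 s^2\,ds=-\frac{|x|^3}{3(1+x)}.
\]
This case is the only delicate point: the factor $1/(1+s)$ blows up near $-1$, so we bound it uniformly by its value at the left endpoint. This is exactly where the $1+\lambda_i$ denominator in the statement comes from.

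\textbf{Assembling.} Sum over $i$ and split by the sign of $\lambda_i$. For the upper bound in Eq.~\ref{eq:two_sided_R2_bound}, drop the nonpositive contributions of the negative eigenvalues and bound each nonnegative one by $\lambda_i^3/3$. For the lower bound, drop the nonnegative contributions of the positive eigenvalues and bound each negative one below by $-|\lambda_i|^3/(3(1+\lambda_i))$.

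\textbf{Absolute bound.} Eq.~\ref{eq:max_R2_bound} follows because $R_2$ is sandwiched between $-a$ and $b$ with $a,b\ge 0$, which gives $|R_2|\le\max\{a,b\}$.
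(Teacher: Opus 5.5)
Your proof is correct and follows essentially the same route as the paper: the integral representation $r(x)=\int_0^x \frac{t^2}{1+t}\,dt$ obtained from $r'(x)=x^2/(1+x)$, then the two scalar cases bounded via $1/(1+t)\le 1$ for $x\ge 0$ and $1/(1+t)\le 1/(1+x)$ for $-1<x<0$, then summation over the eigenvalues split by sign. You also say explicitly that the opposite-sign contributions can be dropped because $r\le 0$ on $(-1,0)$ and $r\ge 0$ on $[0,\infty)$, which makes the assembly step slightly clearer than the paper's one-line summation.
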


\begin{proof}
Define the scalar function
\begin{equation}
r(x):=\log(1+x)-x+\frac{1}{2}x^2,
\qquad x>-1.\nonumber
\end{equation}
By Equation~\ref{eq:exact_R2_formula},
\begin{equation}
R_2=\sum_{i=1}^n r(\lambda_i).
\label{eq:R2_as_sum_r}
\end{equation}
Also,
\[
r(0)=0,
\qquad
r'(x)=\frac{x^2}{1+x}.
\]
Hence
\begin{equation}
r(x)=\int_0^x \frac{t^2}{1+t}\,dt,
\qquad x>-1.
\label{eq:r_integral_representation}
\end{equation}

If \(x\ge 0\), then \(1+t\ge 1\) for all \(t\in[0,x]\), and Equation~\ref{eq:r_integral_representation} gives
\[
0\le r(x)=\int_0^x \frac{t^2}{1+t}\,dt \le \int_0^x t^2\,dt = \frac{x^3}{3}.
\]
Thus
\begin{equation}
0\le r(x)\le \frac{x^3}{3},
\qquad x\ge 0.
\label{eq:r_positive}
\end{equation}

If \(-1<x<0\), then
\[
r(x)= - \int_x^0 \frac{t^2}{1+t}\,dt \le 0.
\]
Moreover, for \(t\in[x,0]\), we have \(1+t\ge 1+x>0\), so
\[
\frac{1}{1+t}\le \frac{1}{1+x}.
\]
Therefore
\[
|r(x)|
=
-r(x)
=
\int_x^0 \frac{t^2}{1+t}\,dt
\le
\frac{1}{1+x}\int_x^0 t^2\,dt
=
\frac{|x|^3}{3(1+x)}.
\]
Hence
\begin{equation}
-\frac{|x|^3}{3(1+x)}
\le
r(x)
\le
0,
\qquad -1<x<0.
\label{eq:r_negative}
\end{equation}

Applying Equation~\ref{eq:r_positive} to the nonnegative eigenvalues and Equation~\ref{eq:r_negative} to the negative eigenvalues, and summing the resulting inequalities in Equation~\ref{eq:R2_as_sum_r}, yields Equation~\ref{eq:two_sided_R2_bound}. Equation~\ref{eq:max_R2_bound} follows immediately.
\end{proof}

The preceding spectral bound immediately yields a simpler estimate expressed directly in terms of the size and conditioning of the regularized normalized perturbation \(\mB^\varepsilon\).

\begin{corollary}[Simpler norm-based bound]
\label{corr:norm_based_bound}
Under the assumptions and notation of Theorem~\ref{thm:logdet_spectral_remainder_main}, let \(\lambda_{\min}(\mB^\varepsilon)\) denote the smallest eigenvalue of the normalized off-diagonal matrix \(\mB^\varepsilon\). Then
\begin{equation}
|R_2|
\le
\frac{1}{3\bigl(1+\lambda_{\min}(\mB^\varepsilon)\bigr)}
\sum_{i=1}^n |\lambda_i|^3
\le
\frac{\|\mB^\varepsilon\|_F^2\|\mB^\varepsilon\|_2}{3\bigl(1+\lambda_{\min}(\mB^\varepsilon)\bigr)}.
\label{eq:norm_based_bound}
\end{equation}
\end{corollary}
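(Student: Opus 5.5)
The plan is to start from the two-sided spectral bound of Theorem~\ref{thm:sharper_spectral_bound}, or more directly from the pointwise scalar estimates \eqref{eq:r_positive} and \eqref{eq:r_negative} established in its proof. From these I will extract a single uniform bound of the form $|r(x)|\le |x|^3/(3(1+\lambda_{\min}))$, valid for every eigenvalue $x=\lambda_i$. Summing this over $i$ gives the first inequality, and a standard trace/norm argument then gives the second.

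\textbf{Step 1 (uniform pointwise bound).} Write $\lambda_{\min}:=\lambda_{\min}(\mB^\varepsilon)$. By part~1 of Theorem~\ref{thm:logdet_spectral_remainder_main}, $\lambda_{\min}>-1$. Because $\Tr(\mB^\varepsilon)=0$, the eigenvalues cannot all be positive, so $\lambda_{\min}\le 0$ and hence $0<1+\lambda_{\min}\le 1$. I treat the eigenvalues in two cases.
\begin{itemize}
\item For $\lambda_i\ge 0$, \eqref{eq:r_positive} gives $|r(\lambda_i)|\le \lambda_i^3/3$. Since $1+\lambda_{\min}\le 1$, this is at most $|\lambda_i|^3/(3(1+\lambda_{\min}))$.
\item For $\lambda_i<0$, \eqref{eq:r_negative} gives $|r(\lambda_i)|\le |\lambda_i|^3/(3(1+\lambda_i))$. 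Since $1+\lambda_i\ge 1+\lambda_{\min}>0$, this is again at most $|\lambda_i|^3/(3(1+\lambda_{\min}))$.
\end{itemize}
By \eqref{eq:R2_as_sum_r} and the triangle inequality, $|R_2|\le\sum_i|r(\lambda_i)|$. Applying the pointwise bound to each term yields the first inequality in \eqref{eq:norm_based_bound}.

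\textbf{Step 2 (norm bound).} For the second inequality I bound $\sum_i|\lambda_i|^3\le \max_i|\lambda_i|\sum_i\lambda_i^2$. Because $\mB^\varepsilon$ is symmetric, $\max_i|\lambda_i|=\|\mB^\varepsilon\|_2$ and $\sum_i\lambda_i^2=\Tr((\mB^\varepsilon)^2)=\|\mB^\varepsilon\|_F^2$. Substituting these identities and dividing by the positive quantity $3(1+\lambda_{\min})$ gives the claim.

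The only step needing care is Step 1, specifically justifying $1+\lambda_{\min}\le 1$ so that the positive eigenvalues can share the common denominator. This is exactly where the trace-zero property is used. In the degenerate case $\mB^\varepsilon=0$, we have $\lambda_{\min}=0$ and both sides vanish, so the bound still holds.
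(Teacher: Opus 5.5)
Your proof is correct and follows essentially the same route as the paper. Both arguments bound the nonnegative- and negative-eigenvalue contributions by the common denominator $3(1+\lambda_{\min}(\mB^\varepsilon))$, using $\Tr(\mB^\varepsilon)=0$ to get $1+\lambda_{\min}(\mB^\varepsilon)\le 1$, and then apply $\sum_i|\lambda_i|^3\le\|\mB^\varepsilon\|_2\|\mB^\varepsilon\|_F^2$. The only cosmetic difference is that you pass through $|R_2|\le\sum_i|r(\lambda_i)|$, whereas the paper starts from the max-form bound of Theorem~\ref{thm:sharper_spectral_bound}; both lead to the same estimate.
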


\begin{proof}
From Theorem~\ref{thm:sharper_spectral_bound}, we have
\begin{equation}
|R_2|
\le
\max\left\{
\frac{1}{3}\sum_{\lambda_i\ge 0}\lambda_i^3,\,
\frac{1}{3}\sum_{\lambda_i<0}\frac{|\lambda_i|^3}{1+\lambda_i}
\right\}.
\label{eq:max_bound_for_corollary}
\end{equation}
We bound the two terms on the right-hand side separately.

First, since \(\Tr(\mB^\varepsilon)=0\), either \(\mB^\varepsilon=\mathbf{0}\) or \(\lambda_{\min}(\mB^\varepsilon)\le 0\). In either case,
\[
1+\lambda_{\min}(\mB^\varepsilon)\le 1.
\]
Therefore, for every eigenvalue with \(\lambda_i\ge 0\),
\[
\lambda_i^3
\le
\frac{\lambda_i^3}{1+\lambda_{\min}(\mB^\varepsilon)}
\le
\frac{|\lambda_i|^3}{1+\lambda_{\min}(\mB^\varepsilon)}.
\]
Hence
\begin{equation}
\frac{1}{3}\sum_{\lambda_i\ge 0}\lambda_i^3
\le
\frac{1}{3\bigl(1+\lambda_{\min}(\mB^\varepsilon)\bigr)}
\sum_{\lambda_i\ge 0} |\lambda_i|^3.
\label{eq:positive_part_bound}
\end{equation}

Second, for every eigenvalue with \(\lambda_i<0\), we have \(\lambda_i\ge \lambda_{\min}(\mB^\varepsilon)\), so
\[
1+\lambda_i \ge 1+\lambda_{\min}(\mB^\varepsilon)>0.
\]
Consequently,
\[
\frac{|\lambda_i|^3}{1+\lambda_i}
\le
\frac{|\lambda_i|^3}{1+\lambda_{\min}(\mB^\varepsilon)}.
\]
Summing over all negative eigenvalues gives
\begin{equation}
\frac{1}{3}\sum_{\lambda_i<0}\frac{|\lambda_i|^3}{1+\lambda_i}
\le
\frac{1}{3\bigl(1+\lambda_{\min}(\mB^\varepsilon)\bigr)}
\sum_{\lambda_i<0} |\lambda_i|^3.
\label{eq:negative_part_bound}
\end{equation}

Combining Equations~\ref{eq:positive_part_bound} and \ref{eq:negative_part_bound} with Equation~\ref{eq:max_bound_for_corollary}, we obtain
\[
|R_2|
\le
\frac{1}{3\bigl(1+\lambda_{\min}(\mB^\varepsilon)\bigr)}
\sum_{i=1}^n |\lambda_i|^3.
\]

For the second inequality, note that
\[
\sum_{i=1}^n |\lambda_i|^3
=
\sum_{i=1}^n |\lambda_i|^2 |\lambda_i|
\le
\left(\max_{1\le i\le n} |\lambda_i|\right)\sum_{i=1}^n \lambda_i^2
=
\|\mB^\varepsilon\|_2 \|\mB^\varepsilon\|_F^2.
\]
Substituting this into the previous inequality proves Equation~\ref{eq:norm_based_bound}.
\end{proof}

We now specialize the preceding pointwise analysis to the online covariance sequence generated by the algorithm.

\begin{corollary}[Online version]
\label{corr:OnlineErrorBound}
Consider the online \textit{Predictive Entropy Maximization} algorithm, and suppose that at time \(t\) the exponentially weighted output covariance \(\hat{\mC}^{\lambda}(t)\) is symmetric positive semidefinite. Let
\begin{equation}
\hat{\mD}^{\lambda,\varepsilon}(t)
:=
\hat{\mD}^{\lambda}(t)+\varepsilon \mI,
\qquad
\hat{\mC}^{\lambda}(t)
=
\hat{\mD}^{\lambda}(t)
+
\hat{\mO}^{\lambda}(t),
\nonumber
\end{equation}
and define the normalized off-diagonal matrix
\begin{equation}
\hat{\mB}^{\lambda,\varepsilon}(t)
:=
\bigl(\hat{\mD}^{\lambda,\varepsilon}(t)\bigr)^{-1/2}
\hat{\mO}^{\lambda}(t)
\bigl(\hat{\mD}^{\lambda,\varepsilon}(t)\bigr)^{-1/2}.
\nonumber
\end{equation}
Let \(\hat{\lambda}_1(t),\dots,\hat{\lambda}_n(t)\) denote the eigenvalues of \(\hat{\mB}^{\lambda,\varepsilon}(t)\). Then the remainder \(R_2(t)\) of the second-order surrogate in Equation~\ref{eq:DecorrelationObjectiveBSS} satisfies
\begin{equation}
-\frac{1}{3}
\sum_{\hat{\lambda}_i(t)<0}
\frac{|\hat{\lambda}_i(t)|^3}{1+\hat{\lambda}_i(t)}
\;\le\;
R_2(t)
\;\le\;
\frac{1}{3}
\sum_{\hat{\lambda}_i(t)\ge 0}
\hat{\lambda}_i(t)^3,
\nonumber
\end{equation}
and therefore
\begin{equation}
|R_2(t)|
\le
\max\left\{
\frac{1}{3}
\sum_{\hat{\lambda}_i(t)\ge 0}\hat{\lambda}_i(t)^3,\,
\frac{1}{3}
\sum_{\hat{\lambda}_i(t)<0}\frac{|\hat{\lambda}_i(t)|^3}{1+\hat{\lambda}_i(t)}
\right\}.
\nonumber
\end{equation}
\end{corollary}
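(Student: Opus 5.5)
The plan is to reduce Corollary~\ref{corr:OnlineErrorBound} to a pointwise instantiation of Theorems~\ref{thm:logdet_spectral_remainder_main} and~\ref{thm:sharper_spectral_bound}. Fix the time index \(t\) and set \(\mC:=\hat{\mC}^{\lambda}(t)\). By hypothesis this matrix is symmetric positive semidefinite, which is exactly the standing assumption of Theorem~\ref{thm:logdet_spectral_remainder_main}. The matrices \(\mD\), \(\mO\), \(\mD^\varepsilon\) and \(\mB^\varepsilon\) built from this \(\mC\) coincide by definition with \(\hat{\mD}^{\lambda}(t)\), \(\hat{\mO}^{\lambda}(t)\), \(\hat{\mD}^{\lambda,\varepsilon}(t)\) and \(\hat{\mB}^{\lambda,\varepsilon}(t)\). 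Hence the eigenvalues \(\lambda_i\) in the theorem are the \(\hat\lambda_i(t)\), and they satisfy \(\hat\lambda_i(t)>-1\).

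Next, I would check that the remainder \(R_2(t)\) defined implicitly by Equation~\ref{eq:DecorrelationObjectiveBSS} is the same as the \(R_2\) of Equation~\ref{eq:entrywise_exact_expansion} for this \(\mC\), up to sign. Equation~\ref{eq:DecorrelationObjectiveBSS} is written for \(-\log\det\), using \(\hat v_i(t)=C_{ii}\) and \(\hat c_{ij}(t)=C_{ij}\). Multiplying Equation~\ref{eq:entrywise_exact_expansion} by \(-1\) shows that the variance and normalized-covariance terms match term by term. The remainder of Equation~\ref{eq:DecorrelationObjectiveBSS} is therefore \(-R_2\) in the notation of the theorem.

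This sign bookkeeping is the only real obstacle. There are two ways to resolve it. The first is to adopt the convention, implicit in the main-text statement, that \(R_2(t)\) denotes the Taylor remainder of \(\log\det(\hat{\mC}^{\lambda}(t)+\varepsilon\mI)\) itself, with the displayed \(-\log\det\) identity carrying the remainder with the opposite sign. The second is to note that the two-sided bound just flips its endpoints under negation. I would state the first convention explicitly, so that \(R_2(t)=\sum_i r(\hat\lambda_i(t))\) with \(r(x)=\log(1+x)-x+\tfrac12x^2\).

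With this identification, Theorem~\ref{thm:sharper_spectral_bound} applies verbatim and gives the two-sided inequality with \(\lambda_i\) replaced by \(\hat\lambda_i(t)\). The bound on \(|R_2(t)|\) follows by taking the larger magnitude of the two endpoints. This magnitude bound is invariant under the sign convention, so it holds for either reading of Equation~\ref{eq:DecorrelationObjectiveBSS}. If desired, Corollary~\ref{corr:norm_based_bound} then yields the norm-based form in Equation~\ref{eq:TaylorBound_main}.
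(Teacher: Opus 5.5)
Your proposal is correct and follows the paper's proof, which simply instantiates Theorems~\ref{thm:logdet_spectral_remainder_main} and~\ref{thm:sharper_spectral_bound} with \(\mC=\hat{\mC}^{\lambda}(t)\). Your sign bookkeeping is also right, and the paper's one-line proof skips it: read literally, Equation~\ref{eq:DecorrelationObjectiveBSS} makes \(R_2(t)\) the negative of the theorem's \(R_2\), so the displayed two-sided inequality needs your \(\log\det\) convention (under the literal reading it holds only with the endpoints negated and swapped), while the bound on \(|R_2(t)|\) holds under either reading.
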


\begin{proof}
Apply Theorem~\ref{thm:logdet_spectral_remainder_main} and Theorem~\ref{thm:sharper_spectral_bound} with
\[
\mC=\hat{\mC}^{\lambda}(t),
\qquad
\mD=\hat{\mD}^{\lambda}(t),
\qquad
\mD^\varepsilon=\hat{\mD}^{\lambda,\varepsilon}(t),
\qquad
\mO=\hat{\mO}^{\lambda}(t).
\]
Since \(\hat{\mC}^{\lambda}(t)\) is symmetric positive semidefinite and \(\varepsilon>0\), all assumptions are satisfied.
\end{proof}

Corollary~\ref{corr:OnlineErrorBound} is the bound used in our numerical diagnostics. It shows that the approximation error is controlled by the spectrum of the regularized normalized off-diagonal covariance matrix \(\hat{\mB}^{\lambda,\varepsilon}(t)\). In particular, the surrogate is accurate when the diagonally normalized off-diagonal covariance is small and the eigenvalues of \(\hat{\mB}^{\lambda,\varepsilon}(t)\) stay well away from the singular value \(-1\).

\subsection{Batch surrogate optimality relative to the exact determinant objective}
\label{app:batch_surrogate_optimality}

The previous subsection controls the Taylor remainder pointwise for a fixed covariance matrix. We now use that control at the objective level. Although the surrogate in Section~\ref{sec:DecorrelationObjective} is introduced through the online exponentially weighted covariance \(\hat{\mC}^{\lambda}(t)\), the expansion in Theorem~\ref{thm:logdet_spectral_remainder_main} and the remainder bounds above are pointwise statements about an arbitrary covariance matrix. It is therefore useful to consider the corresponding \emph{batch} counterpart, obtained by replacing \(\hat{\mC}^{\lambda}(t)\) with the ordinary centered sample covariance. This auxiliary analysis does not address the convergence of the online dynamics; rather, it clarifies what minimizing the surrogate preserves from the exact regularized determinant objective itself.

Let \(\mathcal Y\) be a nonempty family of feasible output matrices \(\mY=[\vy(1),\dots,\vy(T)]\in\mathbb{R}^{n\times T}\) such that \(\vy(t)\in\mathcal P\) for all \(t\), and such that the centered sample covariance
\begin{equation}
    \hat{\mC}_\vy(\mY)
    :=
    \frac{1}{T}\sum_{t=1}^{T}
    \bigl(\vy(t)-\hat{\bm{\mu}}^{\lambda}(\mY)\bigr)
    \bigl(\vy(t)-\hat{\bm{\mu}}^{\lambda}(\mY)\bigr)^\top,
    \qquad
    \hat{\bm{\mu}}^{\lambda}(\mY)
    :=
    \frac{1}{T}\sum_{t=1}^{T}\vy(t),\nonumber
\end{equation}
is symmetric positive semidefinite for every \(\mY\in\mathcal Y\). For each \(\mY\in\mathcal Y\), write
\[
\mD(\mY):=\text{diag}\!\bigl(\hat{\mC}_\vy(\mY)\bigr),
\qquad
\mD^\varepsilon(\mY):=\mD(\mY)+\varepsilon \mI,
\qquad
\mO(\mY):=\hat{\mC}_\vy(\mY)-\mD(\mY),
\]
\[
\mB^\varepsilon(\mY):=\mD^\varepsilon(\mY)^{-1/2}\mO(\mY)\mD^\varepsilon(\mY)^{-1/2},
\qquad
\hat v_i(\mY):=[\hat{\mC}_\vy(\mY)]_{ii},
\qquad
\hat c_{ij}(\mY):=[\hat{\mC}_\vy(\mY)]_{ij}.
\]

We consider the exact batch determinant objective
\begin{equation}
    \mathcal{J}_{\mathrm{det}}^{\mathrm{batch}}(\mY)
    :=
    -\log\det\!\bigl(\hat{\mC}_\vy(\mY)+\varepsilon \mI\bigr),
    \label{eq:exact_batch_det_objective}
\end{equation}
and its second-order batch surrogate
\begin{equation}
    \mathcal{J}_{\mathrm{sur}}^{\mathrm{batch}}(\mY)
    :=
    -\sum_{i=1}^{n}\log \bigl(\hat v_i(\mY)+\varepsilon\bigr)
    +
    \frac{1}{2}
    \sum_{i=1}^{n}
    \sum_{\substack{j=1\\j\neq i}}^{n}
    \frac{\hat c_{ij}(\mY)^2}{\bigl(\hat v_i(\mY)+\varepsilon\bigr)\bigl(\hat v_j(\mY)+\varepsilon\bigr)}.\nonumber
\end{equation}

The next result shows that if the regularized normalized off-diagonal perturbation remains uniformly controlled over the feasible family \(\mathcal Y\), then every global minimizer of the surrogate is nearly optimal for the exact regularized determinant objective.

\begin{theorem}[Uniform approximation and exact-objective near-optimality]
\label{thm:batch_surrogate_near_optimality}
Define
\begin{equation}
    \bar{\varepsilon}_{\mathcal Y}
    :=
    \sup_{\mY\in\mathcal Y}
    \frac{\|\mB^\varepsilon(\mY)\|_F^2\|\mB^\varepsilon(\mY)\|_2}
         {3\bigl(1+\lambda_{\min}(\mB^\varepsilon(\mY))\bigr)}.
    \label{eq:uniform_remainder_constant_batch}
\end{equation}
Assume \(\bar{\varepsilon}_{\mathcal Y}<\infty\). Then:
\begin{enumerate}
    \item for every \(\mY\in\mathcal Y\),
    \begin{equation}
        \bigl|
        \mathcal{J}_{\mathrm{sur}}^{\mathrm{batch}}(\mY)
        -
        \mathcal{J}_{\mathrm{det}}^{\mathrm{batch}}(\mY)
        \bigr|
        \le
        \bar{\varepsilon}_{\mathcal Y};
        \label{eq:uniform_gap_batch_objectives}
    \end{equation}
    \item if \(\mY_{\mathrm{sur}}\) is a global minimizer of \(\mathcal{J}_{\mathrm{sur}}^{\mathrm{batch}}\) over \(\mathcal Y\), then
    \begin{equation}
        \mathcal{J}_{\mathrm{det}}^{\mathrm{batch}}(\mY_{\mathrm{sur}})
        \le
        \inf_{\mY\in\mathcal Y}\mathcal{J}_{\mathrm{det}}^{\mathrm{batch}}(\mY)
        +
        2\bar{\varepsilon}_{\mathcal Y}.
        \label{eq:exact_batch_near_optimality}
    \end{equation}
\end{enumerate}
\end{theorem}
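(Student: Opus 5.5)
The plan is to derive part 1 pointwise from the remainder analysis already established, and then obtain part 2 by a standard sandwich argument that uses the uniform gap twice.

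\textbf{Part 1.} Fix $\mY\in\mathcal Y$. By assumption $\hat{\mC}_\vy(\mY)$ is symmetric positive semidefinite, so Theorem~\ref{thm:logdet_spectral_remainder_main} applies with $\mC=\hat{\mC}_\vy(\mY)$. Its entrywise expansion, Equation~\ref{eq:entrywise_exact_expansion}, gives
\[
\log\det\bigl(\hat{\mC}_\vy(\mY)+\varepsilon\mI\bigr)=-\mathcal J^{\mathrm{batch}}_{\mathrm{sur}}(\mY)+R_2(\mY).
\]
This is exactly the identity
\[
\mathcal J^{\mathrm{batch}}_{\mathrm{det}}(\mY)-\mathcal J^{\mathrm{batch}}_{\mathrm{sur}}(\mY)=-R_2(\mY),
\]
where $R_2(\mY)$ is the remainder associated with $\mB^\varepsilon(\mY)$. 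Corollary~\ref{corr:norm_based_bound} then bounds $|R_2(\mY)|$ by
\[
\frac{\|\mB^\varepsilon(\mY)\|_F^2\|\mB^\varepsilon(\mY)\|_2}{3\bigl(1+\lambda_{\min}(\mB^\varepsilon(\mY))\bigr)}.
\]
This quantity is finite, since $\lambda_{\min}>-1$ by part 1 of Theorem~\ref{thm:logdet_spectral_remainder_main}, and it is at most the supremum $\bar{\varepsilon}_{\mathcal Y}$. That proves Equation~\ref{eq:uniform_gap_batch_objectives}.

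\textbf{Part 2.} Let $\mY_{\mathrm{sur}}$ be a global surrogate minimizer and let $\mY\in\mathcal Y$ be arbitrary. Chaining part 1, surrogate optimality, and part 1 again gives
\[
\mathcal J_{\mathrm{det}}(\mY_{\mathrm{sur}})\le \mathcal J_{\mathrm{sur}}(\mY_{\mathrm{sur}})+\bar{\varepsilon}_{\mathcal Y}\le \mathcal J_{\mathrm{sur}}(\mY)+\bar{\varepsilon}_{\mathcal Y}\le \mathcal J_{\mathrm{det}}(\mY)+2\bar{\varepsilon}_{\mathcal Y}.
\]
Taking the infimum over $\mY$ on the right yields Equation~\ref{eq:exact_batch_near_optimality}. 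If the infimum equals $-\infty$, the inequality holds trivially in the extended reals. In fact, the infimum is finite whenever the surrogate attains its minimum, because part 1 bounds $\mathcal J_{\mathrm{det}}$ below by $\mathcal J_{\mathrm{sur}}-\bar{\varepsilon}_{\mathcal Y}$.

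\textbf{Main point of care.} There is no real obstacle; the only thing to check is the sign convention. The surrogate is written for $-\log\det$, while the remainder is defined for $+\log\det$, so the two objectives differ by $-R_2$. Because the bound is on $|R_2|$, the sign does not affect the argument. The other requirement is that the uniform constant dominates the pointwise bound for every feasible $\mY$, which holds by the definition of $\bar{\varepsilon}_{\mathcal Y}$ as a supremum.
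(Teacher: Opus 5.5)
Your proposal is correct and follows the paper's proof essentially line for line: part 1 is the exact expansion in Equation~\ref{eq:entrywise_exact_expansion} combined with Corollary~\ref{corr:norm_based_bound}, and part 2 is the same sandwich argument, using the uniform gap once at $\mY_{\mathrm{sur}}$ and once at an arbitrary competitor. The only blemish is your aside that the bound would hold ``trivially'' if $\inf_{\mY\in\mathcal Y}\mathcal{J}_{\mathrm{det}}^{\mathrm{batch}}(\mY)=-\infty$, which is backwards because a real number is not $\le -\infty$; the case is simply vacuous, since your own chain gives $\inf_{\mY\in\mathcal Y}\mathcal{J}_{\mathrm{det}}^{\mathrm{batch}}(\mY)\ge \mathcal{J}_{\mathrm{det}}^{\mathrm{batch}}(\mY_{\mathrm{sur}})-2\bar{\varepsilon}_{\mathcal Y}>-\infty$.
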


\begin{proof}
By Equation~\ref{eq:entrywise_exact_expansion}, applied with \(\mC=\hat{\mC}_\vy(\mY)\), the difference between the surrogate and exact batch objectives is exactly the Taylor remainder:
\[
\mathcal{J}_{\mathrm{sur}}^{\mathrm{batch}}(\mY)
-
\mathcal{J}_{\mathrm{det}}^{\mathrm{batch}}(\mY)
=
R_2(\mY).
\]
Applying Corollary~\ref{corr:norm_based_bound} with \(\mC=\hat{\mC}_\vy(\mY)\) yields
\[
|R_2(\mY)|
\le
\frac{\|\mB^\varepsilon(\mY)\|_F^2\|\mB^\varepsilon(\mY)\|_2}
     {3\bigl(1+\lambda_{\min}(\mB^\varepsilon(\mY))\bigr)}
\le
\bar{\varepsilon}_{\mathcal Y},
\]
which proves Equation~\ref{eq:uniform_gap_batch_objectives}.

Now let \(\mY_{\mathrm{sur}}\) be a global minimizer of \(\mathcal{J}_{\mathrm{sur}}^{\mathrm{batch}}\) over \(\mathcal Y\). Then
\[
\mathcal{J}_{\mathrm{det}}^{\mathrm{batch}}(\mY_{\mathrm{sur}})
\le
\mathcal{J}_{\mathrm{sur}}^{\mathrm{batch}}(\mY_{\mathrm{sur}})
+
\bar{\varepsilon}_{\mathcal Y}
=
\inf_{\mY\in\mathcal Y}\mathcal{J}_{\mathrm{sur}}^{\mathrm{batch}}(\mY)
+
\bar{\varepsilon}_{\mathcal Y}.
\]
Using Equation~\ref{eq:uniform_gap_batch_objectives} once more,
\[
\inf_{\mY\in\mathcal Y}\mathcal{J}_{\mathrm{sur}}^{\mathrm{batch}}(\mY)
\le
\inf_{\mY\in\mathcal Y}
\left(
\mathcal{J}_{\mathrm{det}}^{\mathrm{batch}}(\mY)
+
\bar{\varepsilon}_{\mathcal Y}
\right)
=
\inf_{\mY\in\mathcal Y}\mathcal{J}_{\mathrm{det}}^{\mathrm{batch}}(\mY)
+
\bar{\varepsilon}_{\mathcal Y}.
\]
Combining the last two displays proves Equation~\ref{eq:exact_batch_near_optimality}.
\end{proof}

Theorem~\ref{thm:batch_surrogate_near_optimality} shows that the Taylor surrogate does more than approximate the exact regularized determinant objective pointwise: if the regularized normalized off-diagonal perturbation is uniformly small on the feasible family, then minimizing the surrogate yields an output matrix whose exact determinant objective value is close to the batch optimum.

\begin{corollary}[Explicit spectral form]
\label{cor:batch_surrogate_near_optimality_rho}
Assume that there exists \(\rho\in(0,1)\) such that
\begin{equation}
    \|\mB^\varepsilon(\mY)\|_2\le \rho,
    \qquad
    \forall\,\mY\in\mathcal Y.
    \label{eq:uniform_spectral_control_batch}
\end{equation}
Then
\begin{equation}
    \bar{\varepsilon}_{\mathcal Y}
    \le
    \frac{n\rho^3}{3(1-\rho)}.
    \nonumber
\end{equation}
Consequently, every global minimizer \(\mY_{\mathrm{sur}}\) of \(\mathcal{J}_{\mathrm{sur}}^{\mathrm{batch}}\) over \(\mathcal Y\) satisfies
\begin{equation}
    \mathcal{J}_{\mathrm{det}}^{\mathrm{batch}}(\mY_{\mathrm{sur}})
    \le
    \inf_{\mY\in\mathcal Y}\mathcal{J}_{\mathrm{det}}^{\mathrm{batch}}(\mY)
    +
    \frac{2n\rho^3}{3(1-\rho)}.
    \label{eq:exact_batch_near_optimality_rho}
\end{equation}
\end{corollary}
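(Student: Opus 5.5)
The plan is to bound each of the three factors in the definition of \(\bar{\varepsilon}_{\mathcal Y}\) in Eq.~\ref{eq:uniform_remainder_constant_batch} using the uniform spectral control \eqref{eq:uniform_spectral_control_batch}. Then we substitute the resulting constant into part 2 of Theorem~\ref{thm:batch_surrogate_near_optimality}. Fix an arbitrary \(\mY\in\mathcal Y\) and write \(\mB=\mB^\varepsilon(\mY)\), with eigenvalues \(\lambda_1,\dots,\lambda_n\).

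\textbf{Step 1 (spectral norm and Frobenius norm).} By hypothesis, \(\|\mB\|_2\le\rho\). Since \(\mB\) is symmetric (Theorem~\ref{thm:logdet_spectral_remainder_main}, item 1), \(\|\mB\|_F^2=\sum_i\lambda_i^2\). Each \(|\lambda_i|\le\|\mB\|_2\le\rho\), so \(\|\mB\|_F^2\le n\rho^2\). The numerator therefore satisfies \(\|\mB\|_F^2\|\mB\|_2\le n\rho^3\).

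\textbf{Step 2 (denominator).} We have \(\lambda_{\min}(\mB)\ge-\|\mB\|_2\ge-\rho\), so \(1+\lambda_{\min}(\mB)\ge 1-\rho>0\) because \(\rho<1\). This lower bound is what keeps the denominator uniformly away from zero. Combining the two steps gives
\[
\frac{\|\mB\|_F^2\|\mB\|_2}{3(1+\lambda_{\min}(\mB))}\le\frac{n\rho^3}{3(1-\rho)}.
\]
This holds for every \(\mY\in\mathcal Y\), so taking the supremum gives \(\bar{\varepsilon}_{\mathcal Y}\le n\rho^3/(3(1-\rho))\). In particular \(\bar{\varepsilon}_{\mathcal Y}<\infty\), so the hypothesis of Theorem~\ref{thm:batch_surrogate_near_optimality} holds.

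\textbf{Step 3 (near-optimality).} Applying Eq.~\ref{eq:exact_batch_near_optimality} and inserting the bound on \(\bar{\varepsilon}_{\mathcal Y}\) yields Eq.~\ref{eq:exact_batch_near_optimality_rho} directly.

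No step is a real obstacle. The only point needing care is that the denominator bound uses \(\lambda_{\min}\ge-\rho\), which requires symmetry of \(\mB\) so that its spectral norm equals its largest absolute eigenvalue. This symmetry is supplied by Theorem~\ref{thm:logdet_spectral_remainder_main}. The condition \(\rho<1\) then guarantees positivity of \(1-\rho\).
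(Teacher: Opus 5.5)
Your proof is correct and follows essentially the same route as the paper: you bound $\|\mB^\varepsilon(\mY)\|_F^2\le n\rho^2$ and $\lambda_{\min}(\mB^\varepsilon(\mY))\ge-\rho$ uniformly, substitute these into the definition of $\bar{\varepsilon}_{\mathcal Y}$, and then invoke part 2 of Theorem~\ref{thm:batch_surrogate_near_optimality}. You also note explicitly that the resulting bound verifies the finiteness hypothesis $\bar{\varepsilon}_{\mathcal Y}<\infty$, a small detail the paper leaves implicit.
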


\begin{proof}
Under Equation~\ref{eq:uniform_spectral_control_batch}, we have
\[
\lambda_{\min}(\mB^\varepsilon(\mY))\ge -\rho,
\qquad
\|\mB^\varepsilon(\mY)\|_F^2\le n\rho^2,
\qquad
\forall\,\mY\in\mathcal Y.
\]
Substituting these inequalities into Equation~\ref{eq:uniform_remainder_constant_batch} gives
\[
\bar{\varepsilon}_{\mathcal Y}
\le
\frac{n\rho^2\cdot \rho}{3(1-\rho)}
=
\frac{n\rho^3}{3(1-\rho)}.
\]
The bound in Equation~\ref{eq:exact_batch_near_optimality_rho} then follows directly from Theorem~\ref{thm:batch_surrogate_near_optimality}.
\end{proof}

\section{Supplementary on numerical experiments}
\label{appendix:Supplementary_Numerical_Experiments}

This section provides technical implementation details, extended experimental results, and additional information on the experimental protocols used throughout the paper.

\subsection{Performance evaluation metric}

To evaluate source recovery performance, we report the mean signal-to-noise ratio (mSNR) of the estimated sources. Let \(\vs_i\) and \(\Tilde{\vy}_i\) denote the \(i\)-th ground-truth source and its corresponding permutation- and sign-corrected estimate, respectively. The mSNR is defined as
\begin{equation}
    \mathrm{mSNR\ (dB)}
    =
    \frac{1}{n}\sum_{i=1}^n
    10\log_{10}
    \left(
        \frac{\|\vs_i\|_2^2}{\|\vs_i-\Tilde{\vy}_i\|_2^2}
    \right).
    \label{eq:SNR_Mean_Definition}
\end{equation}

To assess statistical variability, all reported metrics are aggregated over \(N=30\) independent realizations with different random mixing matrices and noise seeds. We report the sample mean \(\bar{\mu}\) together with the \(95\%\) confidence interval. Let \(\sigma\) denote the sample standard deviation and \(SE=\sigma/\sqrt{N}\) the standard error of the mean. The confidence interval is computed as
\begin{equation}
    \mathrm{CI}_{95\%}
    =
    \bar{\mu}
    \pm
    t_{\alpha/2,N-1}
    \left(
        \frac{\sigma}{\sqrt{N}}
    \right),
    \label{eq:CI_Definition}
\end{equation}
where \(t_{\alpha/2,N-1}\) is the critical value of the Student \(t\)-distribution with \(\alpha=0.05\) and \(N-1\) degrees of freedom. For \(N=30\), this gives \(t_{0.025,29}\approx 2.045\). All shaded envelopes in the reported plots correspond to this \(95\%\) confidence interval.

\subsection{Transform invariance in auditory source separation}

In Section \ref{sec:NumericalExperiments}, we applied \textit{Predictive Entropy Maximization} to audio mixtures in a sparse wavelet domain. The justification is straightforward. In the noise-free setting, let
\[
\mX=\mA\mS,
\qquad
\mX\in\mathbb{R}^{m\times T},
\qquad
\mS\in\mathbb{R}^{n\times T},
\]
and let \(\mathbf{\Phi}\in\mathbb{R}^{T\times T}\) denote a linear transform acting on the sample axis, such as the discrete wavelet transform. The transformed mixtures satisfy
\begin{align*}
    \tilde{\mX}
    &=
    \mX\mathbf{\Phi}
    =
    (\mA\mS)\mathbf{\Phi}
    =
    \mA(\mS\mathbf{\Phi})
    =
    \mA\tilde{\mS},
\end{align*}
where \(\tilde{\mS}:=\mS\mathbf{\Phi}\). Thus, the same mixing matrix \(\mA\) governs the linear relation in the transform domain. This preserves the mixing geometry while allowing us to exploit the approximate sparsity of wavelet coefficients.

Accordingly, we learn a separator \(\mW\) such that \(\mW\tilde{\mX}\approx \tilde{\mS}\) using the \(\gB_1\) architecture. After separation, the time-domain sources are recovered through the inverse transform:
\begin{align*}
    \hat{\mS}
    =
    (\mW\tilde{\mX})\mathbf{\Phi}^{-1}
    \approx
    \tilde{\mS}\mathbf{\Phi}^{-1}
    =
    \mS.
\end{align*}

For the specific trial illustrated in Figure~\ref{fig:sound_sep_appendix}, the mixing matrix was
\[
    \mA
    =
    \begin{bmatrix}
        -1.131 & 0.696 & -0.432 \\
         0.741 & -0.478 & 1.386 \\
         0.125 & 1.149 & -2.350 \\
         0.183 & -0.311 & -0.294 \\
         0.400 & 1.006 & 0.502
    \end{bmatrix},
\]
and the resulting source-wise SNR values were \(25.76\) dB, \(24.36\) dB, and \(30.54\) dB.

\begin{figure}[t]
    \centering
    \includegraphics[width=0.78\linewidth]{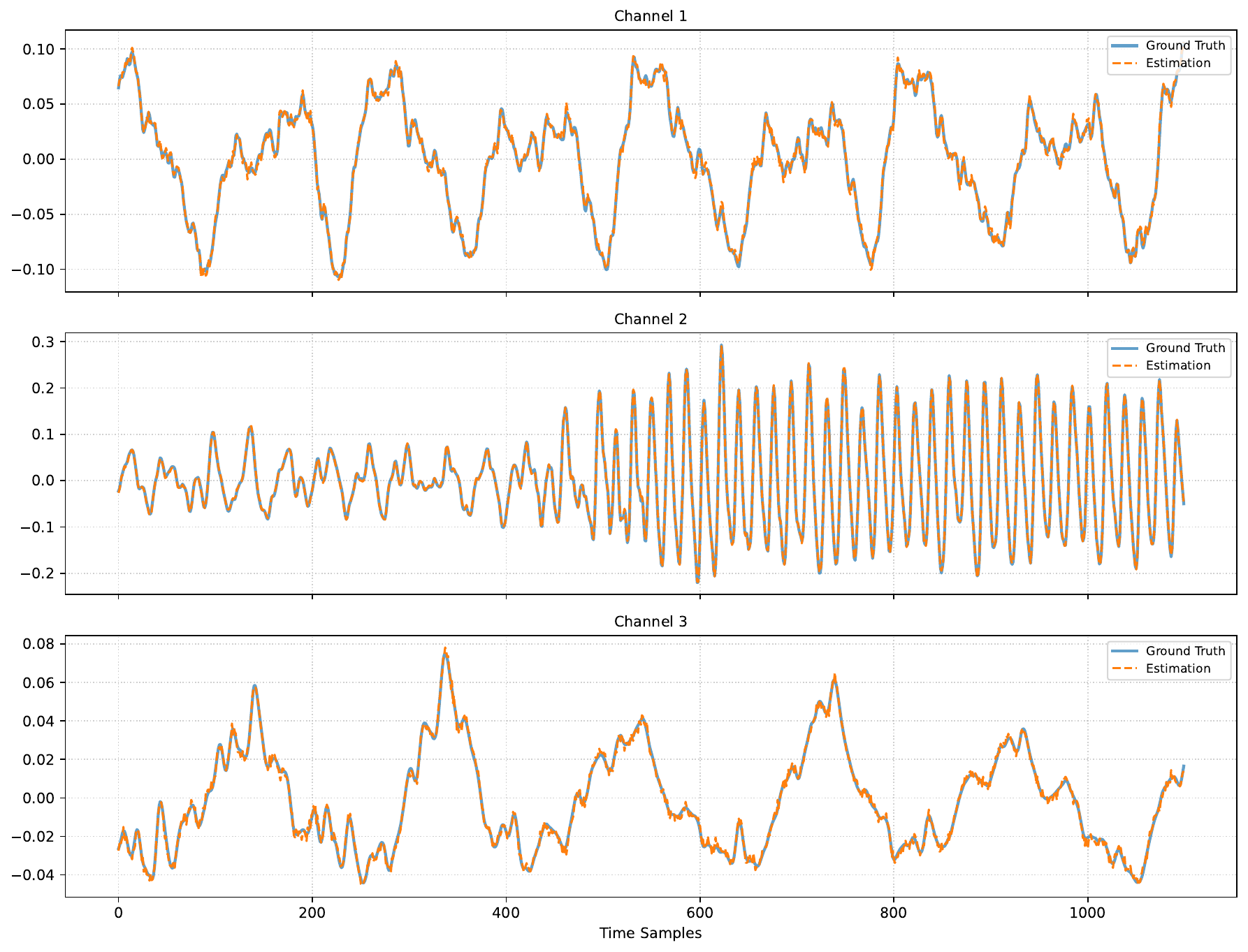}
    \caption{\textbf{Representative auditory source-separation result.}
    Temporal alignment between the ground-truth and recovered sources for one trial in the cocktail-party experiment described in Section~\ref{sec:NumericalExperiments}.}
    \label{fig:sound_sep_appendix}
\end{figure}

\subsection{Antisparse, nonnegative sparse, and simplex source separation examples}

Complementing the representative results in Figure~\ref{fig:numerical_experiments_main}, Figure~\ref{fig:numerical_experiments_appendix} reports three additional synthetic domains.

For the antisparse experiment, we use the domain \(\gB_{\mathrm{max}}\). As in the nonnegative antisparse case from the main text, sources are generated from a Copula-\(t\) model with four degrees of freedom, the source correlation level varies over \(\rho\in\{0,0.05,\dots,0.5\}\), and the input SNR is fixed at \(30\) dB. Figure~\ref{fig:antisparse_corr_appendix} shows that both \textit{PEM} and \textit{u-PEM} remain robust as correlation increases. In particular, they maintain strong performance relative to the biologically plausible baselines, while the batch \textit{LD-InfoMax} method again provides the strongest overall reference performance. While \textit{u-PEM} performs similarly to \textit{PEM}, its performance declines slightly faster with increasing source correlation.

For the nonnegative sparse experiment, we use the domain \(\gB_{1,+}\) and vary the observation noise over \(\mathrm{SNR}_{\mathrm{in}}\in\{30,25,\dots,5\}\) dB. Sources are generated uniformly from \(\gB_{1,+}\). Figure~\ref{fig:nnsparse_noisy_appendix} shows that \textit{PEM} and \textit{u-PEM} remain competitive with the batch \textit{LD-InfoMax} baseline across the full noise range, confirming that the surrogate-based online formulation extends naturally to nonnegative sparse domains as well.

For the simplex experiment, we use the domain \(\Delta\) and vary the observation noise over \(\mathrm{SNR}_{\mathrm{in}}\in\{30,25,\dots,5\}\) dB. Sources are generated uniformly from \(\Delta\). Figure~\ref{fig:simplex_noisy_appendix} shows that \textit{PEM} remains competitive with the batch \textit{LD-InfoMax} baseline across the full noise range, confirming that the surrogate-based online formulation extends beyond box-constrained and sparse domains to simplex-structured sources as well. u-PEM also follows the other models closely, although it performs noticeably worse at high SNR values.

\begin{figure}[t!]
    \centering
    \begin{subfigure}{0.32\linewidth}
        \centering
        \includegraphics[width=\linewidth]{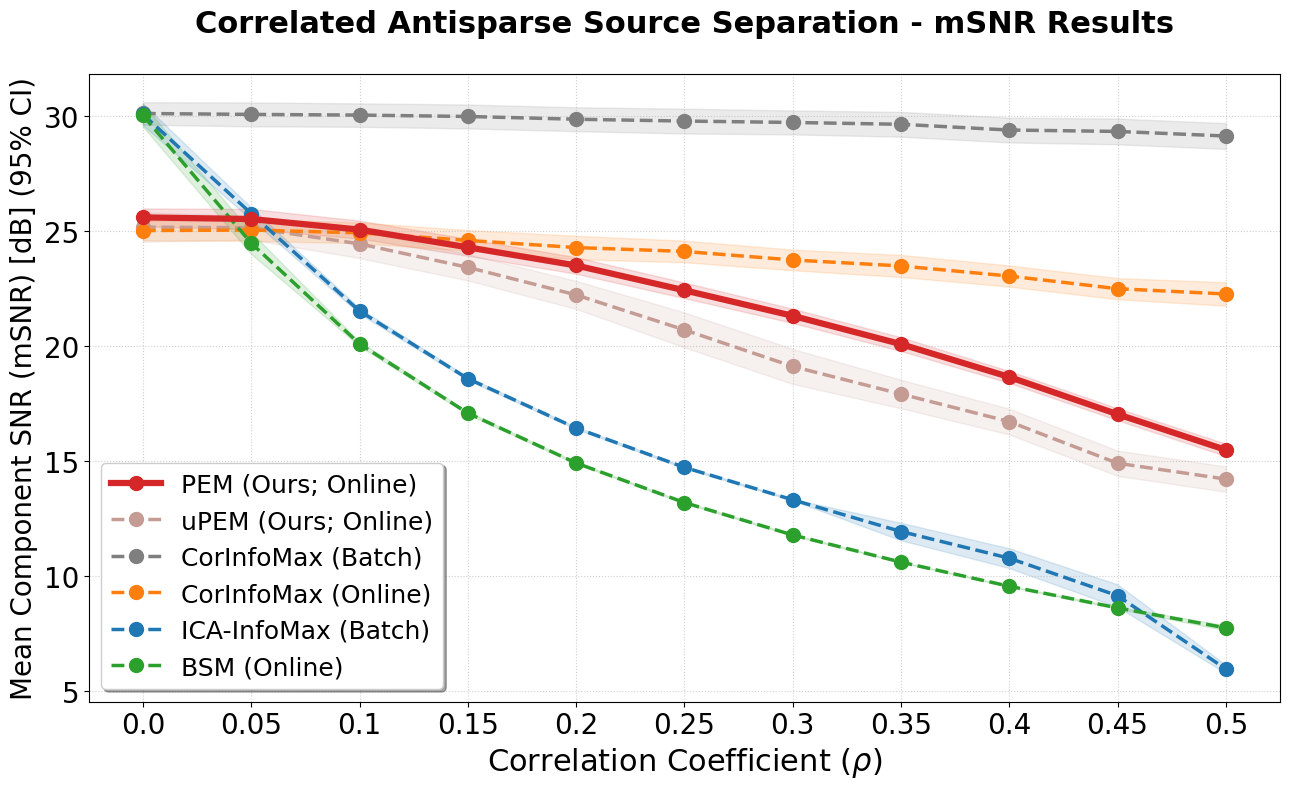}
        \caption{Correlated antisparse (\(\gB_{\mathrm{max}}\))}
\label{fig:antisparse_corr_appendix}
    \end{subfigure}
    \hfill
    \begin{subfigure}{0.32\linewidth}
        \centering
        \includegraphics[width=\linewidth]{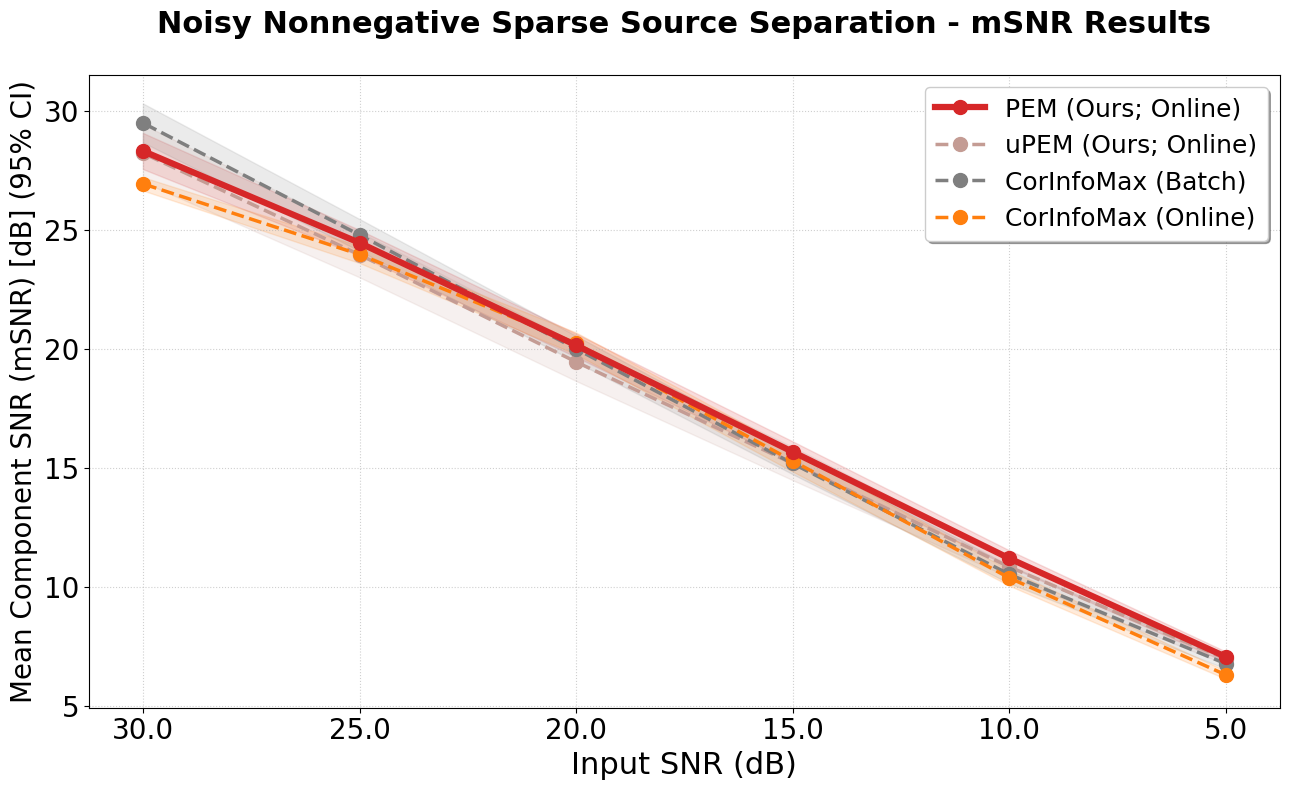}
        \caption{Noisy nonneg. sparse (\(\gB_{1,+}\))}
        \label{fig:nnsparse_noisy_appendix}
    \end{subfigure}
    \hfill
    \begin{subfigure}{0.32\linewidth}
        \centering
        \includegraphics[width=\linewidth]{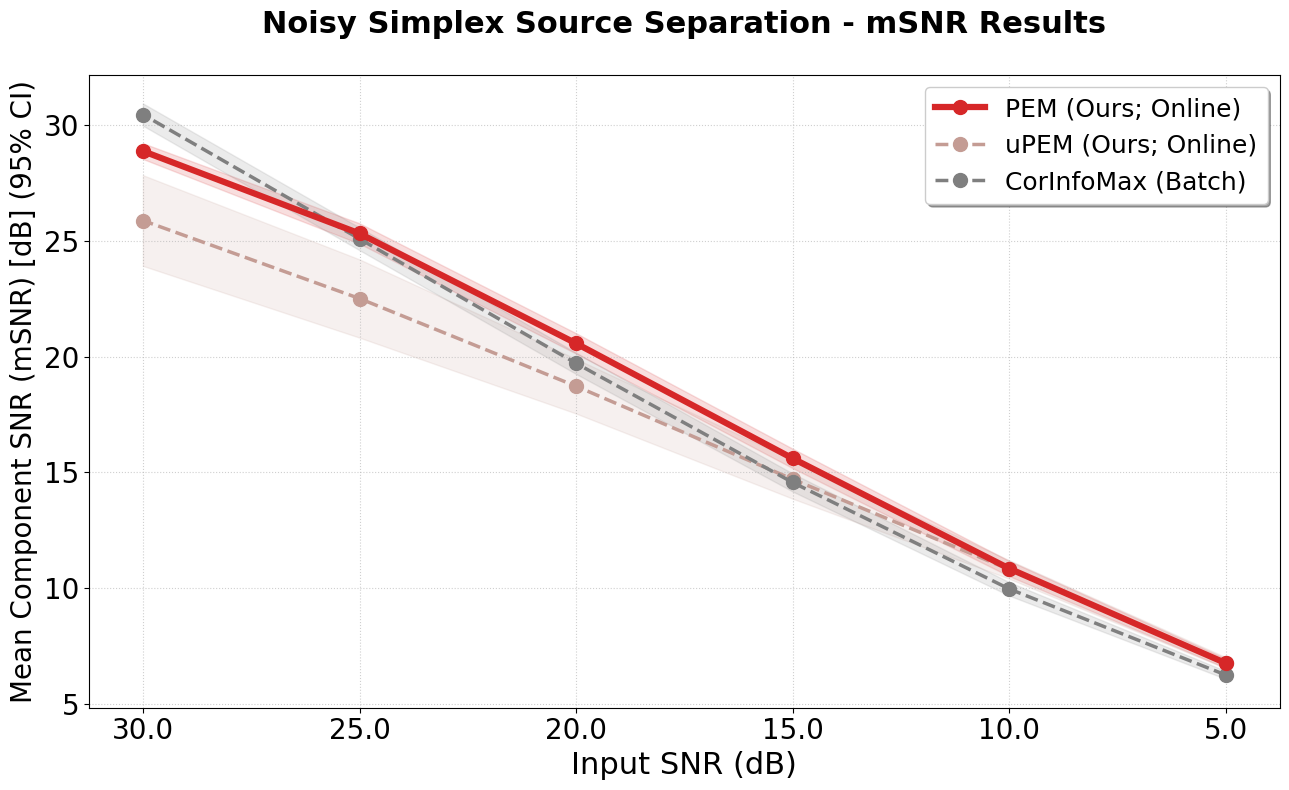}
        \caption{Noisy simplex (\(\Delta\))}
        \label{fig:simplex_noisy_appendix}
    \end{subfigure}
    
    \caption{\textbf{Additional performance comparisons.}
    \textbf{(a)} Mean component SNR (mSNR) as a function of the source correlation level \(\rho\) for antisparse sources.
    \textbf{(b)} Mean component SNR as a function of the input SNR for nonnegative sparse sources.
    \textbf{(c)} Mean component SNR as a function of the input SNR for simplex sources.
    These complementary experiments confirm the same pattern observed in the main text: \textit{Predictive Entropy Maximization} remains robust under source correlation in antisparse domains and remains competitive with batch Det-Max baselines under increasing observation noise in nonnegative sparse- and simplex-structured domains. Shaded envelopes show \(95\%\) confidence intervals over \(30\) independent realizations.}
    \label{fig:numerical_experiments_appendix}
\end{figure}

\subsection{Transient diagnostics for the Taylor approximation}
\label{appendix:taylor_transient_diagnostics}

Figure~\ref{fig:taylor_transient_appendix} reports the time evolution of the exact Taylor remainder and the corresponding spectral upper bound for two representative correlation levels, \(\rho=0\) and \(\rho=0.4\). In both cases, the approximation error decreases during training as the dynamics suppress normalized off-diagonal covariance structure. The correlated case exhibits a larger error scale, as expected, but remains controlled by the theoretical bound throughout training. These trajectories complement the aggregate Taylor-surrogate diagnostics reported in the main text.

\begin{figure}[t]
    \centering
    \begin{subfigure}{0.48\linewidth}
        \centering
        \includegraphics[width=\linewidth]{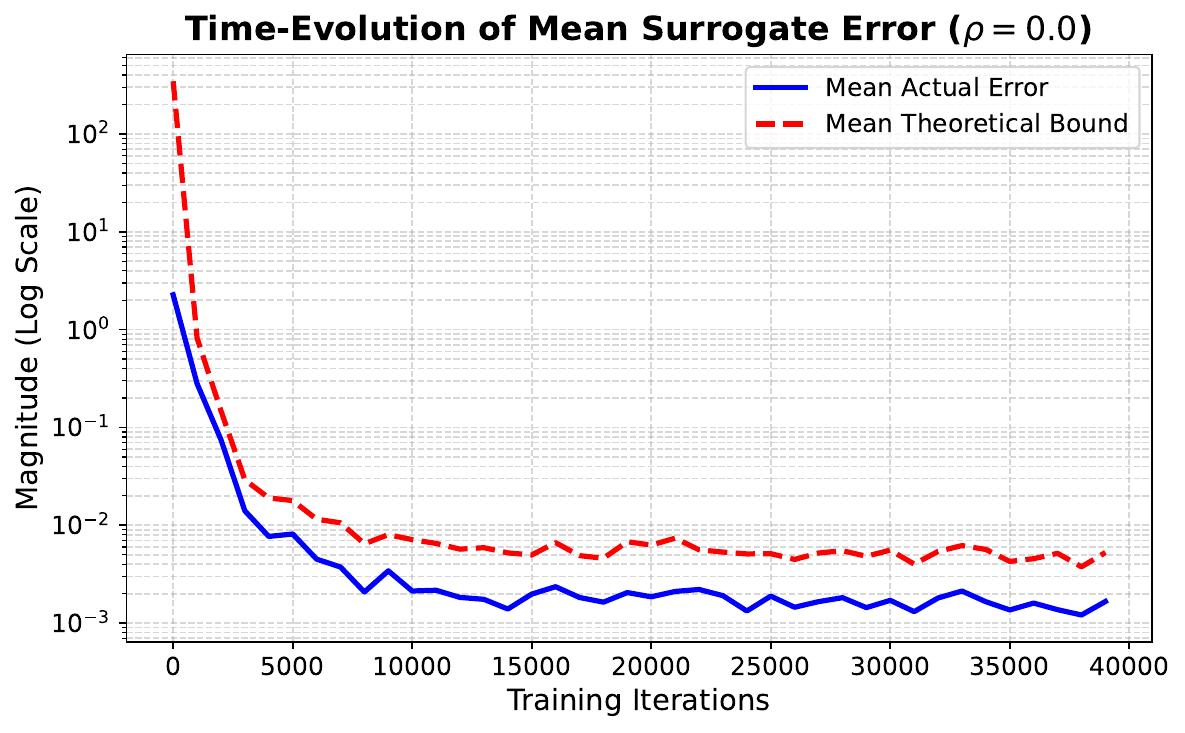}
        \caption{Uncorrelated sources (\(\rho=0\))}
        \label{fig:taylor_transient_rho0}
    \end{subfigure}
    \hfill
    \begin{subfigure}{0.48\linewidth}
        \centering
        \includegraphics[width=\linewidth]{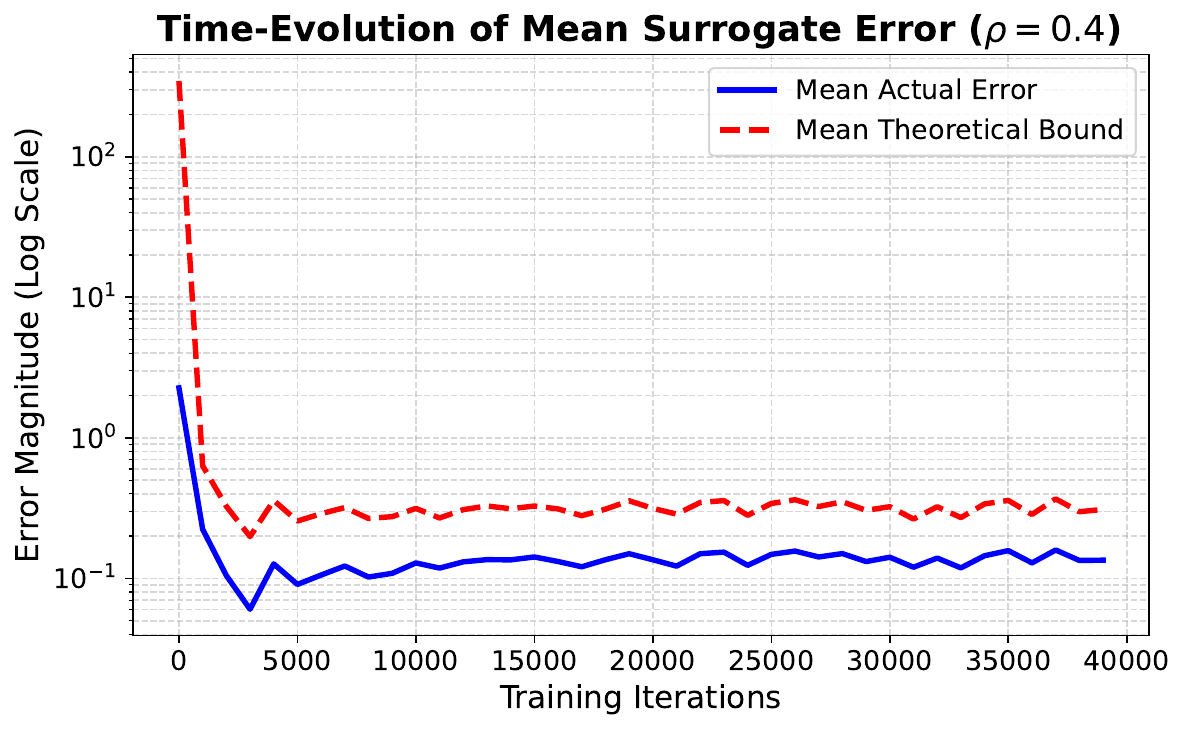}
        \caption{Correlated sources (\(\rho=0.4\))}
        \label{fig:taylor_transient_rho04}
    \end{subfigure}
    \caption{\textbf{Transient Taylor-surrogate diagnostics.}
    Exact Taylor remainder and corresponding spectral upper bound over training iterations for two representative source-correlation levels.}
    \label{fig:taylor_transient_appendix}
\end{figure}

\subsection{Simulation hyperparameters}
\label{appendix:simulation_hyperparameters}

This section summarizes the hyperparameters used in all reported simulations. We first describe the generic implementation choices for \textit{Predictive Entropy Maximization}, and then list the experiment-specific settings.

\paragraph{Generic implementation details for Predictive Entropy Maximization.}
All \textit{Predictive Entropy Maximization} experiments were run in the online setting with a single streaming pass over the data. Unless otherwise stated, we used the regularization constant $\varepsilon = 10^{-5}$. For each incoming sample $x(t)$, the fast inference dynamics were initialized at $y(t;0)=0$. If no custom initialization was supplied, the feedforward matrix was initialized as
\begin{equation}
W(0)=I+\xi_W,
\end{equation}
where $I$ denotes the rectangular identity and $\xi_W$ has i.i.d.\ Gaussian entries with standard deviation $0.01$. Likewise, if not specified explicitly, the running mean and covariance states were initialized as
\begin{equation}
\hat\mu_y(0)=0,
\qquad
\hat C_y(0)=0.2\,I.
\end{equation}

The slow feedforward update uses the step size schedule selected by \texttt{lr\_W\_rule}. In the experiments below, we used either:
\begin{equation}
\alpha_W(t)=\alpha_W^0
\qquad \text{(\texttt{constant})},
\end{equation}
\begin{equation}
\alpha_W(t)=\max\left\{\frac{\alpha_W^0}{t/T_W+1},\,10^{-8}\right\}
\qquad \text{(\texttt{divide\_by\_index})},
\end{equation}
or
\begin{equation}
\alpha_W(t)=\max\left\{\frac{\alpha_W^0}{1+\log(t/T_W+2)},\,10^{-8}\right\}
\qquad \text{(\texttt{divide\_by\_log\_index})},
\end{equation}
where $T_W$ is the decay divider.

The fast neural inference loop uses the step size schedule selected by \texttt{neural\_lr\_rule}. In the experiments below, we used either:
\begin{equation}
\eta_y(\tau)=\eta_y^{0}
\qquad \text{(\texttt{constant})},
\end{equation}
\begin{equation}
\eta_y(\tau)=\max\left\{\frac{\eta_y^{0}}{\tau+1},\,\eta_y^{\min}\right\}
\qquad \text{(\texttt{divide\_by\_loop\_index})},
\end{equation}
or
\begin{equation}
\eta_y(\tau)=\max\left\{\frac{\eta_y^{0}}{\tau T_y+1},\,\eta_y^{\min}\right\}
\qquad \text{(\texttt{divide\_by\_slow\_loop\_index})},
\end{equation}
where $T_y$ is the neural learning-rate decay divider. For sparse, nonnegative sparse, and simplex domains, the scalar threshold/inhibitory variable $\lambda_L$ was updated with step size $\eta_\lambda$, implemented in code as \texttt{stlambda\_lr}.

\paragraph{Predictive Entropy Maximization hyperparameters.}
The synthetic source-separation experiments use $n=5$ sources, $m=10$ mixtures, and $T=10^5$ samples. The auditory experiment uses $n=3$ sources and $m=5$ mixtures. The sparse receptive-field experiment uses $n=m=144$.

\begin{itemize}
    \item \textbf{Correlated antisparse sources ($\mathcal B_{\mathrm{max}}$).}
    We used
    $\lambda_{\mathrm{lat}}=0.99$,
    $\gamma_{\mathrm{pred}}=250$,
    $\alpha_W^0=5\times 10^{-2}$,
    $\eta_y^{0}=0.5$,
    $\eta_y^{\min}=10^{-6}$,
    $\tau_{\max}=250$,
    and tolerance $10^{-7}$.
    The feedforward learning rate used \texttt{divide\_by\_index} with divider $T_W=5000$, and the neural step size used \texttt{divide\_by\_loop\_index}. No threshold variable was used.

    \item \textbf{Correlated nonnegative antisparse sources ($\mathcal B_{\mathrm{max},+}$).}
    We used
    $\lambda_{\mathrm{lat}}=0.95$,
    $\gamma_{\mathrm{pred}}=750$,
    $\alpha_W^0=5\times 10^{-2}$,
    $\eta_y^{0}=0.05$,
    $\eta_y^{\min}=10^{-4}$,
    $\tau_{\max}=500$,
    $ \varepsilon =10^{-4} $
    and tolerance $10^{-6}$.
    The feedforward learning rate used \texttt{divide\_by\_index} with divider $T_W=20000$, and the neural step size used \texttt{divide\_by\_loop\_index}. In this experiment we overrode the default initialization and set
    \begin{equation}
    \hat C_y(0)=2I,
    \qquad
    W(0)=0.01\,I+\frac{1}{15}\,\xi_W,
    \end{equation}
    with $\xi_W$ i.i.d.\ Gaussian.

    \item \textbf{Noisy sparse sources ($\mathcal B_1$).}
    We used
    $\lambda_{\mathrm{lat}}=0.99$,
    $\gamma_{\mathrm{pred}}=150$,
    $\alpha_W^0=5\times 10^{-2}$,
    $\eta_y^{0}=0.05$,
    $\eta_y^{\min}=10^{-4}$,
    $\eta_\lambda=0.5$,
    $\tau_{\max}=100$,
    and tolerance $10^{-6}$.
    The feedforward learning rate used \texttt{divide\_by\_index} with divider $T_W=5000$, and the neural step size used \texttt{divide\_by\_loop\_index}.
    
    \item \textbf{Noisy nonnegative sparse sources ($\mathcal B_{1,+}$).}
    We used
    $\lambda_{\mathrm{lat}}=0.99$,
    $\gamma_{\mathrm{pred}}=250$,
    $\alpha_W^0=5\times 10^{-2}$,
    $\eta_y^{0}=0.1$,
    $\eta_y^{\min}=10^{-4}$,
    $\eta_\lambda=0.5$,
    $\tau_{\max}=100$,
    and tolerance $10^{-7}$.
    The feedforward learning rate used \texttt{divide\_by\_index} with divider $T_W=2000$, and the neural step size used \texttt{divide\_by\_loop\_index}.
    \item \textbf{Noisy simplex sources ($\Delta$).}
    We used
    $\lambda_{\mathrm{lat}}=0.99$,
    $\gamma_{\mathrm{pred}}=150$,
    $\alpha_W^0=5\times 10^{-2}$,
    $\eta_y^{0}=0.1$,
    $\eta_y^{\min}=10^{-4}$,
    $\eta_\lambda=0.05$,
    $\tau_{\max}=100$,
    and tolerance $10^{-7}$.
    The feedforward learning rate used \texttt{divide\_by\_log\_index} with divider $T_W=5000$, and the neural step size used \texttt{divide\_by\_loop\_index}.

    \item \textbf{Auditory source separation.}
    The sparse-domain audio experiment was run after transforming the mixtures to a wavelet domain using a db4 wavelet with decomposition level $3$. The \textit{Predictive Entropy Maximization} hyperparameters were
    $\lambda_{\mathrm{lat}}=0.95$,
    $\gamma_{\mathrm{pred}}=150$,
    $\alpha_W^0=9.5\times 10^{-1}$,
    $\eta_y^{0}=0.01$,
    $\eta_y^{\min}=10^{-4}$,
    $\eta_\lambda=0.5$,
    $\tau_{\max}=100$,
    and tolerance $10^{-6}$.
    The feedforward learning rate used \texttt{divide\_by\_index} with divider $T_W=2000$, and the neural step size used \texttt{divide\_by\_loop\_index}.

    \item \textbf{Sparse receptive-field learning on natural images.}
    For the $12\times 12$ patch experiment we used the sparse-domain architecture with
    \begin{equation}
    \lambda_{\mathrm{lat}}=1-\frac{10^{-3}}{7},
    \qquad
    \gamma_{\mathrm{pred}}=3,
    \qquad
    \alpha_W^0=10^{-4},
    \end{equation}
    together with
    $\eta_y^{0}=0.05$,
    $\eta_y^{\min}=10^{-6}$,
    $\eta_\lambda=5\times 10^{-2}$,
    $\tau_{\max}=500$,
    and tolerance $10^{-6}$.
    The feedforward learning rate was kept constant, while the neural step size used \texttt{divide\_by\_slow\_loop\_index} with divider $T_y=100$.
    We also used custom initializations
    \begin{equation}
    \hat C_y(0)=I+\frac{1}{250}\,\xi_C,
    \qquad
    W(0)=I+\frac{1}{250}\,\xi_W,
    \end{equation}
    where $\xi_C$ and $\xi_W$ have i.i.d.\ Gaussian entries.
\end{itemize}

\subsection{Ablation Studies}
\label{appendix:ablation_studies}

We report two additional ablations to better understand the robustness of \textit{Predictive Entropy Maximization}. The first studies the sensitivity of the method to the distribution of the mixing-matrix entries. The second studies how performance varies with the number of mixtures while keeping the number of sources fixed.

\subsubsection{Sensitivity to the mixing-matrix distribution}

We first study whether the method is sensitive to the particular law used to generate the mixing matrix. For this ablation, we consider the antisparse and nonnegative antisparse domains, fix the source correlation level at \(\rho=0\), and fix the input SNR at \(30\) dB. In both cases, we use \(n=5\) sources, \(m=10\) mixtures, and \(T=10^5\) samples. Results are aggregated over \(30\) random seeds.

For each seed, we keep the source realization and the observation noise fixed and vary only the distribution of the entries of the mixing matrix \(\mA\). We consider five centered distributions, all scaled to have unit variance:
\[
\mathcal N(0,1),\qquad
\mathcal U(-\sqrt{3},\sqrt{3}),\qquad
\mathrm{Laplace}(0,1/\sqrt{2}),\qquad
\mathrm{Rad}(\pm 1),\qquad
\sqrt{3/5}\,t_5.
\]
Here, \(\mathrm{Rad}(\pm1)\) denotes a Rademacher random variable taking values \(\pm1\) with equal probability, and \(t_5\) denotes a Student-\(t\) random variable with \(5\) degrees of freedom. The scaling by \(\sqrt{3/5}\) ensures unit variance, so that the comparison isolates the effect of distributional shape rather than trivial scale differences.

Figure~\ref{fig:ablation_mixing_distribution} shows the resulting box plots for the mean component SNR. Across both domains, the central performance remains broadly stable across the five mixing laws. This indicates that the behavior of \textit{Predictive Entropy Maximization} is not tied to Gaussian mixing matrices and transfers well to heavier-tailed, bounded, and discrete mixing distributions. The nonnegative antisparse case exhibits slightly larger variability across seeds, but again no systematic degradation under non-Gaussian or discrete mixing is observed.

\begin{figure}[t]
    \centering
    \begin{subfigure}{0.48\linewidth}
        \centering
        \includegraphics[width=\linewidth]{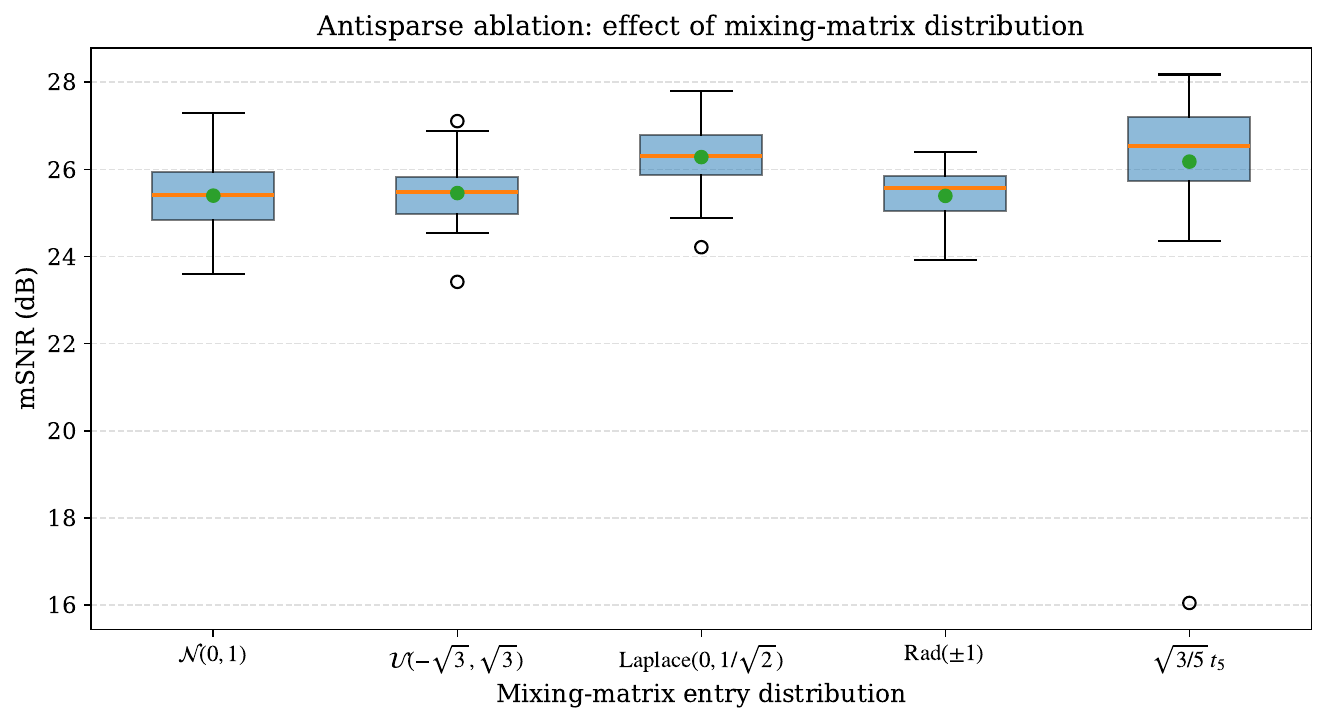}
        \caption{Antisparse sources (\(\mathcal{B}_{\mathrm{max}}\))}
        \label{fig:ablation_mixing_distribution_antisparse}
    \end{subfigure}
    \hfill
    \begin{subfigure}{0.48\linewidth}
        \centering
        \includegraphics[width=\linewidth]{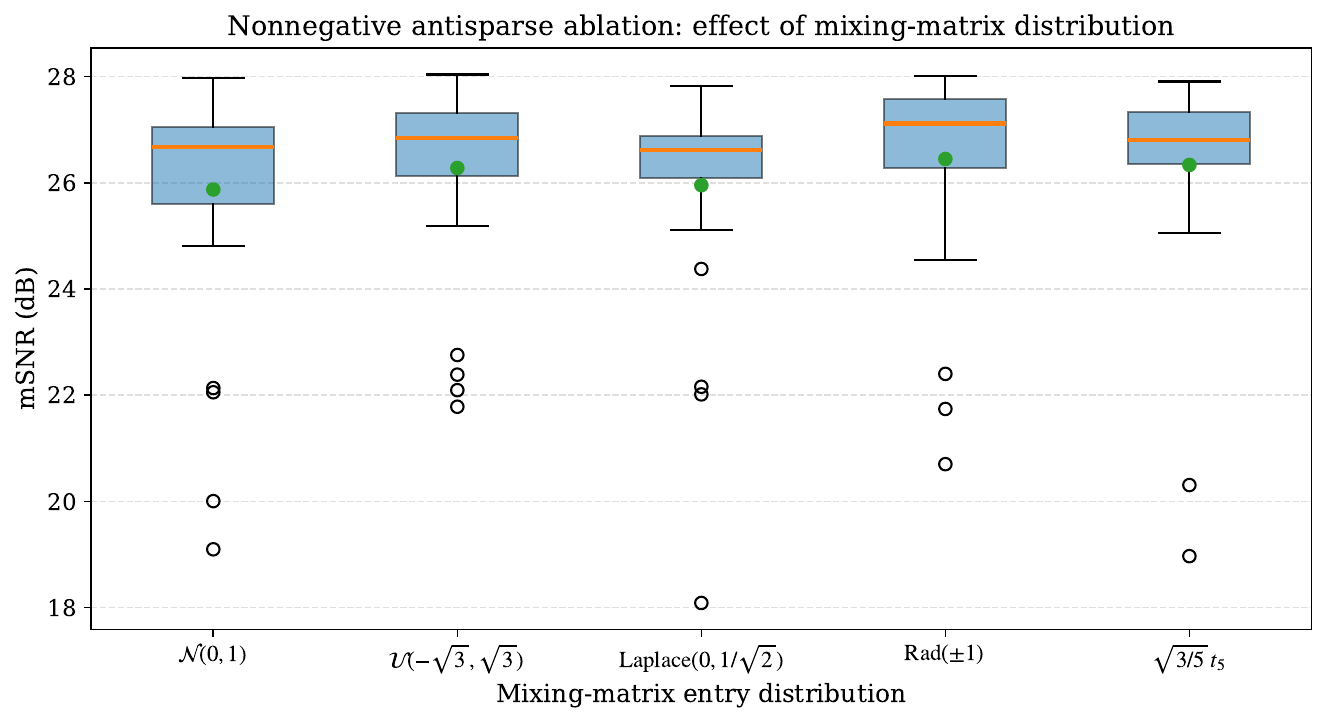}
        \caption{Nonnegative antisparse sources (\(\mathcal{B}_{\mathrm{max},+}\))}
        \label{fig:ablation_mixing_distribution_nnantisparse}
    \end{subfigure}
    \caption{\textbf{Ablation with respect to the distribution of the mixing-matrix entries.}
    Box plots summarize the distribution of mSNR over \(30\) seeds in the uncorrelated setting \((\rho=0)\) at input SNR \(30\) dB. All candidate mixing laws are centered and scaled to unit variance:
    \(\mathcal N(0,1)\), \(\mathcal U(-\sqrt{3},\sqrt{3})\), \(\mathrm{Laplace}(0,1/\sqrt{2})\), \(\mathrm{Rad}(\pm1)\), and \(\sqrt{3/5}\,t_5\). The performance of \textit{Predictive Entropy Maximization} remains broadly stable across these choices, indicating that the method is not sensitive to the precise distributional shape of the mixing coefficients.}
    \label{fig:ablation_mixing_distribution}
\end{figure}

\subsubsection{Effect of the number of mixtures}

We next study how performance varies with the number of observed mixtures. For this ablation, we consider the sparse and simplex domains, fix the number of sources at \(n=5\), fix the input SNR at \(30\) dB, and vary the number of mixtures over
\[
m \in \{7,8,9,10,11,12,13\}.
\]
As before, each experiment uses \(T=10^5\) samples and is repeated over \(30\) random seeds.

For a fixed seed, we keep the source realization fixed and generate a single Gaussian mixing matrix of size \(13\times 5\). The experiment with \(m\) mixtures then uses the first \(m\) rows of this matrix. This nested construction makes the comparison more controlled, since increasing the number of mixtures corresponds to adding new observation channels rather than resampling a completely different mixing matrix.

Figure~\ref{fig:ablation_number_of_mixtures} reports the mean mSNR together with a standard-error envelope across seeds. In both domains, performance improves as the number of mixtures increases. This is consistent with the intuition that additional observation channels make the inverse problem easier by providing richer linear views of the same latent sources. The trend is visible already in the sparse case and is even more pronounced in the simplex setting. These results confirm that the method scales favorably with observation dimension in the overdetermined regime.

\begin{figure}[t]
    \centering
    \begin{subfigure}{0.48\linewidth}
        \centering
        \includegraphics[width=\linewidth]{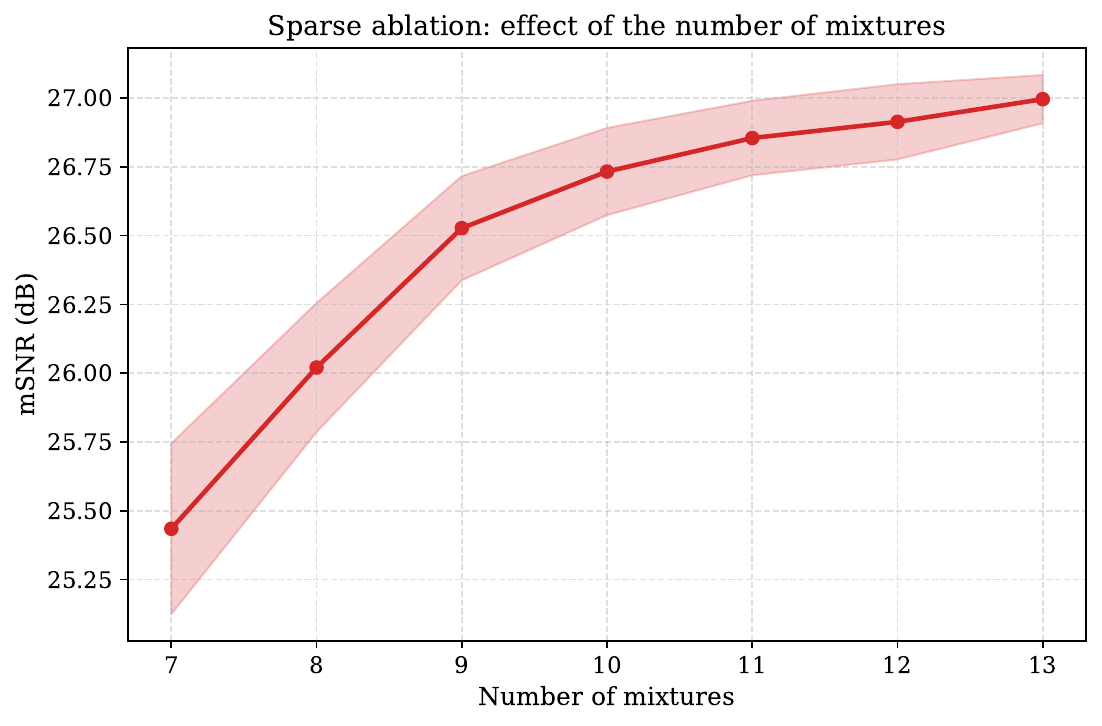}
        \caption{Sparse sources (\(\mathcal{B}_1\))}
        \label{fig:ablation_number_of_mixtures_sparse}
    \end{subfigure}
    \hfill
    \begin{subfigure}{0.48\linewidth}
        \centering
        \includegraphics[width=\linewidth]{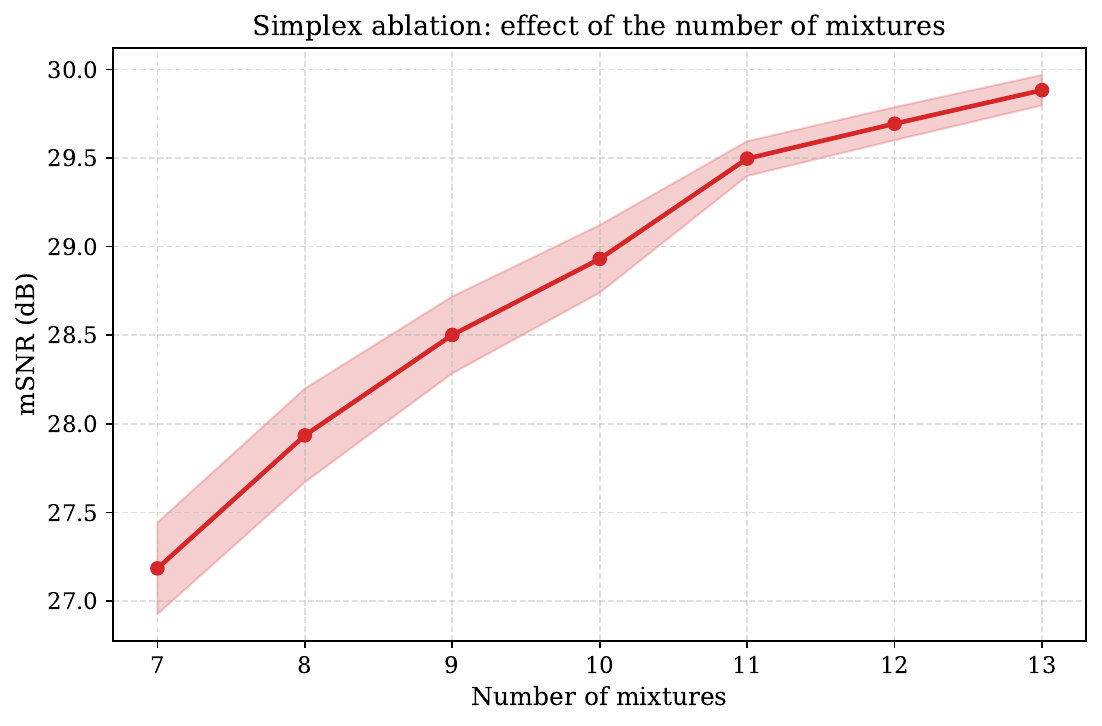}
        \caption{Simplex sources (\(\Delta\))}
        \label{fig:ablation_number_of_mixtures_simplex}
    \end{subfigure}
    \caption{\textbf{Ablation with respect to the number of mixtures.}
    Mean mSNR is plotted against the number of mixtures \(m\in\{7,\dots,13\}\) while keeping the number of sources fixed at \(n=5\). The shaded envelopes show one standard error over \(30\) seeds. For each seed, a single Gaussian \(13\times 5\) mixing matrix is sampled and the experiment with \(m\) mixtures uses its first \(m\) rows, so the comparison isolates the effect of adding observation channels. In both the sparse and simplex domains, performance improves as the number of mixtures increases.}
    \label{fig:ablation_number_of_mixtures}
\end{figure}

\subsection{Unnormalized Predictive Entropy Maximization}
\label{sec:SimplifiedPEM_Appendix}
The \textit{Predictive Entropy Maximization} (\textit{PEM}) model includes cross-covariance terms that are variance normalized in both the energy function and the activity gradient update (Eqs.~\ref{eq:full_cost_J}-\ref{eq:output_inference_direction}). If we remove this normalization term, the objective becomes
\begin{equation}
    \mathcal{J}_t
    =
    - \sum_{i=1}^{n} \log (\hat{v}_i(t)+\varepsilon)
    +
    \frac{1}{2}
    \gamma_{lateral}
    \sum_{i=1}^{n}
    \sum_{\substack{j=1 \\ j\neq i}}^{n}
        \hat{c}_{ij}(t)^{2}
    +
    \gamma \|\vy(t)-\mW(t-1)\vx(t)\|_2^2,
    \label{eq:simpler_cost_J}
\end{equation}
where we also introduced an additional hyperparameter \( \gamma_{lateral}\) that controls the strength of the covariance term, corresponding to lateral connections. We call this unnormalized variant of the model \textit{u-PEM}. Furthermore, the coordinate gradient update becomes (cf. Eq.~\ref{eq:output_inference_direction})
\begin{equation}
    d_k(t,\tau)
    =
    \frac{\bar y_k(t,\tau)}{\hat{v}_k(t)+\varepsilon}
    -
    \gamma_{lateral}\sum_{\substack{j=1\\j\neq k}}^{n}
    \hat{c}_{kj}(t)\,\bar y_j(t,\tau)
    -
    \gamma \left(y_k(t, \tau)-\sum_{\ell=1}^{m}W_{k\ell}(t - 1)x_\ell(t)\right).
    \label{eq:simpler_inference_direction}
\end{equation}
 The update is now directly given by the product of the lateral weight and the neural activity, with no variance scaling, making \textit{u-PEM} arguably even more biologically plausible than \textit{PEM}. The removal of the variance normalization is further motivated by the observation that the mean variance of the \textit{PEM} model appears to stabilise during training, as shown in Figure~\ref{fig:vicreg}. Benchmark results for \textit{u-PEM} and \textit{PEM} are reported in Figures~\ref{fig:numerical_experiments_main} and \ref{fig:numerical_experiments_appendix}. The hyperparameter values \(\gamma_{lateral}\) for the antisparse, non-negative antisparse, sparse, non-negative sparse and simplex domains were 10, 300, 50, 3200 and 100, respectively.
\begin{figure}[t]
    \centering
    \begin{subfigure}{0.48\linewidth}
        \centering
        \includegraphics[width=\linewidth]{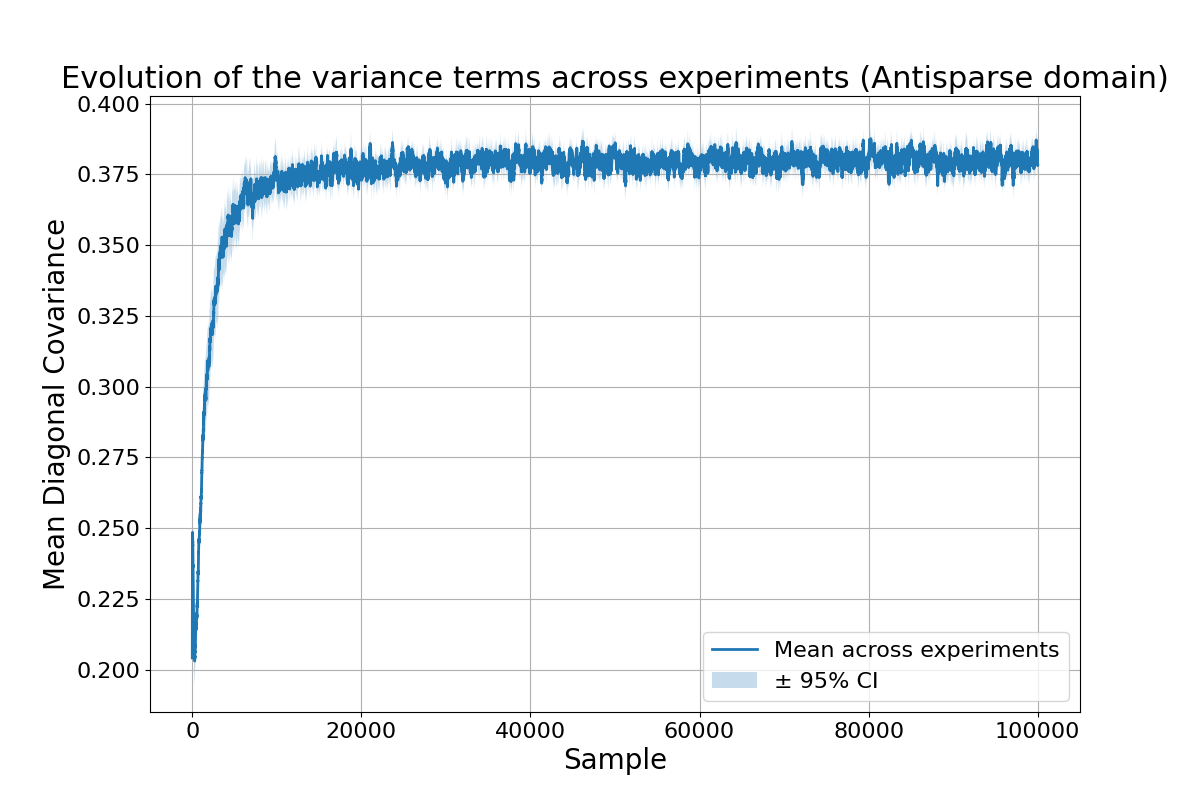}
        \caption{Antisparse sources (\(\mathcal{B}_{\mathrm{max}}\))}
        \label{fig:vicreg_antisparse_exp}
    \end{subfigure}
    \hfill
    \begin{subfigure}{0.48\linewidth}
        \centering
        \includegraphics[width=\linewidth]{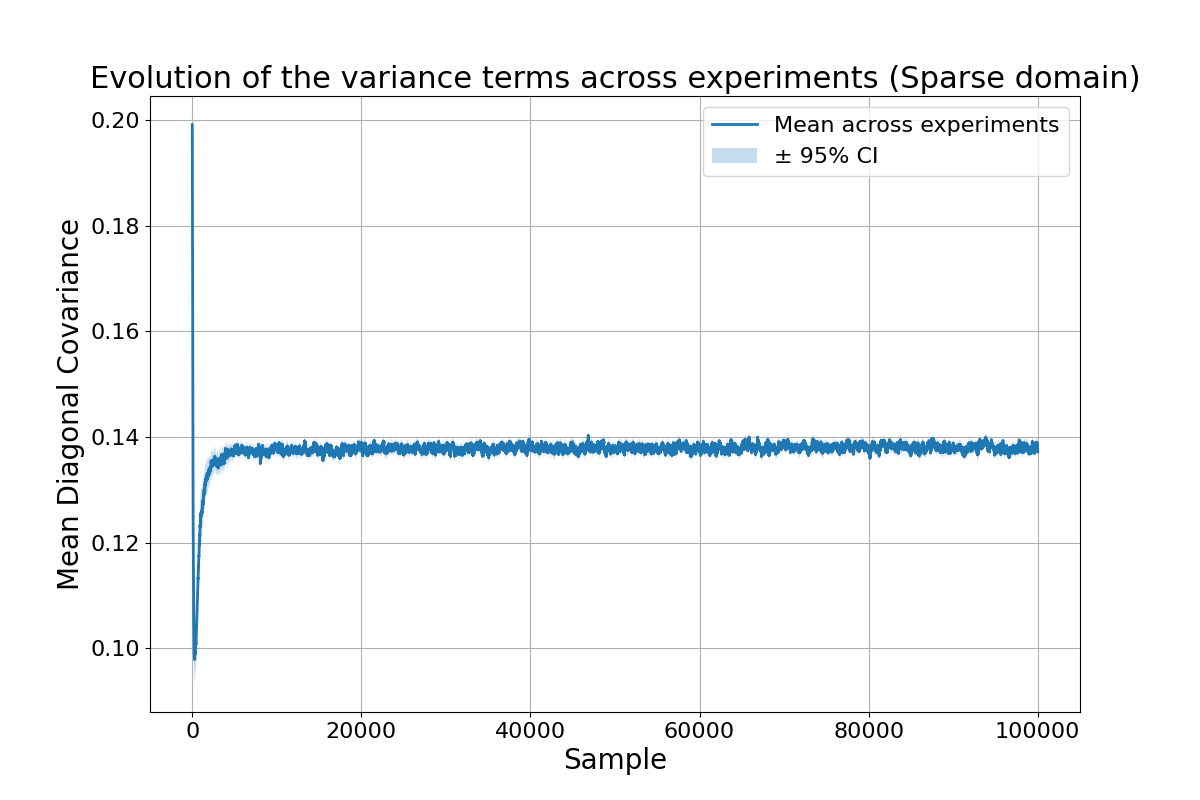}
        \caption{Sparse sources (\(\mathcal{B}_1\))}
        \label{fig:vicreg_sparse_exp}
    \end{subfigure}
    \caption{\textbf{Evolution of the variance  term in \textit{PEM} during training.}
    The evolution of the mean variance (averaged across the sources) is plotted across training samples for the \textit{PEM} model. Solid lines and shaded bands indicate the mean and the 95\% confidence interval over \(30\) seeds, respectively. For each seed, a single Gaussian \(10\times 5\) mixing matrix is sampled. These experiments demonstrate that the variance terms stabilise to a small range of values, motivating our simplification further (as well as providing a starting point for the value of \(\gamma_{lateral}\)).}
 \label{fig:vicreg}
\end{figure}

\subsection{Variance-covariance regularisation in self-supervised learning}
An influential approach to self-supervised learning is to learn embeddings or latent representations of inputs (e.g. images) which are invariant to various transformations (e.g. crops). However, self-supervised objectives are prone to learning constant or degenerate embeddings (e.g. not using all the available dimensions), a phenomenon known as ``representation collapse'' \citep{grill2020bootstrap, zbontar2021barlow, lecun2022path}. Two main methods have been developed to prevent or mitigate against such collapse: (i) contrastive methods \citep{DBLP:journals/corr/abs-2002-05709}, which involve training with both positive and negative examples; and (ii) non-contrastive or regularized approaches, which directly modify the loss function and which are most closely related to the \textit{Predictive Entropy Maximization} model.

For example, the objective of ``Barlow Twins'' \citep{zbontar2021barlow} pushes the cross-correlation matrix between the embeddings of two identical networks towards the identity. This promotes both invariance to distortions (via the diagonal terms) and decorrelation (via the off-diagonal terms). The method was named after the neuroscientist H. Barlow for introducing the idea of redundancy-reduction, hypothesizing that the visual system minimizes the statistical dependency between neural firing rates \citep{Barlow1961} (which relates to our experiment on sparse receptive fields; see Figure~\ref{fig:receptive_field}). 

Building on the Barlow Twins, ``VICReg'' \cite{bardes2022vicreg} introduced a variance term that encourages spread in each embedding dimension. The overall loss function therefore contains a variance, a covariance and an invariance term. This is similar to the objective of the \textit{Predictive Entropy Maximization} model, which also features variance-maximization and covariance-minimization terms under structural constraints (see Eq.~\ref{eq:full_cost_J}). Although the models used for self-supervised learning are not biologically plausible and address a different problem than source separation, they provide a useful point of comparison given the similarity of the objectives (and perhaps a common goal of encouraging sufficient spread in the domain). 

A form of variance-covariance regularization has also been used in recent large-scale models such as Video JEPA \citep{drozdov2024videorepresentationlearningjointembedding}. Overall, extending \textit{Predictive Entropy Maximization} to self-supervised or supervised learning tasks could be an interesting future direction.

\subsection{Computational complexity of the proposed method}
\label{appendix:Compute_Complexity}
In this section, we characterize the computational cost of \textit{Predictive Entropy Maximization}. Let \(n\) denote the number of sources, \(m\) the number of mixtures, and \(\tau_{\max}\) the maximum number of fast neural-dynamics iterations used to compute the output for a single input sample.

The dominant cost comes from the fast inference step. For one neural-dynamics iteration, the feedforward prediction \(\mW(t)\vx(t)\) requires \(O(nm)\) operations, while the covariance-driven lateral term in Equation \ref{eq:output_inference_direction} requires \(O(n^2)\) operations. The domain-dependent output nonlinearity adds only \(O(n)\) work, and for \(\gB_1\), \(\gB_{1,+}\), and \(\Delta\), the additional update of the shared inhibitory variable \(\lambda_L\) also remains \(O(n)\). Therefore, one fast iteration costs
\[
O(nm+n^2),
\]
and the full inference stage costs
\[
O\!\bigl(\tau_{\max}(nm+n^2)\bigr).
\]

After the output has converged, the slow updates are cheaper. The feedforward update
\[
\mW(t)=\mW(t-1)+\alpha_W(t)\ve(t)\vx(t)^\top
\]
costs \(O(nm)\), the mean update costs \(O(n)\), the diagonal variance update costs \(O(n)\), and the off-diagonal covariance update costs \(O(n^2)\). Hence the slow plasticity and statistics updates contribute
\[
O(nm+n^2)
\]
per sample.

Combining both stages, the overall worst-case cost per sample is
\[
O\!\bigl((\tau_{\max}+1)(nm+n^2)\bigr).
\]
In the determined or overdetermined regime \(m\ge n\), this simplifies to
\[
O(\tau_{\max}mn),
\]
since the recurrent inference stage dominates.

Thus, as in other biologically plausible recurrent BSS algorithms, the computational bottleneck is the iterative output inference rather than the synaptic updates. This is the same asymptotic order reported for the antisparse CorInfoMax network and for the determinant-maximization WSM networks, both of which are likewise dominated by recurrent output computation in digital simulations.

\paragraph{Compute resources.}
The implementation is CPU-compatible and does not require GPU acceleration or high-performance computing; the reported experiments can be run on a standard personal computer with sufficient memory. The runs reported here were performed with Python~3.12.12 on a Linux x86\_64 machine (Linux~6.8.0-107-generic, glibc~2.39) using a single-node, single-task CPU Slurm job with 10GB requested memory.
\stopcontents[appendix]

\newpage

\end{document}